%%%% ijcai20-multiauthor.tex

\typeout{IJCAI--PRICAI--20 Multiple authors example}

% These are the instructions for authors for IJCAI-20.

\documentclass{article}
\pdfpagewidth=8.5in
\pdfpageheight=11in
% The file ijcai20.sty is NOT the same than previous years'
\usepackage{ijcai20}

% Use the postscript times font!
\usepackage{times}

\usepackage{soul}
\usepackage{url}
\usepackage[small]{caption}
\usepackage{graphicx}
\usepackage{amsmath}
\usepackage{booktabs}
\urlstyle{same}

\usepackage{amsmath}

\usepackage{amssymb}
\usepackage{mathrsfs}
\usepackage{dsfont}
\usepackage{amsthm}
\usepackage[]{algorithm2e}

\setcounter{tocdepth}{3}
\usepackage{graphicx}

\newtheorem{theorem}{Theorem}
\newtheorem{lemma}[theorem]{Lemma}

\newtheorem{proposition}[theorem]{Proposition}

\newtheorem{example}[theorem]{Example}

\long\def\comment#1{}
%\long\def\comment#1{#1}

% the following package is optional:
%\usepackage{latexsym} 

\pdfinfo{
/Title (Model-theoretic Characterizations of Existential Rule Languages)
/Author (Heng Zhang, Yan Zhang, Guifei Jiang)
} %Leave this	

% Following comment is from ijcai97-submit.tex:
% The preparation of these files was supported by Schlumberger Palo Alto
% Research, AT\&T Bell Laboratories, and Morgan Kaufmann Publishers.
% Shirley Jowell, of Morgan Kaufmann Publishers, and Peter F.
% Patel-Schneider, of AT\&T Bell Laboratories collaborated on their
% preparation.

% These instructions can be modified and used in other conferences as long
% as credit to the authors and supporting agencies is retained, this notice
% is not changed, and further modification or reuse is not restricted.
% Neither Shirley Jowell nor Peter F. Patel-Schneider can be listed as
% contacts for providing assistance without their prior permission.

% To use for other conferences, change references to files and the
% conference appropriate and use other authors, contacts, publishers, and
% organizations.
% Also change the deadline and address for returning papers and the length and
% page charge instructions.
% Put where the files are available in the appropriate places.

\title{Model-theoretic Characterizations of Existential Rule Languages}

\author{Heng Zhang,$^1$ Yan Zhang,$^{2}$ Guifei Jiang\,$^3$\\
\affiliations
$^1$College of Intelligence and Computing, Tianjin University, Tianjin, China\\
$^2$School of Computing, Engineering and Mathematics, Western Sydney University, Penrith, Australia\\
$^3$College of Software, Nankai University, Tianjin, China\\
\emails
heng.zhang@tju.edu.cn, yan.zhang@westernsydney.edu.au, g.jiang@nankai.edu.cn
}

\begin{document}

\maketitle

\begin{abstract}
Existential rules, a.k.a. dependencies in databases, and Datalog+/- in knowledge representation and reasoning recently, are a family of important logical languages widely used in computer science and artificial intelligence. Towards a deep understanding of these languages in model theory, we establish model-theoretic characterizations for a number of existential rule languages such as (disjunctive) embedded dependencies, tuple-generating dependencies (TGDs), (frontier-)guarded TGDs and linear TGDs. All these characterizations hold for arbitrary structures, and most of them also work on the class of finite structures. As a natural application of these characterizations, complexity bounds for the rewritability of above languages are also identified.
\end{abstract}

\section{Introduction}

Existential rule languages, a family of languages that extend Datalog by allowing existential quantifiers in the rule head, had been initially introduced in databases in 1970s to specify the semantics of data stored in a database~\cite{AbiteboulHV1995}. Since then, existential rule languages such as tuple-generating dependencies (TGDs), embedded dependencies and equality-generating dependencies have been extensively studied. These language have been recently rediscovered as languages for data exchange~\cite{FaginKMP2005}, data integration~\cite{Lenzerini2002} and ontology-mediated query answering~\cite{CaliGLMP2010}. Towards tractable reasoning, many restricted classes of these languages have been proposed, including linear and guarded TGDs~\cite{CaliGL2012}, as well as frontier-guarded TGDs~\cite{BagetLMS11}. As a family of important logical languages, their model theory has not been fully investigated yet. In this work we aim at characterizing existential rule languages in a model-theoretic approach.

Model-theoretic characterizations, which assert that a sentence in a large language is definable in a small language if, and only if, it enjoys some semantic properties, play a key role in the study of a logical language~\cite{CK1990}. We are interested in semantic properties that can be effectively observed. Model-theoretic characterizations based on such properties thus provide a natural tool for identifying the expressivibility of a language, i.e., determining which knowledge or ontology can be expressed in the language, and which cannot.

Besides the major position in model theory and the key role on understanding expressiveness, model-theoretic characterizations also have many potential implications. For example, model-theoretic characterizations provide a natural way for developing algorithms to identify the language rewritability, that is, to decide whether a given theory or ontology can be rewritten in a simple language. Such algorithms may play important roles in implementing systems for ontology-mediated query answering. Moreover, we are also interested in understanding why the guarded-based restrictions make existential rule languages tractable. We hope our characterizations give an alternative explanation on this question, which may provide a new perspective on exploiting new tractable languages.

Model-theoretic characterizations over finite structures for full TGDs (TGDs without existential quantifier) and equality-generating dependencies had been studied by \citeauthor{MakowskyV1986} in \citeyear{MakowskyV1986}. However, their characterizations involve infinite sets of dependencies. To remedy the finite expressibility in characterizations, some conditions had been proposed, including \citeauthor{Hull1984}'s {\em finite-rank} notion~\shortcite{Hull1984} and \citeauthor{MakowskyV1986}'s {\em locality}~\shortcite{MakowskyV1986}. Yet both of them are not easy to identify. Over finite structures, even for full TGDs, a natural model-theoretic characterization remains open~\cite{tenCateK2014}. For arbitrary structures, except for some simple classes of dependencies such as full TGDs and negative constraints, to the best of our knowledge, no model-theoretic characterization is known for expressive existential rule languages such as TGDs and its guarded-based restrictions.

In this work, we characterize existential rule languages by some natural semantic properties. The addressed languages consist of (disjunctive) embedded dependencies, TGDs, and several important restricted classes of TGDs such as frontier-guarded TGDs, guarded TGDs and linear TGDs, three of the main languages for ontology-mediated query answering~\cite{CaliGLMP2010}. All the semantic properties involved in our characterizations are algebraic relationships among structures, incuding variants of homomorphisms and unions, as well as direct products. Interestingly, except the characterizations w.r.t. first-order logic, all other characterizations hold for both finite structures and arbitrary structures. As a natural application, we also use the obtained characterizations to identify the complexity of rewritability among the above languages.  

\section{Preliminaries}

\subsection{Notations and Conventions} All {\em signatures} involved in this paper are {\em relational}, consisting of a set of {\em constant symbols} and a set of {\em relation symbols}, each of which is armed with a natural number, its {\em arity}. Each {\em term} is either a variable or a constant symbol. 
Given a signature $\tau$, {\em atomic formulas}, {\em (first-order) formulas} and {\em sentences} over $\tau$ are defined as usual. An atomic formula is {\em relational} if it is of the form $R(\vec{t})$ where $R$ is a relation symbol. Given a formula $\varphi$, we write $\varphi(\vec{x})$ if every {\em free variable} of $\varphi$ appears in $\vec{x}$.

Fix $\tau$ as a signature. Every {\em structure $\mathcal{A}$ over $\tau$} (or simply {\em $\tau$-structure}) consists of a nonempty set $A$ called its {\em domain},
a relation $R^{\mathcal{A}}\subseteq A^n$ for each $n$-ary relation symbol $R\in\tau$, and a constant $c^\mathcal{A}\in A$ for each constant symbol $c\in\tau$. A structure is
{\em finite} if its domain is finite, and {\em infinite} otherwise.

Let $\mathcal{A}$ be a $\tau$-structure, and $X$ a subset of $A$ such that $c^\mathcal{A}\in X$ for all constant symbols $c\in\tau$. The {\em substructure} of $\mathcal{A}$ {\em induced} by a set $X\subseteq A$, denoted $\mathcal{A}|_{X}$, is a $\tau$-structure with domain $X$ which interprets each relation symbol $R\in\tau$ as $R^{\mathcal{A}}|_{X}$, and interprets each constant symbol $c\in\tau$ as $c^\mathcal{A}$. A structure $\mathcal{B}$ is called a {\em substructure} of $\mathcal{A}$, or equivalently, $\mathcal{A}$ is called an {\em extension} of $\mathcal{B}$,  if $\mathcal{B}=\mathcal{A}|_X$ for some set $X\subseteq A$. Let $\nu$ be a signature such that $\tau\subseteq\nu$. A $\nu$-structure $\mathcal{B}$ is called a {\em $\nu$-expansion} of $\mathcal{A}$ if they have the same domain and share the same interpretation on every symbol in $\tau$. Suppose $a_1,\dots,a_k\in A$, by $(\mathcal{A},a_1,\dots,a_k)$ we denote the expansion of $\mathcal{A}$ that assigns each constant $a_i$ to a fresh constant symbol.

Let $\mathcal{A}$ and $\mathcal{B}$ be $\tau$-structures. If $\mathcal{A}$ and $\mathcal{B}$ have the same interpretations on constant symbols then let $\mathcal{A}\cup\mathcal{B}$ denote the {\em union} of $\mathcal{A}$ and $\mathcal{B}$, which is a $\tau$-structure with domain $A\cup B$, interpreting $R$ as $R^{\mathcal{A}}\cup R^{\mathcal{B}}$ for each relation symbol $R\in\tau$, and interpreting $c$ as $c^\mathcal{A}$ for each constant symbol $c\in\tau$. We say $\mathcal{A}$ is {\em homomorphic to} $\mathcal{B}$, written $\mathcal{A}\rightarrow\mathcal{B}$, if there is a function $h:A\rightarrow B$ such that (i) $h(c^\mathcal{A})=c^\mathcal{B}$ for all constant symbols $c\in \tau$, and (ii) $h(R^{\mathcal{A}})\subseteq R^{\mathcal{B}}$ for all relation symbols $R\in\tau$. 
We write $\mathcal{A}\rightleftarrows\mathcal{B}$ if both $\mathcal{A}\rightarrow\mathcal{B}$ and $\mathcal{B}\rightarrow\mathcal{A}$ hold.

Let $\mathcal{A}$ be a structure. An {\em assignment} in $\mathcal{A}$ is a function from a set of variables to $A$. Given a tuple $\vec{a}$ of constants in $A$ and a tuple $\vec{x}$ of variables of the same length, we let $\vec{a}/\vec{x}$ denote the assignment that maps the $i$-component of $\vec{x}$ to the $i$-component of $\vec{a}$ for $1\le i\le |\vec{x}|$, where $|\vec{x}|$ denotes the length of $\vec{x}$. Let $s$ be an assignment in $\mathcal{A}$ and $\varphi(\vec{x})$ be a first-order formula. By $\mathcal{A}\models\varphi[s]$ we mean that $\varphi$ is {\em satisfied} by $s$ in $\mathcal{A}$. In particular,  if $\varphi$ is a sentence, we simply written $\mathcal{A}\models\varphi$, and say $\varphi$ is {\em satisfied} in $\mathcal{A}$, or equivalently, $\mathcal{A}$ is a {\em model} of $\varphi$. If the assignment $\vec{a}/\vec{x}$ is clear from the context, we simply use $\varphi[\vec{a}]$ to denote $\varphi[\vec{a}/\vec{x}]$. Let $\Sigma$ be a set of sentences, $\mathcal{A}$ is a {\em model} of $\Sigma$ if $\mathcal{A}\models\varphi$ for all $\varphi\in\Sigma$. Given a sentence $\psi$, we write $\Sigma\vDash\psi$ (resp., $\Sigma\vDash_{\mathrm{fin}}\psi$) if every model (resp., finite model) of $\Sigma$ is also a model of $\psi$.

\subsection{Existential Rule Languages}

A {\em generalized dependency} (GD) is a sentence $\sigma$ of the form 
\begin{equation}
\forall\vec{x}(\phi(\vec{x})\rightarrow\exists\vec{y}(\psi_1(\vec{x},\vec{y})\vee\cdots\vee\psi_n(\vec{x},\vec{y}))
\end{equation}
where $n\ge 0$, and $\phi,\psi_1,\dots,\psi_n$ are conjunctions of atomic formulas. The left-hand (resp., right-hand) side of the implication is called the {\em body} (resp., {\em head}). Variables among $\vec{x}$ and $\vec{y}$ are called {\em universal}, and {\em existential}, respectively. A {\em frontier variables} is a universal variable that occurs in the head. In particular, $\sigma$ is called {\em nondisjunctive} if $n\le 1$, and called a {\em negative constraint} if $n=0$. In the latter case, we write $\sigma$ as 
\begin{equation}
\forall\vec{x}(\phi(\vec{x})\rightarrow\bot).
\end{equation}
For simplicity, we will omit the universal quantifiers and the brackets appearing outside the atoms if no confusion occurs.

Furthermore, a GD $\sigma$ is called {\em safe} if every frontier variable of $\sigma$ has at least one occurrence in some relational atomic formula in the body of $\sigma$. Every {\em disjunctive embedded dependency} (DED) is a safe generalized dependency which is not a negative constraint. 
Every {\em embedded dependency} (ED) is a nondisjunctive DED.
In addition, an ED is called an {\em tuple-generating dependency} (TGD) if it is equality-free.

We will also address several important classes of restricted TGDs. A TGD $\sigma$ is called {\em frontier-guarded} (resp., {\em guarded}) if there is a relational atomic formula $\alpha$ in its body that contains all the frontier (resp., universal) variables of $\sigma$. In either case, $\alpha$ is called the {\em guard} of $\sigma$. Moreover, $\sigma$ is {\em linear} if the body of $\sigma$ consists of exactly one conjunct. Note that all linear TGDs are guarded and all guarded TGDs are frontier-guarded.

\section{Model-theoretic Characterizations}

In this section, we address the model-theoretic characterizations of existential rule languages mentioned above.

\subsection{Generalized Dependencies} We need some notions firstly. Let $\mathcal{A}$ and $\mathcal{B}$ be structures over a signature $\tau$. By a {\em tuple} on $\mathcal{A}$ we mean a finite sequence of constants in $A$. We say that $\mathcal{A}$ is globally-homomorphic to $\mathcal{B}$, written $\mathcal{A}\Rightarrow\mathcal{B}$, if there is a function $\pi$ that maps each tuple $\vec{a}$ on $\mathcal{A}$ to a tuple $\pi(\vec{a})$ on $\mathcal{B}$ such that $(\mathcal{A},\vec{a})\rightleftarrows(\mathcal{B},\pi(\vec{a}))$; in this case, we call $\pi$ a {\em global homomorphism} from $\mathcal{A}$ to $\mathcal{B}$, and call $\mathcal{A}$ a {\em globally-homomorphic preimage} of $\mathcal{B}$. 

Given a first-order sentence $\varphi$ over $\tau$, we say that $\varphi$ is {\em preserved under globally-homomorphic preimages [in the finite]} if for all [finite] $\tau$-structures $\mathcal{A}$ and $\mathcal{B}$, if $\mathcal{A}$ is globally homomorphic to $\mathcal{B}$ and $\mathcal{B}$ is a model $\varphi$, then $\mathcal{A}$ is also a model of $\varphi$. Notice that every sentence preserved under globally-homomorphic preimages is also preserved under globally-homomorphic preimages in the finite, but not vice versa. 

By a routine check, it is easy to prove the following:

\begin{proposition}\label{prop:gd_ghom_prsv}
Every GD is preserved under globally-homomorphic preimages [in the finite].
\end{proposition}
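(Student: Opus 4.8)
The plan is to argue directly from the definitions, using the two directions of the global homomorphism to transport the body \emph{forward} and the head \emph{backward}. Fix a GD $\sigma=\forall\vec{x}(\phi(\vec{x})\rightarrow\exists\vec{y}(\psi_1\vee\cdots\vee\psi_n))$, suppose $\mathcal{A}\Rightarrow\mathcal{B}$ witnessed by a global homomorphism $\pi$, and assume $\mathcal{B}\models\sigma$. To show $\mathcal{A}\models\sigma$, I take an arbitrary tuple $\vec{a}$ on $\mathcal{A}$ of the length of $\vec{x}$ with $\mathcal{A}\models\phi[\vec{a}]$ and aim to produce a head witness in $\mathcal{A}$. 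The one standing fact I exploit is $(\mathcal{A},\vec{a})\rightleftarrows(\mathcal{B},\pi(\vec{a}))$, which by definition supplies a homomorphism $g:\mathcal{A}\to\mathcal{B}$ with $g(\vec{a})=\pi(\vec{a})$ and a homomorphism $f:\mathcal{B}\to\mathcal{A}$ with $f(\pi(\vec{a}))=\vec{a}$ (componentwise), each respecting every constant symbol of $\tau$.

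First I would push the body across $g$. Since $\phi$ is a conjunction of atomic formulas, I check each conjunct is preserved: a relational atom $R(\vec{t})$ transfers because $g$ maps $R^{\mathcal{A}}$ into $R^{\mathcal{B}}$ and commutes with term evaluation (as $g$ fixes constants and sends $\vec{a}$ to $\pi(\vec{a})$), while an equality atom $t_1=t_2$ transfers simply because $g$ is a function. Hence $\mathcal{B}\models\phi[\pi(\vec{a})]$. Applying $\mathcal{B}\models\sigma$ then yields a disjunct index $k$ and a tuple $\vec{b}$ on $\mathcal{B}$ with $\mathcal{B}\models\psi_k[\pi(\vec{a}),\vec{b}]$.

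Next I pull this witness back along $f$. Setting $\vec{b}':=f(\vec{b})$, the same two observations, namely that $f$ maps each $R^{\mathcal{B}}$ into $R^{\mathcal{A}}$ and is a function, together with $f(\pi(\vec{a}))=\vec{a}$ and $f$ fixing constants, show that every conjunct of $\psi_k$ under the assignment $\pi(\vec{a})/\vec{x},\vec{b}/\vec{y}$ is carried to the corresponding conjunct under $\vec{a}/\vec{x},\vec{b}'/\vec{y}$. Thus $\mathcal{A}\models\psi_k[\vec{a},\vec{b}']$, so $\mathcal{A}\models\exists\vec{y}(\psi_1\vee\cdots\vee\psi_n)[\vec{a}]$, completing the implication. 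The degenerate negative-constraint case $n=0$ is subsumed: there the step producing $\psi_k$ instead produces $\mathcal{B}\models\bot$, contradicting $\mathcal{B}\models\phi[\pi(\vec{a})]$, so no such $\vec{a}$ exists. Finally, since $f$ and $g$ come straight from the definition of $\rightleftarrows$ with no appeal to infiniteness, restricting all structures to finite ones gives the bracketed in-the-finite statement verbatim.

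There is no serious obstacle here, as this is the promised routine check, but the one point deserving genuine care is the constant bookkeeping built into $\rightleftarrows$. The argument works precisely because the definition of global homomorphism marks $\vec{a}$ and $\pi(\vec{a})$ as fresh constants and demands homomorphisms in \emph{both} directions: the forward map $g$ transports the universally quantified body while pinning $\vec{x}$ to $\pi(\vec{a})$, and the backward map $f$ transports the existentially quantified head while pinning $\vec{x}$ back to $\vec{a}$. Verifying that term evaluation commutes with $f$ and $g$ on mixed terms (variables from $\vec{x}$, variables from $\vec{y}$, and constant symbols) is the only step requiring attentiveness, and it is exactly where the equality atoms, which are legal in a GD though not in a TGD, must be handled separately from the relational atoms.
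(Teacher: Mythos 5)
Your proof is correct and is precisely the ``routine check'' the paper alludes to without spelling out: push the body forward along the homomorphism $(\mathcal{A},\vec{a})\rightarrow(\mathcal{B},\pi(\vec{a}))$, apply $\mathcal{B}\models\sigma$, and pull the head witness back along $(\mathcal{B},\pi(\vec{a}))\rightarrow(\mathcal{A},\vec{a})$, with equality atoms handled by functionality and the $n=0$ case handled vacuously. Your attention to the constant bookkeeping in $\rightleftarrows$ and to the fact that nothing in the argument depends on finiteness matches exactly what the paper's claim requires.
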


%\begin{proof}
%Let $\sigma$ be a GD of the form 
%$$\phi(\vec{x}) \rightarrow\psi_1(\vec{x})\vee\cdots\vee\psi_k(\vec{x})$$
%such that $\mathcal{A}$ is a model of $\sigma$. Assume $\mathcal{A}$ is globally-homomorphic to $\mathcal{B}$. Now our task is to prove that $\mathcal{B}$ is also a model of $\sigma$. To do it, we assume $\vec{b}$ is a tuple on $\mathcal{B}$ such that $\mathcal{B}\models
%  \phi[\vec{b}/\vec{x}]$. Then there must be a tuple $\vec{a}$ on $\mathcal{A}$ such that $(\mathcal{A},\vec{a})\rightleftarrows(\mathcal{B},\vec{b})$. Thus $\mathcal{A} \models
%  \phi[\vec{a}/\vec{x}]$ and there is some $i$ such that $\mathcal{A} \models
%  \psi_i[\vec{a}/\vec{x}]$. Again by $(\mathcal{A},\vec{a})\rightleftarrows(\mathcal{B},\vec{b})$, we get $\mathcal{B} \models
%  \psi_i[\vec{b}/\vec{x}]$, and thus $\mathcal{B}$ satisfies the mentioned GD.
%\end{proof}

To establish the desired characterization, we hope that the preservation under globally-homomorphic preimages is not too powerful. The following is a very simple example which is slightly beyond the class of GDs but already not preserved under globally-homomorphic preimages in the finite.

\begin{example}\label{exm:ghom_prv_cntexp}
Let $\psi$ denote $\exists x\neg{Q}(x)$ and $\tau=\{Q\}$. Let $\mathcal{A}$ be a $\tau$-structure with $A=\{a,b\}$ and $Q^\mathcal{A}=\{a\}$. Let $\mathcal{B}$ be the substructure of $\mathcal{A}$ induced by $\{a\}$. Clearly, $\mathcal{B}$ is globally homomorphic to $\mathcal{A}$. It is also easy to see that $\mathcal{A}$ is a model of $\psi$, but $\mathcal{B}$ is not, which implies that $\psi$ is not preserved under globally-homomorphic preimages even in the finite.
\end{example}

The following theorem establishes the desired characterizations for the class of finite sets of GDs.

\begin{theorem}\label{thm:ghom_prsv}
A first-order sentence is equivalent to a finite set of GDs iff it is preserved under globally-homomorphic preimages.
\end{theorem}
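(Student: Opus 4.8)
The forward direction is immediate: if $\varphi$ is equivalent to a finite set $\Sigma$ of GDs, then $\varphi$ is equivalent to the single sentence $\bigwedge\Sigma$, and since each member of $\Sigma$ is preserved under globally-homomorphic preimages by Proposition~\ref{prop:gd_ghom_prsv} and this class of sentences is plainly closed under conjunction, $\varphi$ inherits the property. For the converse, assume $\varphi$ is preserved under globally-homomorphic preimages and let $\Gamma$ be the set of all GDs $\sigma$ with $\varphi\vDash\sigma$. Since $\varphi\vDash\Gamma$ trivially, it suffices to prove $\Gamma\vDash\varphi$: once this holds, compactness yields a finite $\Gamma_0\subseteq\Gamma$ with $\Gamma_0\vDash\varphi$, and as $\varphi\vDash\Gamma_0$ too, $\varphi$ is equivalent to the finite set $\Gamma_0$ of GDs. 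So fix a model $\mathcal{A}$ of $\Gamma$; the plan is to build a model $\mathcal{B}$ of $\varphi$ with $\mathcal{A}\Rightarrow\mathcal{B}$, after which $\mathcal{A}\models\varphi$ follows from the preservation hypothesis.

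To build $\mathcal{B}$, expand $\tau$ by a fresh constant $c_a$ for every $a\in A$ and consider the theory $T$ consisting of $\varphi$, the positive atomic diagram of $\mathcal{A}$ (all facts $R(c_{a_1},\dots,c_{a_n})$ with $(a_1,\dots,a_n)\in R^{\mathcal{A}}$, plus the identifications $c_{c^{\mathcal{A}}}=c$ for each constant symbol $c\in\tau$), and all sentences $\neg\theta(c_{\vec a})$ in which $\theta(\vec x)$ is a conjunctive query (an existentially quantified conjunction of atoms) with $\mathcal{A}\not\models\theta[\vec a]$. Any model of $T$ restricts to a $\tau$-structure $\mathcal{B}\models\varphi$; the map $a\mapsto c_a^{\mathcal{B}}$ is a homomorphism $h\colon\mathcal{A}\to\mathcal{B}$, which supplies the ``forward'' half $(\mathcal{A},\vec a)\rightarrow(\mathcal{B},h(\vec a))$ of each required $\rightleftarrows$, while the negated requirements force the positive-existential type of $h(\vec a)$ in $\mathcal{B}$ to be contained in that of $\vec a$ in $\mathcal{A}$.

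The heart of the argument is that $T$ is \emph{consistent}. If not, compactness gives an unsatisfiable finite subset, which means $\varphi\vDash\bigwedge_i\beta_i\rightarrow\bigvee_l\theta_l$, where the $\beta_i$ are finitely many diagram atoms and the $\theta_l$ are conjunctive queries with $\mathcal{A}\not\models\theta_l[\vec a^{(l)}]$. Reading the distinct fresh constants as universally quantified variables $\vec x$ and pulling the existential quantifiers of the disjuncts $\theta_l$ out of the disjunction into one block $\exists\vec y$, this implication is exactly a GD $\sigma$, so $\sigma\in\Gamma$. Yet interpreting each variable of $\vec x$ by the element $a$ that named it makes every body atom $\beta_i$ true in $\mathcal{A}$ (they are facts of $\mathcal{A}$) while every head disjunct $\theta_l[\vec a^{(l)}]$ is false in $\mathcal{A}$ by choice; hence $\mathcal{A}\not\models\sigma$, contradicting $\mathcal{A}\models\Gamma$. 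Therefore $T$ has a model and $\mathcal{B}$ exists. Note that the independent existential witnesses of the body conjuncts and the fact that $\mathcal{A}$ refutes \emph{every} head disjunct are precisely what make this extraction yield a legitimate, $\mathcal{A}$-violated GD even when several tuples $\vec a^{(l)}$ are involved.

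It remains to upgrade ``$\mathrm{petp}(h(\vec a)/\mathcal{B})\subseteq\mathrm{petp}(\vec a/\mathcal{A})$'' into an actual homomorphism $(\mathcal{B},h(\vec a))\rightarrow(\mathcal{A},\vec a)$ for every tuple $\vec a$, the ``backward'' half of $\rightleftarrows$ that completes $\mathcal{A}\Rightarrow\mathcal{B}$. I expect this to be the main obstacle: type containment yields such a homomorphism only when the target $\mathcal{A}$ is sufficiently saturated, whereas $|B|$ grows with $|A|$, so demanding that $\mathcal{A}$ itself be $|B|^+$-saturated is circular. I would resolve it by replacing $\mathcal{A}$ throughout with a suitably saturated elementary extension (legitimate, since $\varphi$ and $\Gamma$ are first-order and hence invariant under elementary equivalence, so it is enough to establish $\mathcal{A}\models\varphi$ for that extension) and by constructing $\mathcal{B}$ together with the backward homomorphisms through an interleaved chain of homomorphisms between elementary extensions, $\mathcal{A}=\mathcal{A}_0\rightarrow\mathcal{B}_0\rightarrow\mathcal{A}_1\rightarrow\cdots$, reading off the two-sided homomorphisms in the limit. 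Once $\mathcal{A}\Rightarrow\mathcal{B}$ is secured with $\mathcal{B}\models\varphi$, the preservation hypothesis delivers $\mathcal{A}\models\varphi$, which proves $\Gamma\vDash\varphi$ and completes the characterization.
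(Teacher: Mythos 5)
Your forward direction and your overall frame for the converse (take $\Gamma=\mathrm{con}(\varphi)$, prove $\Gamma\vDash\varphi$, finish by compactness) coincide with the paper's, but your construction of $\mathcal{B}$ is genuinely different and is correct as far as it goes: the paper never writes down a diagram (it takes $\mathcal{B}$ to be any model of $\varphi$ together with every negated GD that $\mathcal{A}$ satisfies, then makes \emph{both} structures $\omega$-saturated and builds the global homomorphism tuple-by-tuple in Lemma~\ref{globallem}), whereas your theory $T$ bakes the forward homomorphism $h$ and the pointwise containment of positive-existential types directly into the model, and your extraction of an $\mathcal{A}$-violated GD from an unsatisfiable finite subset of $T$ is sound (the fresh constants generalize because they do not occur in $\varphi$, and the disjunction of CQs becomes a single GD head after renaming existential blocks apart).

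The genuine gap is the step you flag yourself: the backward homomorphisms $(\mathcal{B},h(\vec a))\rightarrow(\mathcal{A},\vec a)$ are never obtained, and of the two escape routes you sketch, you lean toward the wrong one. The route that completes your proof is the saturation route, executed in the right order: replace $\mathcal{A}$ by an $\omega$-saturated $\mathcal{A}^{+}\equiv\mathcal{A}$ \emph{before} building $T$, and then apply Lemma~\ref{hmclass}(2) to each pair $(\mathcal{B},h(\vec a))\preceq_{\exists^{+}}(\mathcal{A}^{+},\vec a)$; that lemma asks only for $\omega$-saturation of the target, so at the level of rigor at which the paper itself operates there is no circularity at all --- the paper applies this very lemma to saturated structures of arbitrary cardinality inside Lemma~\ref{globallem}. (The cardinality scruple you raise is real in a fully pedantic treatment, but it afflicts the paper's proof identically and is handled by standard devices such as special models; it is not a defect specific to your construction.) By contrast, the interleaved elementary chain you propose instead does not go through as sketched: once you try to extend the homomorphisms along a chain, the consistency of each new compactness step requires transferring negated GDs with parameters (not merely negated CQs) across the previously built maps, because the unsatisfiable finite subsets now mix an elementary diagram with positive facts and negative requirements; and the direction of GD-transfer that one step establishes is the opposite of the direction the next step needs, so the invariant does not propagate. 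Escaping that alternation is exactly what the paper's Lemma~\ref{globallem} achieves by realizing types inside two fixed $\omega$-saturated structures rather than along a chain. In short: your diagram-based construction is a real simplification (only one structure ever needs saturating, and no induction on tuple length is needed), but as submitted the proof is incomplete, and completing it means committing to the saturation step you talked yourself out of.
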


To prove this theorem, we need some notions and lemmas. Let $\mathcal{A}$ and $\mathcal{B}$ be structures over a signature $\tau$. Given a class $\mathcal{C}$ of sentences over $\tau$, we write $\mathcal{A}\preceq_{\mathcal{C}}\mathcal{B}$ if for all sentences $\varphi\in\mathcal{C}$, $\mathcal{A}\models\varphi$ implies $\mathcal{B}\models\varphi$. For simplicity, we simply drop the subscript $\mathcal{C}$ if $\mathcal{C}$ is the class of all first-order sentences over $\tau$. We write $\mathcal{A}\equiv\mathcal{B}$ if both $\mathcal{A}\preceq\mathcal{B}$ and $\mathcal{B}\preceq\mathcal{A}$ hold. 

We write $\Gamma(x)$ to denote that every formula in $\Gamma(x)$ has exactly one free variable $x$. We say $\Gamma(x)$ is {\em realized} in a structure $\mathcal{A}$ if there is some $a\in A$ such that $\mathcal{A}\models\vartheta[a/x]$ for all formulas $\vartheta(x)\in\Gamma(x)$. By $Th(\mathcal{A})$ we denote the class of all first-order sentences satisfied in $\mathcal{A}$. We say $\mathcal{A}$ is {\em $\omega$-saturated} if for every finite set $X\subseteq A$, every set $\Gamma(x)$ of formulas consistent with $Th((\mathcal{A},a)_{a\in X})$ is realized in $(\mathcal{A}, a)_{a\in X}$. It is well-known~\cite{CK1990} that for every structure $\mathcal{A}$ there is a $\omega$-saturated structure $\mathcal{B}$ such that $\mathcal{A}\equiv\mathcal{B}$.

Every {\em existential-positive} formula is a first-order formula built on atomic formulas and negated atomic formulas by using connectives $\wedge,\vee$ and the quantifier $\exists$. Let $\exists^+$ denote the class of existential-positive sentence. It is easy to prove:

\begin{lemma}\label{hmclass}
Let $\mathcal{A}$ and $\mathcal{B}$ be structures over the same signature. Then both of the following are true:
 \begin{enumerate}
  \item If $\mathcal{A} \rightarrow \mathcal{B}$ then $\mathcal{A}
    \preceq_{\exists^+} \mathcal{B}$.
  \item If $\mathcal{A} \preceq_{\exists^+} \mathcal{B}$ and $\mathcal{B}$ is
    $\omega$-saturated then $\mathcal{A} \rightarrow \mathcal{B}$.
  \end{enumerate}
\end{lemma}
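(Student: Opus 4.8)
The plan is to prove the two implications separately, since (1) is the routine ``homomorphisms preserve positive information'' direction and (2) is where the real work lies.

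For (1) I would argue by induction on the structure of an existential-positive formula $\varphi(\vec{x})$ that, whenever $h\colon A\to B$ witnesses $\mathcal{A}\rightarrow\mathcal{B}$ and $s$ is an assignment in $\mathcal{A}$ with $\mathcal{A}\models\varphi[s]$, then $\mathcal{B}\models\varphi[h\circ s]$. The only case carrying content is the atomic base case: for a relational atom the homomorphism conditions $h(c^{\mathcal{A}})=c^{\mathcal{B}}$ and $h(R^{\mathcal{A}})\subseteq R^{\mathcal{B}}$ give exactly what is needed, and for an equality atom functionality of $h$ suffices. The cases for $\wedge$ and $\vee$ are immediate, and for $\exists y\,\varphi$ one maps a witness $a$ in $\mathcal{A}$ to $h(a)$ in $\mathcal{B}$. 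Specialising to sentences yields $\mathcal{A}\preceq_{\exists^+}\mathcal{B}$.

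For (2) the plan is a one-sided back-and-forth (``forth only'') construction of a homomorphism $h\colon A\to B$, driven by $\omega$-saturation. I would process the elements of $A$ one at a time, maintaining the invariant that the partial map $h$ built so far, on a finite set of arguments $a_1,\dots,a_n$, is an \emph{existential-positive morphism}: for every existential-positive $\varphi(\vec{x})$, $\mathcal{A}\models\varphi[a_1,\dots,a_n]$ implies $\mathcal{B}\models\varphi[h(a_1),\dots,h(a_n)]$. The empty map satisfies the invariant precisely because the hypothesis $\mathcal{A}\preceq_{\exists^+}\mathcal{B}$ says the existential-positive \emph{sentences} true in $\mathcal{A}$ hold in $\mathcal{B}$. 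Once all of $A$ has been exhausted, the resulting total map preserves atomic formulas (a special case of the invariant) and is therefore the desired homomorphism.

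The crux, and the step I expect to be the main obstacle, is the extension move: given the invariant for $a_1,\dots,a_n$ and a new element $a^{*}$, find $b\in B$ so that setting $h(a^{*})=b$ keeps the invariant. Writing $\vec{d}=(h(a_1),\dots,h(a_n))$, I would consider the $1$-type $\Gamma(x)=\{\varphi(\vec{d},x): \varphi \text{ existential-positive},\ \mathcal{A}\models\varphi[a_1,\dots,a_n,a^{*}]\}$ over the parameters $\vec{d}$. Any finite subset of $\Gamma$ has a conjunction $\varphi$ with $\mathcal{A}\models(\exists x\,\varphi)[a_1,\dots,a_n]$; since $\exists x\,\varphi$ is again existential-positive, the invariant yields $\mathcal{B}\models(\exists x\,\varphi)[\vec{d}]$, so that finite subset is satisfiable in $(\mathcal{B},\vec{d})$. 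Hence $\Gamma$ is consistent with $Th((\mathcal{B},\vec{d}))$, and as the parameter set $\vec{d}$ is finite, $\omega$-saturation of $\mathcal{B}$ delivers a realising $b$. The delicacy is exactly this interface between finite satisfiability and saturation: the argument requires the active parameter set to remain finite at the moment saturation is invoked, which is what an enumeration of $A$ of order type $\omega$ guarantees, so for larger domains one would either pass to this countable setting or assume correspondingly stronger saturation. A final routine point is that when $a^{*}$ recurs within a tuple one merges the repeated arguments into the single variable $x$ before reading off membership in $\Gamma$.
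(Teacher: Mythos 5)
Your proposal supplies exactly the proof the paper omits (the paper dismisses the lemma with ``It is easy to prove''), and it is the standard argument: induction on formula structure for part~1, and a ``forth'' construction realizing the existential-positive type $\Gamma(x)$ via $\omega$-saturation for part~2. Both parts are correct as you present them, with one reading choice worth making explicit: the paper's definition of existential-positive formulas literally admits \emph{negated} atomic formulas, under which part~1 would be false (a homomorphism need not preserve $\exists x\,\neg Q(x)$); your base case silently uses the standard, truly positive definition (atoms and equalities only), which is clearly what is intended, since the proof of Lemma~\ref{globallem} speaks of ``existential-positive formulas \emph{and their negations}.''

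The caveat you raise at the end of part~2 is not a loose end that more care could tighten: it is a genuine failure of the statement as printed. For uncountable $\mathcal{A}$, part~2 is false. Take $\mathcal{A}=(\mathbb{R},<)$ and $\mathcal{B}=(\mathbb{Q},<)$. Every existential-positive sentence true in $(\mathbb{R},<)$ holds in $(\mathbb{Q},<)$, since a witnessing tuple only matters through its order type, which $\mathbb{Q}$ realizes; and $(\mathbb{Q},<)$ is $\omega$-saturated (dense linear order without endpoints is $\omega$-categorical: $1$-types over finitely many parameters are determined by order position relative to the parameters, and all such positions are realized by density and unboundedness). Yet any homomorphism $h:\mathbb{R}\rightarrow\mathbb{Q}$ would have to be strictly order-preserving, hence injective, which is impossible by cardinality. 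So the hypotheses genuinely need strengthening: either $\mathcal{A}$ is countable (your proof as written, after a Löwenheim--Skolem reduction where it is applied), or $\mathcal{B}$ is $|A|^{+}$-saturated (your proof run as a transfinite induction, since every stage then uses a parameter set of size at most $|A|$). The same issue propagates to the paper's own use of the lemma: Lemma~\ref{globallem} and Theorem~\ref{thm:ghom_prsv} invoke it for $\omega$-saturated structures of unrestricted cardinality, and the standard repair is to pass to $\lambda$-saturated elementarily equivalent structures for sufficiently large $\lambda$ rather than merely $\omega$-saturated ones. In short: your proof is the right one, and the limitation you flagged is a defect of the lemma's statement, not of your argument.
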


Let $\mathsf{GD}$ denote the class of finite sets fo generalized dependencies. With Lemma~\ref{hmclass}, we are able to prove the following:

\begin{lemma}
\label{globallem}
  Let $\mathcal{A}$ and $\mathcal{B}$ be $\omega$-saturated structures over the same signature. If $\mathcal{B}\preceq_{\mathsf{GD}}\mathcal{A}$ then $\mathcal{A}\Rightarrow\mathcal{B}$.
\end{lemma}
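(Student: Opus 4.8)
The plan is to prove $\mathcal{A}\Rightarrow\mathcal{B}$ by exhibiting a global homomorphism $\pi$. Recall that $\mathcal{A}\Rightarrow\mathcal{B}$ requires, for every tuple $\vec{a}$ on $\mathcal{A}$, a tuple $\pi(\vec{a})$ on $\mathcal{B}$ with $(\mathcal{A},\vec{a})\rightleftarrows(\mathcal{B},\pi(\vec{a}))$. So I would \emph{fix an arbitrary tuple} $\vec{a}=(a_1,\dots,a_k)$ on $\mathcal{A}$ and aim to find a matching tuple $\vec{b}$ on $\mathcal{B}$ such that the two expansions are homomorphically equivalent in both directions. By Lemma~\ref{hmclass}, since both $\mathcal{A}$ and $\mathcal{B}$ are $\omega$-saturated, the expansions $(\mathcal{A},\vec{a})$ and $(\mathcal{B},\vec{b})$ are homomorphically equivalent precisely when they agree on all existential-positive sentences, i.e.\ $(\mathcal{A},\vec{a})\equiv_{\exists^+}(\mathcal{B},\vec{b})$. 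Thus the task reduces to finding, for each $\vec{a}$, a tuple $\vec{b}$ on $\mathcal{B}$ realizing the same existential-positive type as $\vec{a}$ does in $\mathcal{A}$ (in both directions).

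The key mechanism for producing such a $\vec{b}$ is the hypothesis $\mathcal{B}\preceq_{\mathsf{GD}}\mathcal{A}$ together with $\omega$-saturation of $\mathcal{B}$. First I would translate the desired existential-positive facts about $\vec{a}$ into GDs that $\mathcal{B}$ must satisfy. The crucial observation is that a statement of the form ``every tuple $\vec{b}$ realizing the positive diagram of $\vec{a}$ also realizes some further existential-positive property'' can be packaged as a generalized dependency: the body is the (finite) conjunction describing the relational atoms that hold among the components of $\vec{a}$, and the head is a conjunction of existential-positive formulas. Since $\mathcal{A}\models$ such a GD trivially (it describes $\mathcal{A}$'s own behavior on $\vec{a}$), and $\mathcal{B}\preceq_{\mathsf{GD}}\mathcal{A}$ forces $\mathcal{B}$ to satisfy every GD that $\mathcal{B}$ satisfies only if $\mathcal{A}$ does --- here I must be careful about the direction of $\preceq_{\mathsf{GD}}$: it says $\mathcal{B}\models\sigma$ implies $\mathcal{A}\models\sigma$, so I would actually argue contrapositively, showing that any GD \emph{failing} in $\mathcal{A}$ already fails in $\mathcal{B}$, which lets me transfer the relevant existential-positive content from $\mathcal{A}$ down to $\mathcal{B}$.

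Concretely, I would assemble the set $\Gamma(\vec{x})$ of all existential-positive formulas $\vartheta(\vec{x})$ with $\mathcal{A}\models\vartheta[\vec{a}]$, and argue that $\Gamma(\vec{x})$ is finitely satisfiable in $\mathcal{B}$. For any finite subset $\vartheta_1,\dots,\vartheta_m\in\Gamma$, the sentence $\exists\vec{x}\,\big(\bigwedge_i\vartheta_i(\vec{x})\big)$ is itself existential-positive and true in $\mathcal{A}$; I then need it to hold in $\mathcal{B}$, which I obtain by exhibiting the requisite GD and invoking $\mathcal{B}\preceq_{\mathsf{GD}}\mathcal{A}$ (in the contrapositive form above). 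By $\omega$-saturation of $\mathcal{B}$, the finitely satisfiable type $\Gamma(\vec{x})$ is realized by some tuple $\vec{b}$, giving $(\mathcal{A},\vec{a})\preceq_{\exists^+}(\mathcal{B},\vec{b})$ and hence, via Lemma~\ref{hmclass}(2) and $\omega$-saturation of $\mathcal{A}$, the homomorphism $(\mathcal{B},\vec{b})\rightarrow(\mathcal{A},\vec{a})$. A symmetric argument yields the reverse homomorphism, establishing $(\mathcal{A},\vec{a})\rightleftarrows(\mathcal{B},\pi(\vec{a}))$. Setting $\pi(\vec{a})=\vec{b}$ for each $\vec{a}$ then defines the global homomorphism.

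The main obstacle I anticipate is the bookkeeping around the \emph{asymmetry} of $\preceq_{\mathsf{GD}}$ versus the \emph{symmetry} of $\rightleftarrows$. A global homomorphism demands two-way homomorphic equivalence between the expansions, yet the hypothesis $\mathcal{B}\preceq_{\mathsf{GD}}\mathcal{A}$ is one-directional and concerns GDs rather than bare existential-positive sentences. Reconciling these --- i.e.\ correctly encoding the two-directional existential-positive transfer as GD-preservation and verifying that the GDs I construct are genuinely of the permitted syntactic shape (universally quantified body implying an existentially quantified positive head) --- is the delicate part. A secondary subtlety is ensuring that adding the tuple $\vec{a}$ as fresh constants does not break $\omega$-saturation; this is handled because $\omega$-saturation is preserved under naming finitely many elements, which the definition in the excerpt builds in directly.
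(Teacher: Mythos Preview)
Your outline has a genuine gap at the ``symmetric argument'' step. Realizing only the \emph{positive} $\exists^+$-type of $\vec{a}$ in $\mathcal{B}$ gives you a tuple $\vec{b}$ with $(\mathcal{A},\vec{a})\preceq_{\exists^+}(\mathcal{B},\vec{b})$, and hence (by Lemma~\ref{hmclass}(2), using $\omega$-saturation of $\mathcal{B}$, not $\mathcal{A}$ --- you have the direction reversed) a homomorphism $(\mathcal{A},\vec{a})\rightarrow(\mathcal{B},\vec{b})$. But there is no symmetric argument for the other direction: the hypothesis $\mathcal{B}\preceq_{\mathsf{GD}}\mathcal{A}$ is one-sided, and you cannot swap the roles of $\mathcal{A}$ and $\mathcal{B}$. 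Nothing prevents the $\vec{b}$ you found from satisfying \emph{more} $\exists^+$-formulas than $\vec{a}$ does, so $(\mathcal{B},\vec{b})\preceq_{\exists^+}(\mathcal{A},\vec{a})$ need not hold.

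The fix --- and this is what the paper does --- is to put into $\Gamma$ not only the $\exists^+$-formulas true of $\vec{a}$ but also the \emph{negations} of $\exists^+$-formulas false of $\vec{a}$. Then a single realization of $\Gamma$ gives both $\preceq_{\exists^+}$ directions at once. The finite-satisfiability step still works because for a finite $\Gamma_0$ the sentence $\neg\exists\vec{x}\bigwedge\Gamma_0$ is (after pushing the $\exists^+$-conjuncts into the body and the negated ones into the head) equivalent to a finite set of GDs, so the contrapositive of $\mathcal{B}\preceq_{\mathsf{GD}}\mathcal{A}$ transfers its failure from $\mathcal{A}$ to $\mathcal{B}$. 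The paper carries this out by induction on $|\vec{a}|$, extending one coordinate at a time and maintaining the stronger invariant $(\mathcal{B},\vec{b}_0)\preceq_{\mathsf{GD}}(\mathcal{A},\vec{a}_0)$; this also sidesteps the need to invoke $n$-type realization in $\omega$-saturated structures, which your all-at-once approach would require but does not justify.
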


\begin{proof}
 Assume $\mathcal{B}\preceq_{\mathsf{GD}}\mathcal{A}$. We need to prove $\mathcal{A}\Rightarrow\mathcal{B}$. By Lemma~\ref{hmclass}, it suffices to show that for each tuple $\vec{a}$ on $\mathcal{A}$ there is a tuple $\pi(\vec{a})$ such that $(\mathcal{B},\pi(\vec{a}))\preceq_{\mathsf{GD}}(\mathcal{A},\vec{a})$. Note that, by Proposition 5.1.1 in~\cite{CK1990}, $(\mathcal{B},\pi(\vec{a}))$ and $(\mathcal{A},\vec{a})$ are $\omega$-saturated; so Lemma~\ref{hmclass} is applicable.

The desired statement can be done by an induction on the length of $\vec{a}$. It is trivial for the case where $|\vec{a}|=0$. Assume as induction hypothesis that the desired statement holds for $|\vec{a}|=k\ge 0$, we need to prove that it also holds for the case where $\vec{a}=k+1$. Suppose $\vec{a}=(\vec{a}_0,a)$. By inductive hypothesis, there is a tuple $\vec{b}_0$ such that \vspace{-.1cm}
 \begin{equation}\label{eqn:inductive_assuption}
 (\mathcal{B},\vec{b}_0)\preceq_{\mathsf{GD}}(\mathcal{A},\vec{a}_0). \vspace{-.1cm}
 \end{equation}
 Let $\Gamma(x)$ be the class of existential-positive formulas and their negations such that $(\mathcal{A},\vec{a}_0)\models \varphi[a/x]$ for all $\varphi(x)\in\Gamma(x)$. To prove the existence of a constant $b\in B$ such that \vspace{-.1cm}
\begin{equation}
(\mathcal{B},\vec{b}_0,b)\preceq_{\mathsf{GD}}(\mathcal{A},\vec{a}_0,a),\vspace{-.1cm}
\end{equation}
by the $\omega$-saturatedness of $\mathcal{B}$, it suffices to show that every finite subset of $\Gamma(x)$ is realized in $(\mathcal{B},\vec{b}_0)$. Let $\Gamma_0(x)$ be any finite subset of $\Gamma(x)$. Let $\varphi(x)$ denote the conjunction of all formulas in $\Gamma_0(x)$, and let $\psi=\neg\exists x\varphi(x)$. Clearly, $\psi$ is equivalent to a finite set of GDs and $(\mathcal{A},\vec{a}_0)\not\models\psi$. By the inductive assumption~(\ref{eqn:inductive_assuption}), we know $(\mathcal{B},\vec{b}_0)\not\models\psi$, or equivalently, there exists a constant $b'\in B$ such that $(\mathcal{B},\vec{b}_0)\models\varphi[b'/x]$. Consequently, $\Gamma_0(x)$ is realized in $(\mathcal{B},\vec{b}_0)$, which is as desired.
\end{proof}

Now we are able to prove the desired theorem.

\begin{proof}[Proof of Theorem~\ref{thm:ghom_prsv}]
(Only-if) By Proposition~\ref{prop:gd_ghom_prsv}.

\smallskip
(If) We assume that $\varphi$ is a first-order sentence preserved under globally-homomorphic preimages. Let $\mathrm{con}(\varphi)$ denote the class of all GDs that are logical consequences of $\varphi$. We want to show that $\mathrm{con}(\varphi)$ is equivalent to $\varphi$, which implies the desire result by the compactness. Let $\mathcal{A}$ be any model of $\mathrm{con}(\varphi)$. It suffices to show that $\mathcal{A}$ is also a model of $\varphi$.
Let\vspace{-.05cm}
\begin{align*}
\Sigma&=\{\neg\gamma:\gamma\in\mathsf{GD}\,\&\,\,\mathcal{A}\models\neg\gamma\}.\vspace{-.05cm}
\end{align*}
Now we prove the following property:
\smallskip

\noindent{\em Claim.} $\Sigma\cup\{\varphi\}$ is satisfiable.
\smallskip

Let $\Sigma_0$ be an arbitrary finite subset of $\Sigma$. To show the claim, by the compactness, it suffices to show that $\Sigma_0\cup\{\varphi\}$ is satisfiable. Towards a contradiction, assume that this is not the case. Suppose $\Sigma_0=\{\neg\gamma_1,\dots,\neg\gamma_n\}$, and let $\psi$ denote the formula $\gamma_1\vee\cdots\vee\gamma_n$. Then we must have $\varphi\vDash\psi$. It is not difficult to see that $\psi$ is equivalent to a GD (by renaming the individual variables and lifting the universal quantifiers, and then by a routine transformation). Thus, $\mathcal{A}$ should be a model of $\psi$. This implies that there is some integer $i:1\le i\le n$ such that $\mathcal{A}\models\gamma_i$, which contradicts with $\gamma_i\in\Sigma$ and the definition of $\Sigma$. So, we obtain the claim.

Applying the above claim, there is thus a model, say $\mathcal{B}$, of $\Sigma\cup\{\varphi\}$. Consequently, we have $\mathcal{B}\preceq_{\mathsf{GD}}\mathcal{A}$. Let $\mathcal{A}^+$ and $\mathcal{B}^+$ be $\omega$-saturated structures such that $\mathcal{A}\equiv\mathcal{A}^+$ and $\mathcal{B}\equiv\mathcal{B}^+$. Then $\mathcal{B}^+\preceq_{\mathsf{GD}}\mathcal{A}^+$ is clearly true, and $\mathcal{B}^+$ is a model of $\varphi$. By Lemma \ref{globallem}, $\mathcal{A}^+$ is then globally homomorphic to $\mathcal{B}^+$. Since $\varphi$ is preserved under globally-homomorphic preimages, $\mathcal{A}^+$ should be a model of $\varphi$, which implies that $\mathcal{A}$ is also a model of $\varphi$. This completes the proof of the theorem.
\end{proof}

Note that the above argument only works on the class of arbitrary structures. Over finite structures, the  characterization is in general not true, as shown by the following proposition.

\begin{theorem}\label{cntexm_retrprsv}
There is a first-order sentence that is preserved under globally-homomorphic preimages in the finite but is not equivalent to any finite set of GDs over finite structures.
\end{theorem}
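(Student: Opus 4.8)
The plan is to locate the separation in a failure of compactness that is invisible over arbitrary structures but fatal over finite ones. Concretely, I would exhibit a single first-order sentence $\varphi$ together with an \emph{infinite} set $\Theta$ of GDs such that $\varphi$ and $\Theta$ have exactly the same finite models, while no finite set of GDs at all captures $\varphi$ over finite structures. The point of routing $\varphi$ through an infinite GD-theory is that it hands us the ``preserved'' half almost for free: by Proposition~\ref{prop:gd_ghom_prsv} every member of $\Theta$ is preserved under globally-homomorphic preimages in the finite, and this preservation property is plainly closed under arbitrary conjunction, so $\Theta$---and therefore its finite-model equivalent $\varphi$---is preserved under globally-homomorphic preimages in the finite. (If the chosen $\varphi$ has a transparent syntactic shape, one can instead verify preservation directly from the two homomorphisms and the tuple-function $\pi$ witnessing $\mathcal{A}\Rightarrow\mathcal{B}$, in the style of Proposition~\ref{prop:gd_ghom_prsv}.)

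The heart of the argument is the non-axiomatizability half, which I would prove by a fooling-family argument. Suppose, for contradiction, that $\varphi$ were equivalent over finite structures to some finite set $\Sigma=\{\gamma_1,\dots,\gamma_m\}$ of GDs. Every $\gamma_i$ then satisfies $\varphi\vDash_{\mathrm{fin}}\gamma_i$, so each $\gamma_i$ is a finite-consequence GD of $\varphi$, and together they use at most some fixed number $k$ of variables. I would then construct, for this $k$, a finite structure $\mathcal{D}$ with $\mathcal{D}\not\models\varphi$ but with the property that $\mathcal{D}\models\gamma$ for \emph{every} GD $\gamma$ using at most $k$ variables that is a finite-consequence of $\varphi$. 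Since each $\gamma_i$ is such a GD, we get $\mathcal{D}\models\Sigma$ while $\mathcal{D}\not\models\varphi$, contradicting $\Sigma\vDash_{\mathrm{fin}}\varphi$. The design principle behind $\mathcal{D}$ is that a GD with at most $k$ variables can only ``probe'' configurations of bounded size, so a sufficiently large and locally uniform gadget that violates the global property defining $\varphi$ can still reproduce, locally, every bounded configuration that $\varphi$ forces; hence it satisfies all the bounded GD-consequences of $\varphi$ despite failing $\varphi$ globally.

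For this to go through, $\varphi$ must express a genuinely \emph{global} property---one that, like reachability, cardinality parity, or the presence of a single long cycle, cannot be certified by inspecting boundedly many elements---yet must remain first-order over finite structures and must be preserved under globally-homomorphic preimages. Example~\ref{exm:ghom_prv_cntexp} already shows how delicate the preservation side is, since even the mild sentence $\exists x\,\neg Q(x)$ fails it. The candidate I would try is a sentence of the ``every element has a successor, glued with a global obstruction'' flavour, where the GDs forced by $\varphi$ forbid only bounded bad patterns while $\varphi$ itself rules out an unbounded one; the violating structures $\mathcal{D}$ would be increasingly long cyclic or chain-like gadgets of size $\gg k$.

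I expect the main obstacle to be exactly the choice and verification of the pair $(\varphi,\Theta)$: one must simultaneously (i) guarantee that $\varphi$ is first-order and preserved in the finite, and (ii) prove that the fooling structures $\mathcal{D}$ satisfy \emph{all} $k$-variable GD-consequences of $\varphi$, not merely the ones in $\Sigma$. Step (ii) is the crux, because it amounts to characterizing which bounded GDs $\varphi$ actually entails over finite structures and then showing $\mathcal{D}$ respects every one of them. I would attack it with an Ehrenfeucht--Fra\"iss\'e / locality analysis, arguing that $\mathcal{D}$ and a genuine large model of $\varphi$ agree on all existential-positive statements of bounded size together with their negations, which is precisely what is needed to guarantee that $\mathcal{D}$ satisfies every bounded GD implied by $\varphi$.
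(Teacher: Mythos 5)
There is a genuine gap: your text is a strategy outline, not a proof, and the parts you defer are precisely the mathematical content of the theorem. The statement is an existence claim, so it needs an explicit witness, yet (a) the sentence $\varphi$ is never specified beyond a ``flavour''; (b) the infinite GD theory $\Theta$ is never exhibited, and the claim that a suitable $\varphi$ is finite-model-equivalent to \emph{some} infinite set of GDs is itself nontrivial (over finite structures there is no compactness or saturation to produce it, and it is not known to follow from preservation in the finite); and (c) the fooling structure $\mathcal{D}$ is never constructed. Point (c) is the worst: you require $\mathcal{D}$ to satisfy \emph{every} $k$-variable GD that is a finite consequence of $\varphi$, which forces you to characterize that whole consequence class --- you concede this is ``the crux'' and propose an Ehrenfeucht--Fra\"{\i}ss\'e/locality analysis, but that analysis is left entirely unexecuted and is essentially as hard as the theorem itself. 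A plan whose feasibility hinges on an unconstructed gadget with an unverified universal property does not establish the result.

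The paper's proof shows both halves can be done far more cheaply once the right $\varphi$ is fixed, and in particular that your requirement in (c) is much stronger than necessary. The paper takes $\varphi=\varphi_1\wedge\varphi_2\wedge(\varphi_3\rightarrow\exists x\,\neg Q(x))$, where $\varphi_1,\varphi_2$ axiomatize a linear order with $Min$, $Max$ and an immediate-successor relation $Succ$, and $\varphi_3$ says every non-maximal element has a $Succ$-successor (a modification of Gurevich--Shelah). Preservation in the finite is proved \emph{directly}, with no detour through GDs: a globally-homomorphic preimage $\mathcal{B}$ of a finite model $\mathcal{A}$ yields an embedding $p:\mathcal{B}\rightarrow\mathcal{A}$ and a homomorphism $h:\mathcal{A}\rightarrow\mathcal{B}$ with $h(p(b))=b$; the strict linear order forces $h$ to be injective, so $\mathcal{A}\cong\mathcal{B}$ and $\mathcal{B}\models\varphi$ trivially. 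For non-definability, one does not need a structure satisfying all bounded GD-consequences of $\varphi$; one only needs, for each candidate finite set $\Sigma$ (say with $n$ universal variables), a pair of finite structures on which $\Sigma$ and $\varphi$ disagree. The paper takes the chain $\mathcal{A}$ on $\{0,\dots,n\}$ with $Q^{\mathcal{A}}=A$: it fails $\varphi$, hence fails some $\sigma\in\Sigma$ at a tuple $\vec{a}$ of length at most $n$; deleting the single fact $Q(b)$ for some $b$ not occurring in $\vec{a}$ leaves the (positive) body of $\sigma$ satisfied and the (existential-positive) head still falsified, so $\Sigma$ still fails, while $\exists x\,\neg Q(x)$ now holds, so the modified structure models $\varphi$. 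This one-fact-deletion trick, exploiting the syntactic positivity of GDs, is exactly the idea your proposal is missing, and it sidesteps the consequence-class characterization on which your plan stalls.
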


This can prove by constructing an example, which can be done by a slight modification to Gurevich and Shelah's counterexample (see, e.g., Theorem 2.1.1 in~\cite{Rosen2002}).

\subsection{Disjunctive Embedded Dependencies}

According to the definition, DEDs are safe GDs that are not negative constraints. So,  
to characterize DEDs, we need some properties to assure the safeness and to avoid occurrences of negative constraints. To do the latter, we use a technique called {\em trivial structure}, which was used in~\cite{MakowskyV1986} to characterize full TGDs. 

We first recall the notion of trivial structure. A structure $\mathcal{A}$ is called {\em trivial} if the domain of $\mathcal{A}$ consists of exactly one element and every relation symbol in the signature is interpreted by $\mathcal{A}$ as a full relation on the domain of a proper arity.

To capture the safeness of a DED, we propose a similar notion. A structure $\mathcal{A}$ is called {\em sharp} if all the following hold:
\begin{itemize}
\item the domain of $\mathcal{A}$ consists of exactly two distinct constants, say $\ast$ and $\circ$;
\item for each constant symbol $c$ in the signature, $c^\mathcal{A}=\ast$;
\item for each relation symbol $R$ in the signature, $R^\mathcal{A}$ consists of exactly a single tuple $(\ast,\dots,\ast)$ of a proper length. 
\end{itemize}

The following example shows that the sharp models are able to separate the class of DEDs from the class of GDs:

\begin{example}
Let $\sigma$ be a DED of the following form:
\begin{equation}
P(x)\wedge R(x,y)\rightarrow Q(y).
\end{equation}
Let $\tau=\{P,Q,R\}$, and let $\mathcal{A}$ be a $\tau$-structure with the domain $\{a,b\}$, interpreting both $P$ and $Q$ as $\{a\}$, and interpreting $R$ as $\{(a,a)\}$. Clearly, $\mathcal{A}$ is a sharp model of $\sigma$. 

Let $\sigma_0$ denote the GD obtained from $\sigma$ by replacing $R(x,y)$ with $R(x,x)$. Clearly, $\sigma_0$ is a GD that is not satisfied in $\mathcal{A}$. 
\end{example}

\smallskip
The following result can be shown by a routine check:
 
\begin{proposition}\label{prop:trivial_sharp}
Let $\Sigma$ be a finite set of GDs. Then all the following properties are equivalent:
\begin{enumerate}
\item $\Sigma$ is equivalent to a finite set of DEDs;
\item $\Sigma$ is equivalent to a finite set of DEDs over finite structures;
\item $\Sigma$ has both a trivial model and a sharp model.
\end{enumerate} 
\end{proposition}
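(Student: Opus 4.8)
The plan is to establish the cycle $(1)\Rightarrow(2)\Rightarrow(3)\Rightarrow(1)$, resting on two elementary facts: (a) every DED is satisfied in every trivial structure, and (b) every DED is satisfied in every sharp structure. For (a), a DED has a nonempty head, which can be witnessed in a trivial structure by sending every existential variable to the unique domain element, all relations there being full. For (b), observe that in a sharp structure any satisfied relational atom forces each of its arguments to $\ast$; hence if the body of a DED is satisfied then, by safeness, every frontier variable equals $\ast$, and sending every existential variable to $\ast$ satisfies a head disjunct. Granting (a) and (b), the implication $(1)\Rightarrow(2)$ is immediate, since equivalence to a finite set of DEDs over all structures entails the same equivalence over finite structures.

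For $(2)\Rightarrow(3)$, suppose $\Sigma$ agrees over finite structures with a finite set $\Delta$ of DEDs. Trivial and sharp structures are finite, and by (a) and (b) every member of $\Delta$ holds in each of them; thus a trivial structure and a sharp structure are finite models of $\Delta$, and hence of $\Sigma$. So $\Sigma$ has both a trivial model and a sharp model.

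The core is $(3)\Rightarrow(1)$. Assume $\Sigma$ has a trivial model $\mathcal{T}$ and a sharp model $\mathcal{S}$, and (after the usual normalization eliminating equalities from bodies) that each member of $\Sigma$ has a purely relational body. Since a negative constraint is falsified in every trivial structure --- its body is satisfiable there while its head is $\bot$ --- the existence of $\mathcal{T}$ forces every $\sigma\in\Sigma$ to have a nonempty head. To repair safeness, I would, for each $\sigma\in\Sigma$, move every unsafe frontier variable from the universal block into the existential block, obtaining a sentence $\hat\sigma$ whose surviving frontier variables are all safe and whose head is still nonempty, i.e.\ a DED; put $\hat\Sigma=\{\hat\sigma:\sigma\in\Sigma\}$. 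Because passing from a universal to an existential quantifier only weakens a sentence, $\Sigma\models\hat\Sigma$ holds, so proving $\Sigma\equiv\hat\Sigma$ reduces to the converse entailment $\hat\Sigma\models\Sigma$, after which compactness yields a genuinely finite equivalent set of DEDs.

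I expect $\hat\Sigma\models\Sigma$ to be the main obstacle, and it is exactly here that the sharp model must be used: one has to show that the universal force of the unsafe frontier variables can be recovered from their existentialised form. Concretely, given a model of $\hat\Sigma$ and an assignment satisfying the body of some $\sigma$, one must rule out that any choice of values for its unsafe frontier variables falsifies the head. The hypothesis $\mathcal{S}\models\sigma$ restricts the syntactic role such a variable may play --- intuitively it may occur in the head only within atoms that survive when it is sent to the spurious element $\circ$ of $\mathcal{S}$ --- and I would exploit this restriction, together with the preservation guaranteed by Proposition~\ref{prop:gd_ghom_prsv}, to transfer satisfaction from $\hat\sigma$ back to $\sigma$. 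Making this transfer precise in the presence of disjunction, and of existential variables shared across disjuncts, is the delicate point; the argument must keep the two hypotheses cleanly apart, with the trivial model accounting for non-emptiness of heads and the sharp model accounting for safeness.
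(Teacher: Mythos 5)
Your facts (a) and (b), and with them the implications $(1)\Rightarrow(2)\Rightarrow(3)$, are correct. The fatal problem is in $(3)\Rightarrow(1)$, and it is not merely that the step you defer is delicate: the entailment $\hat\Sigma\models\Sigma$ that your plan rests on is false, and the disjunctive phenomenon you worry about in your last paragraph cannot be excluded. Consider the single GD
\[
\sigma \;=\; \forall x\forall y\,\bigl(P(x)\rightarrow Q(x)\vee R(x,y)\bigr),
\]
whose frontier variable $y$ is unsafe. Your $\hat\sigma$ is $\forall x\,\bigl(P(x)\rightarrow Q(x)\vee\exists y\,R(x,y)\bigr)$. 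The structure $\mathcal{A}'$ with domain $\{a,b\}$, $P^{\mathcal{A}'}=\{a\}$, $Q^{\mathcal{A}'}=\emptyset$ and $R^{\mathcal{A}'}=\{(a,a)\}$ satisfies $\hat\sigma$ but not $\sigma$ (take $x=a$, $y=b$), so $\hat\sigma\not\models\sigma$. Yet $\{\sigma\}$ satisfies hypothesis (3): the trivial structure is a model, and the sharp structure is a model because whenever its body holds we have $x=\ast$, and then the disjunct $Q(\ast)$ is already true --- the sharp structure never gets to test the disjunct containing the unsafe variable, because a rival disjunct absorbs it. This is exactly the case your sketch says must be ruled out, and it cannot be.

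Worse, no alternative argument can close the gap, because $(3)\Rightarrow(1)$ fails for this very $\sigma$. Every DED is preserved under adding to a structure an isolated element, i.e.\ one lying in no relation and not named by a constant (here the signature has no constants): by safeness, any assignment satisfying the body sends all frontier variables into the active domain, after which the original head witnesses still serve --- an argument parallel to your fact (b). But $\sigma$ holds in $\mathcal{A}'|_{\{a\}}$ and fails in $\mathcal{A}'$ itself, which is obtained from it by adding the isolated point $b$; hence $\{\sigma\}$ is equivalent to no set of DEDs, over arbitrary structures or over finite ones, despite having both a trivial and a sharp model. Note that the paper offers no proof to compare against here (it declares the proposition a routine check), and your attempt exposes why it is not one: the sharp-model test detects unsafe frontier variables only when no other disjunct can witness the head, e.g.\ for nondisjunctive GDs. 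A correct statement must either restrict to that case or replace condition (3) by a genuinely semantic one, such as preservation under adding isolated elements; no refinement of your transfer argument from $\hat\Sigma$ back to $\Sigma$ will do.
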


%\begin{remark}
Note that both ``$\varphi$ has a trivial model" and ``$\varphi$ has a sharp model" can be regarded as trivial preservation properties.
%\end{remark}

\subsection{Embedded Dependencies}
To characterize EDs, we use the notion of direct products. 
Let $\mathcal{A}$ and $\mathcal{B}$ be structures over a signature $\tau$. The {\em direct product} of $\mathcal{A}$ and $\mathcal{B}$, denoted $\mathcal{A}\times\mathcal{B}$, is a $\tau$-structure defined as follows:
\begin{itemize} 
\item the domain of $\mathcal{A}\times\mathcal{B}$ is $A\times B$;
\item for all constant symbols $c\in\tau$, $c^{\mathcal{A}\times\mathcal{B}}=\langle c^\mathcal{A},c^\mathcal{B}\rangle$;
\item for all $k$-ary relation symbols $R\in\tau$, all tuples $\vec{a}$ on $\mathcal{A}$, and all tuples $\vec{b}$ on $\mathcal{B}$, 
$(\langle a_1,b_1\rangle,\dots,\langle a_k,b_k\rangle)\in{R}^{\mathcal{A}\times\mathcal{B}}$ if $\vec{a}\in{R}^{\mathcal{A}}$ and $\vec{b}\in{R}^{\mathcal{B}}$, where $a_i$ and $b_i$ denote the $i$-th component of $\vec{a}$ and $\vec{b}$, respectively.
\end{itemize}

We say a sentence  $\varphi$ is {\em preserved under direct products [in the finite]} if, for any [finite] models $\mathcal{A}$ and $\mathcal{B}$ of $\varphi$, $\mathcal{A}\times\mathcal{B}$ is also a model of $\varphi$. 

\smallskip
The following can be shown by a routine check. 

\begin{proposition}\label{prop:d_dp_prsv}
Every ED is preserved under direct products [in the finite].
\end{proposition}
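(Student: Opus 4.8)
The plan is to reduce the statement to a single coordinate-wise fact about atomic formulas in direct products, and then push the existential witnesses through the product by combining the witnesses obtained separately in the two factors.

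First I would record the following elementary observation, call it the \emph{factorization lemma}: for every atomic formula $\alpha(\vec{z})$ over $\tau$---whether a relational atom or an equality---and every tuple $\vec{g}$ of elements of $A\times B$ with $\mathcal{A}$-projection $\vec{g}_A$ and $\mathcal{B}$-projection $\vec{g}_B$, one has $\mathcal{A}\times\mathcal{B}\models\alpha[\vec{g}]$ iff $\mathcal{A}\models\alpha[\vec{g}_A]$ and $\mathcal{B}\models\alpha[\vec{g}_B]$. For a relational atom $R(\vec{t})$ this is immediate from the definition of $R^{\mathcal{A}\times\mathcal{B}}$ (a tuple of pairs lies in the product relation exactly when both projections lie in the corresponding factor relations), once one notes that each constant symbol $c$ is interpreted in the product as $\langle c^\mathcal{A},c^\mathcal{B}\rangle$, so that terms evaluate coordinate-wise. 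For an equality atom $t_1=t_2$ it holds because two pairs coincide exactly when they agree in each coordinate. Closing under $\wedge$, the same equivalence holds for any conjunction of atoms; in particular it applies both to the body $\phi$ and to the head matrix $\psi$ of an ED.

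With this in hand, write the ED as $\sigma=\forall\vec{x}(\phi(\vec{x})\rightarrow\exists\vec{y}\,\psi(\vec{x},\vec{y}))$, where $\phi$ and $\psi$ are conjunctions of atoms (recall that, being a nondisjunctive DED that is not a negative constraint, $\sigma$ has exactly one disjunct in its head). Assume $\mathcal{A}\models\sigma$ and $\mathcal{B}\models\sigma$, and let $\vec{c}$ be any tuple in $A\times B$ with $\mathcal{A}\times\mathcal{B}\models\phi[\vec{c}]$, with projections $\vec{a}$ and $\vec{b}$. By the factorization lemma, $\mathcal{A}\models\phi[\vec{a}]$ and $\mathcal{B}\models\phi[\vec{b}]$, so $\sigma$ fires in each factor, yielding witnesses $\vec{d}$ in $A$ with $\mathcal{A}\models\psi[\vec{a},\vec{d}]$ and $\vec{e}$ in $B$ with $\mathcal{B}\models\psi[\vec{b},\vec{e}]$. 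I would then form the combined witness $\vec{f}$ in $A\times B$ whose $j$-th component is $\langle d_j,e_j\rangle$; its projections are exactly $\vec{d}$ and $\vec{e}$. Applying the factorization lemma to $\psi$ now gives $\mathcal{A}\times\mathcal{B}\models\psi[\vec{c},\vec{f}]$, hence $\mathcal{A}\times\mathcal{B}\models\exists\vec{y}\,\psi[\vec{c}]$. As $\vec{c}$ was arbitrary, $\mathcal{A}\times\mathcal{B}\models\sigma$. The argument works verbatim in the finite case, since $\mathcal{A}\times\mathcal{B}$ is finite whenever $\mathcal{A}$ and $\mathcal{B}$ are.

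There is no deep obstacle here---the content lies entirely in the factorization lemma---but two points deserve care. The first is the treatment of equality atoms: since EDs (unlike TGDs) may contain equalities, the lemma must cover them, which is why I verify the coordinate-wise behaviour of $=$ separately rather than relying only on the product definition for relation symbols. The second, and the real reason the argument succeeds, is \emph{nondisjunctivity}: the step combining $\vec{d}$ and $\vec{e}$ into a single witness $\vec{f}$ requires that the \emph{same} head conjunction be satisfied in both factors. For a genuine disjunctive head $\psi_1\vee\cdots\vee\psi_n$ the two factors could use different disjuncts, so that no coordinate-wise witness need satisfy any single $\psi_i$ in the product; this is precisely why DEDs fail to be preserved under direct products in general while EDs are preserved. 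Safeness, by contrast, plays no role in this proof.
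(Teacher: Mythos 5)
Your proof is correct and follows exactly the route the paper intends: your ``factorization lemma'' is precisely the paper's Lemma~\ref{prodlemma} (stated there for CQ$^=$s and $n$-fold products), and the witness-combining step is the ``routine check'' the paper alludes to when stating Proposition~\ref{prop:d_dp_prsv}. Your closing remarks on equality atoms and on nondisjunctivity being the crucial point also match the paper's Example~\ref{exm:dproduct_prv_cntexp}, which shows exactly that failure for DEDs.
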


In general, the direct product preservation fails for DEDs. A simple counterexample is given as follows: 

\begin{example}\label{exm:dproduct_prv_cntexp}
Let $\sigma$ denote the DED $R\rightarrow S\vee T$ where $R,S$ and $T$ are nullary relation symbols. Let $\tau$ be the signature $\{R,S,T\}$. Let $\mathcal{A}$ and $\mathcal{B}$ be $\tau$-structures such that
\begin{itemize}
\item $\mathcal{A}$ and $\mathcal{B}$ have the same domain $\{a\}$; 
\item $R^{\mathcal{A}}=R^{\mathcal{B}}=S^{\mathcal{A}}=T^{\mathcal{B}}=true$, $S^{\mathcal{B}}=T^{\mathcal{A}}=false$.
\end{itemize} 
Clearly, both $\mathcal{A}$ and $\mathcal{B}$ are models of $\sigma$, but $\mathcal{A}\times\mathcal{B}$ is not. Thus, $\sigma$ is not preserved under direct products even in the finite.
\end{example}

The following result shows that the property of direct product preservation exactly captures the class of DEDs in which the disjunctions can be eliminated. Interestingly, this characterization also works over finite structures.

\begin{theorem}\label{prop:dgd_dp_prsv}
A finite set of DEDs is equivalent to a finite set of EDs [over finite structures] iff 
it is preserved under direct products [in the finite]. 
\end{theorem}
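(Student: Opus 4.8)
The plan is to prove the only-if direction directly from Proposition~\ref{prop:d_dp_prsv} and to concentrate the real work on the if direction, where the task is to eliminate the disjunctions appearing in the heads. For the only-if direction, suppose the finite set $\Sigma$ of DEDs is [finitely] equivalent to a finite set $\Sigma'$ of EDs. Given [finite] models $\mathcal{A},\mathcal{B}$ of $\Sigma$, they are models of $\Sigma'$, so by Proposition~\ref{prop:d_dp_prsv} every ED in $\Sigma'$ survives in $\mathcal{A}\times\mathcal{B}$; hence $\mathcal{A}\times\mathcal{B}\vDash\Sigma'$ and therefore $\mathcal{A}\times\mathcal{B}\vDash\Sigma$, as required, with the finite clause handled identically since $\mathcal{A}\times\mathcal{B}$ is finite when $\mathcal{A},\mathcal{B}$ are.

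The crux of the if direction is a \emph{disjunction property}, which I would isolate as a lemma: if $\Sigma$ is preserved under direct products [in the finite], $\phi(\vec{x})$ is a conjunction of atoms, and $\chi_1,\dots,\chi_n$ are formulas of the form $\exists\vec{y}\,\psi_i$ with each $\psi_i$ a conjunction of atoms, then $\Sigma\vDash\forall\vec{x}(\phi\rightarrow\bigvee_{i}\chi_i)$ [resp.\ $\vDash_{\mathrm{fin}}$] already implies $\Sigma\vDash\forall\vec{x}(\phi\rightarrow\chi_j)$ [resp.\ $\vDash_{\mathrm{fin}}$] for some single index $j$. To prove this I introduce a tuple $\vec{a}$ of fresh constant symbols, so the hypothesis reads $\Sigma\cup\{\phi(\vec{a})\}\vDash\bigvee_i\chi_i(\vec{a})$. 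Assuming toward a contradiction that no single disjunct is entailed, for each $i$ I pick a [finite] model $\mathcal{M}_i\vDash\Sigma\cup\{\phi(\vec{a}),\neg\chi_i(\vec{a})\}$ and form the product $\mathcal{M}=\mathcal{M}_1\times\cdots\times\mathcal{M}_n$. Product preservation gives $\mathcal{M}\vDash\Sigma$; since $\phi$ is a conjunction of atoms and the fresh constants are interpreted componentwise, $\mathcal{M}\vDash\phi(\vec{a})$; and for each $j$ a witnessing tuple for $\chi_j(\vec{a})$ in $\mathcal{M}$ would project, through the $j$-th coordinate, to a witness for $\chi_j(\vec{a})$ in $\mathcal{M}_j$, contradicting $\mathcal{M}_j\vDash\neg\chi_j(\vec{a})$. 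Thus $\mathcal{M}$ refutes every disjunct while satisfying $\Sigma\cup\{\phi(\vec{a})\}$, the desired contradiction.

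With the lemma in hand I would finish as follows. Each $\sigma\in\Sigma$ has the form $\forall\vec{x}(\phi\rightarrow\bigvee_{i=1}^n\chi_i)$ with $\chi_i=\exists\vec{y}\,\psi_i$, and since $\sigma$ is trivially a consequence of $\Sigma$, the lemma yields an index $j$ with $\Sigma\vDash\sigma'$ [resp.\ $\vDash_{\mathrm{fin}}$], where $\sigma':=\forall\vec{x}(\phi\rightarrow\chi_j)$. This $\sigma'$ is nondisjunctive, is not a negative constraint, and is safe because its frontier variables form a subset of those of $\sigma$, each of which already occurs in a relational body atom; hence $\sigma'$ is an ED. Moreover $\sigma'\vDash\sigma$, since one disjunct entails the whole disjunction. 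Collecting one such $\sigma'$ for every $\sigma\in\Sigma$ produces a finite set $\Sigma'$ of EDs with $\Sigma\vDash\Sigma'$ and $\Sigma'\vDash\Sigma$, so $\Sigma'$ is [finitely] equivalent to $\Sigma$; note that assembling $\Sigma'$ this way needs no compactness, since each $\sigma'$ is chosen to imply its $\sigma$ outright.

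The main subtlety I expect is the projection step in the lemma, where the \emph{positivity} of the heads $\psi_i$ is essential: a positive atom transfers from a product to each factor, whereas a negated atom would not, which is exactly why the analogous statement fails for non-positive heads and is reflected in Example~\ref{exm:dproduct_prv_cntexp}. For EDs carrying equality atoms I would observe that equalities are likewise positive and transfer to each coordinate of a product, so the projection argument is unaffected. I would also emphasize what lets this characterization, unlike Theorem~\ref{thm:ghom_prsv}, descend to finite structures: the entire argument avoids $\omega$-saturation and compactness and relies only on the product of finitely many finite structures being finite, so the lemma and its consequences hold verbatim in the finite setting.
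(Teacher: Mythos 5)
Your proof is correct and takes essentially the same route as the paper's: both eliminate disjunctions by choosing, for each disjunct of a DED, a [finite] countermodel of the corresponding single-headed ED, forming the direct product of these countermodels, and using the fact that CQ$^=$s (both bodies and disjunct heads, equalities included) transfer between a product and its factors to contradict preservation under direct products. Your ``disjunction property'' lemma with fresh constants is just a repackaging of the paper's inlined argument via $\mathrm{split}(\Sigma)$ and $\mathrm{con}(\Sigma)$, with the same reason for the result descending to finite structures (no compactness or saturation, only finite products of finite structures).
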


\subsection{Tuple-generating Dependencies}

Let $\mathcal{A}$ and $\mathcal{B}$ be structures over a signature $\tau$. A {\em strict homomorphism from $\mathcal{A}$ into (resp., onto) $\mathcal{B}$} is a function $h$ from $A$ into (resp., onto) $B$ such that
\begin{itemize}
\item for every relation symbol ${R}\in\tau$ and every tuple $\vec{a}$ on $\mathcal{A}$ of a proper length, we have $\vec{a}\in{R}^\mathcal{A}$ iff $h(\vec{a})\in{R}^\mathcal{B}$, and
\item for every constant symbol ${c}\in\tau$, we have $h({c}^{\mathcal{A}})={c}^{\mathcal{B}}$.
\end{itemize}
If such a strict homomorphism exists, we say $\mathcal{B}$ is a {\em strictly-homomorphic image} of $\mathcal{A}$, and say $\mathcal{A}$ is, conversely, a {\em strictly-homomorphic preimage} of $\mathcal{B}$. A sentence $\varphi$ is said to be {\em preserved under strictly-homomorphic (pre)images [in the finite]} if, for every [finite] model $\mathcal{A}$ of $\varphi$ and every [finite] strictly-homomorphic (pre)image $\mathcal{B}$ of $\mathcal{A}$, $\mathcal{B}$ is also a model of $\varphi$. 

\smallskip
The following gives us the desired characterazations:

\begin{theorem}\label{thm:strict_hom_prsv}
A finite set of EDs is equivalent to a finite set of TGDs [over finite structures] iff
it is preserved under both strictly-homomorphic images and preimages [in the finite].
\end{theorem}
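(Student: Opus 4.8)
The plan is to read the two preservation properties as a single invariance: both a strictly-homomorphic image and a strictly-homomorphic preimage are witnessed by a strict homomorphism that is \emph{onto}, so a sentence preserved under both has the same truth value on any two structures linked by a strict surjection. For the only-if direction I would first record a one-line lemma, proved by induction on formula structure: if $h$ is a strict homomorphism from $\mathcal A$ onto $\mathcal B$, then for every \emph{equality-free} formula $\theta(\vec z)$ and every tuple $\vec a$ on $\mathcal A$ one has $\mathcal A\models\theta[\vec a]$ iff $\mathcal B\models\theta[h(\vec a)]$. The atomic case is exactly the defining ``iff'' of a strict homomorphism, the Boolean cases are immediate, and the two quantifier cases use surjectivity to transport witnesses in both directions. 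Since every TGD, hence every finite set of TGDs, is equality-free, the lemma applied to the corresponding sentence shows its truth value is invariant along strict surjections; in particular it is preserved under both strictly-homomorphic images and preimages, and the argument is unchanged when all structures are finite.

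For the if direction, let $\Sigma$ be a finite set of EDs preserved under both operations. The first step is an equivalence-preserving syntactic normalization that stays inside the EDs: remove every equality occurring in a body by merging a universal variable with the term it is equated to, so that bodies become equality-free, and remove every head equality mentioning an existential variable by substituting that variable away (as usual $\exists y(\cdots\wedge y=t\wedge\cdots)$ collapses to $\cdots[y:=t]\cdots$). After this, each $\sigma\in\Sigma$ has an equality-free body, and the only equalities that can still occur in its head are \emph{frontier} equalities $t_1=t_2$ between two distinct frontier terms (universal variables or constants). The entire content of the theorem is then that such frontier equalities cannot survive.

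To rule them out I would use a strict preimage of the trivial structure. Every ED holds in the trivial structure $\mathcal T$ (all relations are full, so every body and head is satisfied), hence $\mathcal T\models\Sigma$. Split the unique point of $\mathcal T$ into two points to obtain the two-element structure $\mathcal T'$ in which every relation is again full; the collapsing map is a strict homomorphism of $\mathcal T'$ onto $\mathcal T$, so $\mathcal T'$ is a strictly-homomorphic preimage of $\mathcal T$, and therefore $\mathcal T'\models\Sigma$ by preservation under preimages. But if some head carried a frontier equality $t_1=t_2$, I could interpret the two distinct frontier terms involved by the \emph{two different} points of $\mathcal T'$ (choosing the constant interpretations and the universal assignment accordingly); the equality-free body would still hold because all relations are full, whereas the head would fail, since $t_1=t_2$ is fixed by the universal assignment and no choice of the existentially quantified variables can repair it. This contradicts $\mathcal T'\models\Sigma$. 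Hence no head frontier equality occurs, every $\sigma$ is now equality-free, and $\Sigma$ is literally a finite set of (safe, non-negative-constraint) TGDs.

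The step I expect to be the crux is this last one: recognizing that the sole obstruction separating EDs from TGDs is a frontier head-equality, and that it is detected by a single strict preimage—the blow-up of the trivial structure—through the preservation-under-preimages half of the hypothesis. It is worth noting that preservation under images is not actually needed for this direction (it only supplies the symmetric half of the only-if), and that $\mathcal T$ and $\mathcal T'$ are finite while the normalization is purely syntactic, so the very same argument yields the bracketed finite-structure version.
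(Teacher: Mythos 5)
Your only-if direction is fine, and your trivial-structure blow-up for \emph{head} equalities is sound: the two-point full structure $\mathcal{T}'$ (under any interpretation of the constant symbols) is a strictly-homomorphic preimage of the trivial structure $\mathcal{T}$, every ED holds in $\mathcal{T}$, and a head equality between two distinct frontier terms can be falsified in $\mathcal{T}'$ while the relational part of the body stays true; so preimage preservation does rule out such heads. This piece is arguably cleaner than the paper's treatment, which handles heads only by saying the argument is ``similar'' to the body case.

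The genuine gap is in your body normalization. You claim every body equality can be removed ``by merging a universal variable with the term it is equated to,'' but an equality $c=d$ between two \emph{distinct constant symbols} contains no variable to merge and cannot be eliminated syntactically. Such EDs are legal (safety only constrains frontier variables), e.g.\ $\sigma:\; P(x)\wedge c=d\rightarrow\exists y\,R(x,y)$. This is precisely where the other half of the hypothesis is needed, so your closing remark that ``preservation under images is not actually needed for this direction'' is false: $\sigma$ \emph{is} preserved under strictly-homomorphic preimages, in the finite as well (if the body fires in the preimage $\mathcal{B}$, then $c^{\mathcal{A}}=d^{\mathcal{A}}$ in the image $\mathcal{A}$, the body fires there, and the witness for $R$ pulls back along the surjection by strictness), yet it is \emph{not} preserved under strictly-homomorphic images (take a two-element $\mathcal{A}$ with $c^{\mathcal{A}}\neq d^{\mathcal{A}}$, $P^{\mathcal{A}}$ full, $R^{\mathcal{A}}=\emptyset$, and collapse it onto a point), hence not equivalent to any finite set of TGDs, over all structures or over finite ones. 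Run on $\{\sigma\}$, your argument would therefore prove a false statement. The paper's proof spends essentially all of its detailed work on exactly this case: it isolates the body-equality-free part $\Sigma_0$ of $\Sigma$, takes a model $\mathcal{A}$ of $\Sigma_0$ violating some $\sigma\in\Sigma$, builds a strict preimage $\mathcal{B}$ by adding a fresh element $\hat{c}$ for each constant symbol (making all constants pairwise distinct, so every ED with a body equality $c=d$ holds vacuously in $\mathcal{B}$), and then invokes preservation under \emph{images} along the strict surjection from $\mathcal{B}$ onto $\mathcal{A}$ to conclude $\mathcal{A}\models\Sigma$, a contradiction. You need this step (or an equivalent one); the rest of your proof can stand.
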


%\begin{remark}
Interestingly, it is not difficult to show that, if no constant symbol is involved, the strictly-homomorphic image preservation can be removed from the characterization. 
%\end{remark}

\subsection{Frontier-guarded TGDs}\label{sec:fgd} 

To characterize frontier-guarded TGDs, we first define some notations. 
Let $\mathcal{A}$ be a structure. We define $\{\mathcal{A}_X:X\subseteq A\}$ as a family of structures over the same signature such that
\begin{itemize}
\item for all $X\subseteq A$, there is an isomorphism $p_X$ from $\mathcal{A}$ to $\mathcal{A}_X$ such that $p_X(a)=a$ for all $a\in X$; 
\item for all $X,Y\subseteq A$, $A_X\cap A_Y= X\cap Y$, where $A_X$ and $A_Y$ denote the domains of $\mathcal{A}_X$ and $\mathcal{A}_Y$, respectively.
\end{itemize} 
Every {\em guarded set} of $\mathcal{A}$ is defined as a finite subset $X$ of $A$ that contains all interpretations of constant symbols in $\mathcal{A}$.
A sentence $\varphi$ is said to be {\em preserved under isomorphic unions [in the finite]} if, for all [finite] models $\mathcal{A}$ of $\varphi$ and all finite sets $\mathbb{G}$ of guarded sets of $A$, $\bigcup_{X\in \mathbb{G}}\mathcal{A}_{X}$ is also a model of $\varphi$.

\begin{example}\label{exm:iso_union}
Let $\tau$ denote $\{R\}$ where $R$ is a binary relation symbol. Let $\mathcal{A}$ be a $\tau$-structure defined as follows:
\begin{itemize}
\item the domain $A$ consists of two distinct constants $a$ and $b$;
\item the relation symbol ${R}$ is interpreted as $A\times A$. 
\end{itemize}
Let $X=\{a\}$, $Y=\{a,b\}$, and $\mathbb{G}=\{X,Y\}$. Then $\mathcal{A}_X,\mathcal{A}_Y$ and $\bigcup_{Z\in\mathbb{G}}\mathcal{A}_Z$ are $\tau$-structures illustrated by Figure 1. 
\begin{figure}
\begin{center}
\begin{tabular}{cccc}
\includegraphics[width=0.17\columnwidth,height=0.09\paperheight]{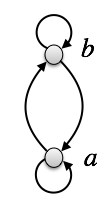} &
\includegraphics[width=0.17\columnwidth,height=0.09\paperheight]{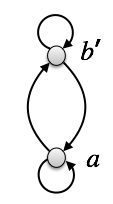} &
\includegraphics[width=0.17\columnwidth,height=0.09\paperheight]{fig_A} &
\includegraphics[width=0.30\columnwidth,height=0.09\paperheight]{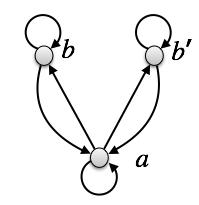}
\\
$\mathcal{A}$&
$\mathcal{A}_X$&
$\mathcal{A}_Y$&
$\mathcal{A}_X\cup \mathcal{A}_Y$
\end{tabular}\\
%\medskip
\end{center}\vspace{-.25cm}
\caption{ Isomorphic Union in Example~\ref{exm:iso_union}}\vspace{-.25cm}
\end{figure}%
\end{example}

By a routine check, one can prove the following property: 

\begin{proposition}\label{prop:tuple_iso_copy_prsv}
Every frontier-guarded TGD is preserved under isomorphic unions [in the finite].
\end{proposition}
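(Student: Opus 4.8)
The plan is to exhibit a single homomorphic retraction from the union back onto $\mathcal{A}$ and then use the guard to re-inject head witnesses into one distinguished copy. Fix a frontier-guarded TGD $\sigma=\forall\vec{x}\,(\phi(\vec{x})\rightarrow\exists\vec{y}\,\psi(\vec{x}_f,\vec{y}))$ with guard atom $\alpha$ (so all frontier variables $\vec{x}_f$ occur in $\alpha$, and among $\vec{x}$ only frontier variables occur in $\psi$), a model $\mathcal{A}$ of $\sigma$, a finite family $\mathbb{G}$ of guarded sets of $A$, and write $\mathcal{U}=\bigcup_{X\in\mathbb{G}}\mathcal{A}_X$. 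First I would record two elementary facts about the copies. Since every guarded set contains all constant interpretations and each $p_X$ fixes $X$ pointwise, every copy agrees with $\mathcal{A}$ on constants, so $c^{\mathcal{U}}=c^{\mathcal{A}}$. And since $A_X\cap A_Y=X\cap Y$ for $X\neq Y$, any element shared between two copies lies in both guarded sets and is therefore fixed by both $p_X^{-1}$ and $p_Y^{-1}$.

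These facts let me define a retraction $\rho\colon U\rightarrow A$ by setting $\rho(u)=p_X^{-1}(u)$ for any $X\in\mathbb{G}$ with $u\in A_X$. The second fact makes $\rho$ well defined: a shared $u$ lies in the relevant intersections and is fixed by each $p_X^{-1}$, so the value is independent of the chosen $X$; consequently $\rho$ restricts to the isomorphism $p_X^{-1}$ on each copy $\mathcal{A}_X$. I would then check that $\rho$ is a homomorphism $\mathcal{U}\rightarrow\mathcal{A}$: constants are handled by the first fact, and any tuple in $R^{\mathcal{U}}$ lies in some single $R^{\mathcal{A}_X}$, on which $\rho$ acts as the isomorphism $p_X^{-1}$ and hence sends it into $R^{\mathcal{A}}$.

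Now take any assignment $s$ with $\mathcal{U}\models\phi[s]$; the goal is a head witness in $\mathcal{U}$. Because $R^{\mathcal{U}}$ is the union of the $R^{\mathcal{A}_X}$, the guard atom $\alpha$ is satisfied inside one single copy, so $s$ maps all variables of $\alpha$, and in particular all frontier variables, into one $A_X$. Since $\rho$ is a homomorphism and $\phi$ is a conjunction of atoms, $\mathcal{A}\models\phi[\rho\circ s]$, and as $\mathcal{A}\models\sigma$ there are witnesses $\vec{d}$ in $A$ with $\mathcal{A}\models\psi[(\rho\circ s)(\vec{x}_f),\vec{d}]$. Applying the isomorphism $p_X$ pushes this into the copy: $\mathcal{A}_X\models\psi[p_X((\rho\circ s)(\vec{x}_f)),p_X(\vec{d})]$. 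The point of choosing exactly the copy $X$ supplied by the guard is that every frontier value $s(x)$ lies in $A_X$, where $\rho$ agrees with $p_X^{-1}$, so $p_X((\rho\circ s)(x))=s(x)$ for each frontier $x$: the recovered frontier values are the original ones. Finally, since $\psi$ is a conjunction of atoms and $R^{\mathcal{A}_X}\subseteq R^{\mathcal{U}}$, satisfaction transfers upward to give $\mathcal{U}\models\exists\vec{y}\,\psi[s]$, hence $\mathcal{U}\models\sigma$.

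The step I expect to be the real crux is the re-injection of the head: it is essential to use the one copy singled out by the guard, because only there does $p_X$ invert $\rho$ simultaneously on all of $\vec{x}_f$, thereby returning the original frontier values $s(\vec{x}_f)$ rather than shifted ones. This is exactly where frontier-guardedness is indispensable; without a guard the frontier variables could be scattered across several copies and no single $p_X$ would restore them. The remaining subtleties, namely the well-definedness of $\rho$ and that it is a homomorphism, rest on the two overlap facts above and are routine. For the finite case nothing changes: when $\mathcal{A}$ and every member of $\mathbb{G}$ are finite, $\mathcal{U}$ is finite and the entire argument is finitary.
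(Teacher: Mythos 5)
Your proof is correct, and it is exactly the ``routine check'' the paper has in mind: your retraction $\rho$ is the same map $p$ that the paper constructs in its proof of Lemma~\ref{lem:cq_copyprsv}, and your guard-based re-injection of the head witnesses into the single copy $\mathcal{A}_X$ containing all the frontier values is the step where frontier-guardedness does its work. Since the paper omits the proof, there is nothing further to compare; your argument, including the observation that the finite case is identical, fills the gap correctly.
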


%\begin{proof}
%Let $\sigma$ be a frontier-guarded TGD of the form
%$$
%\varphi(\vec{x})\rightarrow\exists\vec{y}\psi(\vec{x},\vec{y})
%$$
%where $\varphi$ and $\psi$ are conjunctions of relational atomic formulas. Let $\mathcal{A}$ be a model of $\sigma$, and $\mathbb{G}$ a finite set of guarded sets of $\mathcal{A}$. Now we prove that $\bigcup_{X\in\mathbb{G}}\mathcal{A}_X$ is also a model of $\Sigma$. 
%\end{proof}

Now, a natural question arises as to whether the isomorphic union preservation is able to separate frontier-guarded TGDs from TGDs. The following example shows that it is true. 

\begin{example}[Example~\ref{exm:iso_union} cont.]
Let $\sigma$ denote the TGD%\vspace{-.1cm}
\begin{equation}
{R}(x,y)\wedge{R}(y,z)\rightarrow{R}(x,z)%\vspace{-.1cm}
\end{equation}
and let $\mathcal{A}$ be the structure defined in Example~\ref{exm:iso_union}.
Then it is easy to see that $\mathcal{A}$ is a model of $\sigma$ but $\bigcup_{Z\in\mathbb{G}}\mathcal{A}_{Z}$ is not. So, $\sigma$ is not preserved under isomorphic unions even in the finite.
\end{example}

The following result provides the desired characterization. Note that the characterization also holds over finite structures.

\begin{theorem}\label{thm:char_d2fgd}
A finite set of TGDs is equivalent to a finite set of frontier-guarded TGDs [over finite structures] iff it is preserved under isomorphic unions [in the finite]. 
\end{theorem}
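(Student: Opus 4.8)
The plan is to prove the only-if direction directly from Proposition~\ref{prop:tuple_iso_copy_prsv}: a finite set of frontier-guarded TGDs is equivalent to the conjunction of its members, and that conjunction is preserved under isomorphic unions [in the finite] because each member is. For the if-direction I would imitate the architecture of Theorem~\ref{thm:ghom_prsv}. Write $\varphi$ for the given finite set of TGDs, viewed as a single sentence, and let $\mathrm{con}_{\mathrm{fg}}(\varphi)$ be the set of all frontier-guarded TGDs entailed by $\varphi$. Since $\varphi\models\mathrm{con}_{\mathrm{fg}}(\varphi)$ is trivial, it suffices to prove $\mathrm{con}_{\mathrm{fg}}(\varphi)\models\varphi$, after which compactness extracts a finite equivalent subset. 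So I fix a model $\mathcal{A}$ of $\mathrm{con}_{\mathrm{fg}}(\varphi)$, which I may assume $\omega$-saturated as $\varphi$ is first-order, and aim to show $\mathcal{A}\models\varphi$.

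The target is a model $\mathcal{B}\models\varphi$ and a finite family $\mathbb{G}$ of guarded sets with $\mathcal{A}\Rightarrow\bigcup_{X\in\mathbb{G}}\mathcal{B}_X$. Granting this, the argument closes exactly as in Theorem~\ref{thm:ghom_prsv}: preservation under isomorphic unions (the hypothesis) gives $\bigcup_{X\in\mathbb{G}}\mathcal{B}_X\models\varphi$, and since $\varphi$ is a finite set of GDs, preservation under globally-homomorphic preimages (Proposition~\ref{prop:gd_ghom_prsv}) transports satisfaction back along $\mathcal{A}\Rightarrow\bigcup_{X\in\mathbb{G}}\mathcal{B}_X$ to yield $\mathcal{A}\models\varphi$. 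The essential difference from Theorem~\ref{thm:ghom_prsv} is that there the witnessing model was pinned down by the full $\preceq_{\mathsf{GD}}$-theory of $\mathcal{A}$, whereas here $\mathcal{A}$ only controls its frontier-guarded consequences; accordingly $\mathcal{B}$ must be assembled from the guarded (local) data of $\mathcal{A}$, and the homomorphism necessarily lands in an isomorphic union rather than in a single model of $\varphi$.

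To construct $\mathcal{B}$ I would realize, inside one $\omega$-saturated model of $\varphi$, the (relational) existential-positive type of every guarded tuple of $\mathcal{A}$. The decisive point is that each such type is consistent with $\varphi$. Suppose the type of a tuple $\vec a$ lying in a single guard atom $\alpha(\vec x)$ were inconsistent with $\varphi$; by compactness $\varphi\models(\alpha\wedge\eta)\to\bigvee_j\chi_j$ for some conjunction of atoms $\eta$ true at $\vec a$ and existential-positive formulas $\chi_j$ each false at $\vec a$. Because every member of $\varphi$ is an ED, $\varphi$ is preserved under direct products (Proposition~\ref{prop:d_dp_prsv}); since an existential-positive formula holds in a product iff it holds in every factor, a routine product construction collapses the disjunction to a single disjunct, giving $\varphi\models(\alpha\wedge\eta)\to\chi_{j_0}$. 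As $\alpha$ contains every variable of $\vec x$, hence every frontier variable, this sentence is a frontier-guarded TGD (equality-freeness is ensured by working with the relational type, using that $\varphi$ is also preserved under strictly-homomorphic images and preimages by Theorem~\ref{thm:strict_hom_prsv}), so it lies in $\mathrm{con}_{\mathrm{fg}}(\varphi)$. Since $\mathcal{A}\models\mathrm{con}_{\mathrm{fg}}(\varphi)$ and $\mathcal{A}\models(\alpha\wedge\eta)[\vec a]$, we obtain $\mathcal{A}\models\chi_{j_0}[\vec a]$, contradicting that $\chi_{j_0}$ is false at $\vec a$. This product-based collapse of the disjunction is precisely what replaces the ``a disjunction of GDs is a GD'' step of Theorem~\ref{thm:ghom_prsv}, which is unavailable here because frontier-guarded TGDs have non-disjunctive heads. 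With every guarded type shown consistent with $\varphi$, direct products together with $\omega$-saturation and compactness then yield a single $\omega$-saturated $\mathcal{B}\models\varphi$ realizing all of them simultaneously.

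It remains to weave the local realizations into a single global homomorphism, and this is the step I expect to be the main obstacle. The idea is to index an isomorphic copy $\mathcal{B}_X$ by each guarded set $X$ of $\mathcal{A}$ and to send a tuple guarded inside $X$ to the tuple of $\mathcal{B}$ realizing its type in $\mathcal{B}_X$; because the copies are glued precisely along the intersections of guarded sets, and an element shared between two guarded regions carries a single guarded type, $\omega$-saturation lets one choose the realizations to agree on overlaps and so cohere into one map. Verifying $\mathcal{A}\Rightarrow\bigcup_{X\in\mathbb{G}}\mathcal{B}_X$ then amounts, by Lemma~\ref{hmclass} applied inside the saturated copies $\mathcal{B}_X\cong\mathcal{B}$, to checking tuple-by-tuple agreement of existential-positive types, which holds by construction. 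The delicate issues are the coherence of the gluing across overlapping guarded sets and the finiteness of $\mathbb{G}$ (one should only need the finitely many body patterns relevant to $\varphi$); the locality of frontier-guardedness---that each constraint sees a tuple only through one guard atom---is exactly what makes such a guarded decomposition possible, and it is also what should allow the same combinatorics to be carried out directly on finite structures, delivering the bracketed ``in the finite'' variant.
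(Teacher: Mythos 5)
Your only-if direction is fine, and your product-collapse observation (using preservation of $\varphi$ under direct products to turn an entailed disjunction of CQ-consequences of a guarded body into a single disjunct, hence a frontier-guarded consequence) is sound --- it is essentially the same trick the paper uses in its Claim~3. But the proposal has a genuine gap exactly where you flag it, and it is not a technicality. To invoke Proposition~\ref{prop:gd_ghom_prsv} you need a full global homomorphism $\mathcal{A}\Rightarrow\bigcup_{X\in\mathbb{G}}\mathcal{B}_X$, i.e.\ for \emph{every} tuple $\vec{a}$ on $\mathcal{A}$ (guarded or not) homomorphisms in \emph{both} directions between $(\mathcal{A},\vec{a})$ and the expanded union. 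The backward half requires a homomorphism from the \emph{whole} union into $\mathcal{A}$ sending $\pi(\vec{a})$ to $\vec{a}$; restricted to any single copy this is a homomorphism $\mathcal{B}\rightarrow\mathcal{A}$ with prescribed values. Nothing in your construction provides this: $\mathcal{B}$ is a model of $\varphi$ realizing the guarded types of $\mathcal{A}$, but inside one copy of $\mathcal{B}$ the realizations of two different guarded types of $\mathcal{A}$ may be linked by atoms or existential-positive facts that have no counterpart in $\mathcal{A}$, and then no homomorphism back to $\mathcal{A}$ exists. Keeping the realizations ``independent'' is precisely what a \emph{finite} union of copies of one saturated model glued along guarded sets cannot obviously achieve: $\mathbb{G}$ must be finite by the definition of isomorphic-union preservation, while the $\omega$-saturated $\mathcal{A}$ has infinitely many guarded sets whose intersection patterns $\pi$ would have to mirror. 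So the coherence step is not a ``by construction'' verification; it is the entire difficulty, and it remains unsolved in your plan.

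There is a second, independent gap: your architecture ($\mathrm{con}_{\mathrm{fg}}(\varphi)\vDash\varphi$, then compactness, then $\omega$-saturation) is intrinsically infinitary, so even if completed it would prove only the unbracketed half of the statement, whereas the bracketed ``[over finite structures]/[in the finite]'' version is part of the theorem and finite-model entailment has no compactness. The paper avoids both problems by a different route: it never builds a global homomorphism at all. It first normalizes $\Sigma$ into \emph{diverse dependencies}, then applies the isomorphic-union preservation to models of $\Sigma$ itself --- the union of copies of a model $\mathcal{A}\models\Sigma$, glued along guarded sets determined by the body atoms, acts as a test structure whose shape forces every head witness to be quasi-frontier-guarded (Claim~2, pulled back to $\mathcal{A}$ via Lemma~\ref{lem:cq_copyprsv}) --- and finally eliminates the resulting head disjunctions and equalities by the direct-product and strict-homomorphism arguments (Claim~3). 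Because the candidate set of frontier-guarded TGDs there is finite \emph{by construction} (substitution instances of $\Gamma$) rather than extracted by compactness, and because all preservation properties used also hold in the finite, that proof adapts to finite structures, which yours cannot.
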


Every {\em conjunctive query} (CQ) is a first-order formula of the form $\exists\vec{y}\vartheta(\vec{x},\vec{y})$ where $\vartheta$ is a conjunction of relational atomic formulas. Now we first present a lemma as follows:

\begin{lemma}\label{lem:cq_copyprsv}
Let $\phi(\vec{x})$ be a CQ, $\tau$ the signature $\tau$ of $\phi$, $\mathcal{A}$ a $\tau$-structure, $\vec{a}$ a tuple on $\mathcal{A}$ with $|\vec{a}|=|\vec{x}|$, and $\mathbb{G}$ a finite set of guarded sets of $\mathcal{A}$ such that every constant in $\vec{a}$ belongs to some $X\in\mathbb{G}$. If $\bigcup_{X\in\mathbb{G}}\mathcal{A}_{X}\models\phi[\vec{a}]$ then $\mathcal{A}\models\phi[\vec{a}]$. 
\end{lemma}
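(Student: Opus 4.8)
The plan is to construct a single \emph{folding} homomorphism from the isomorphic union back onto $\mathcal{A}$ and then invoke the preservation of conjunctive queries under homomorphisms, i.e.\ Lemma~\ref{hmclass}(1). Write $\mathcal{U}=\bigcup_{X\in\mathbb{G}}\mathcal{A}_X$ and let $U$ be its domain. Since each $p_X\colon\mathcal{A}\to\mathcal{A}_X$ is an isomorphism, I would define $h\colon U\to A$ by setting $h(e)=p_X^{-1}(e)$ whenever $e\in A_X$.

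First I would check that $h$ is well defined. If $e\in A_X\cap A_Y$ for two guarded sets $X,Y\in\mathbb{G}$, then by the second defining property of the family $\{\mathcal{A}_Z\}$ we have $e\in A_X\cap A_Y=X\cap Y$; and since both $p_X$ and $p_Y$ fix $X\cap Y$ pointwise, $p_X^{-1}(e)=e=p_Y^{-1}(e)$. Thus the value of $h$ does not depend on the chosen copy. In particular $h$ restricts to $p_X^{-1}$ on each $A_X$ and fixes every element of every guarded set.

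Next I would verify that $h$ is a homomorphism from $\mathcal{U}$ to $\mathcal{A}$. For constants, every guarded set contains the interpretations of all constant symbols, so $c^{\mathcal{A}}\in X$ and hence $c^{\mathcal{A}_X}=p_X(c^{\mathcal{A}})=c^{\mathcal{A}}$ for all $X\in\mathbb{G}$; therefore $c^{\mathcal{U}}=c^{\mathcal{A}}$ and $h(c^{\mathcal{U}})=c^{\mathcal{A}}$. For relations, any tuple in $R^{\mathcal{U}}=\bigcup_{X}R^{\mathcal{A}_X}$ already lies in a single $R^{\mathcal{A}_X}$, and there $h$ agrees with the isomorphism $p_X^{-1}$, so its image lies in $R^{\mathcal{A}}$.

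Finally, since every component of $\vec{a}$ lies in some guarded set $X\in\mathbb{G}$, the second paragraph shows $h(\vec{a})=\vec{a}$. Naming $\vec{a}$ by fresh constant symbols, I would view $\phi[\vec{a}]$ as an existential-positive sentence over the expanded signature; because $h$ fixes $\vec{a}$, it is a homomorphism of the expansions $(\mathcal{U},\vec{a})\to(\mathcal{A},\vec{a})$, so Lemma~\ref{hmclass}(1) gives $(\mathcal{U},\vec{a})\preceq_{\exists^+}(\mathcal{A},\vec{a})$. Hence $\mathcal{U}\models\phi[\vec{a}]$ implies $\mathcal{A}\models\phi[\vec{a}]$, as required. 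The only delicate point is the well-definedness of $h$, which rests precisely on the two defining properties of the family $\{\mathcal{A}_X\}$, namely that $p_X$ fixes $X$ pointwise and that distinct copies overlap only in $X\cap Y$; everything else is a routine homomorphism check.
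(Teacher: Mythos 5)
Your proof is correct and is essentially the paper's own argument: the paper defines the same folding map (there called $p$, sending each element of $\bigcup_{X\in\mathbb{G}}\mathcal{A}_X$ to the element of $\mathcal{A}$ it copies), justifies well-definedness from the same two properties of the family $\{\mathcal{A}_X\}$, and uses $p(\vec{a})=\vec{a}$ to pull the witness tuple back into $\mathcal{A}$. The only difference is presentational: the paper verifies the conjuncts of the witness assignment directly, whereas you package the map as a homomorphism of the expansions $(\mathcal{U},\vec{a})\rightarrow(\mathcal{A},\vec{a})$ and cite Lemma~\ref{hmclass}(1), which is the same computation.
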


Now we are in the position to prove the theorem.\vspace{-.1cm}

\begin{proof}[Sketched Proof of Theorem~\ref{thm:char_d2fgd}] (Only-if) By Proposition~\ref{prop:tuple_iso_copy_prsv}. 

\smallskip
(If) Only address arbitrary structures. A slight modification to the following argument applies to finite structures.

Let $\Sigma$ be a finite set of TGDs preserved under isomorphic unions. We first show that $\Sigma$ is equivalent to a set of {\em diverse dependencies}, each of which is a sentence of the form%\vspace{-.1cm}
\begin{equation}\label{eqn:diverse_dependency}
\forall\vec{x}(\lambda_{\mathrm{una}}(\vec{x})\wedge\phi(\vec{x})\rightarrow\exists\vec{y}\psi(\vec{x},\vec{y}))%\vspace{-.1cm}
\end{equation} 
where 
$\phi$ and $\psi$ are conjunctions of relational atomic formulas, and $\lambda_{\mathrm{una}}(\vec{x})$ denotes $\bigwedge_{1\le i<j\le k}\neg t_i=t_j$ if $t_1,\dots,t_k$ be an enumeration (without repetition) of all constant symbols and universal variables in $\phi$ and $\psi$. It is easy to show

\smallskip
\noindent{\em Claim 1.} $\Sigma$ is equivalent to a finite set of diverse dependencies. 
\smallskip

To present the proof, more notions are needed. 
Let $\sigma$  be a diverse dependency of the form~(\ref{eqn:diverse_dependency}). 
The {\em graph} of $\sigma$ is defined as an undirected graph with each conjunct of $\psi$ as a vertex and with each pair of conjuncts of $\psi$ that share some existential variable as an edge. We say that $\sigma$ is {\em quasi-frontier-guarded} if, for every connected component $\delta$ of the graph of $\sigma$, the set of variables that occurs in both $\delta$ and $\vec{x}$ (the tuple of universal variables of $\sigma$) co-occur in some atomic formula in $\phi$.

Let $\Gamma$ be a finite set of diverse dependencies that is equivalent to $\Sigma$. 
Take $\gamma\in\Gamma$ as a diverse dependency of the form~(\ref{eqn:diverse_dependency}).  
Let $S_\gamma$ denote the set of substitutions, which only map existential variables to some terms in $\gamma$, such that $s({\gamma})$ is a quasi-frontier-guarded diverse dependency. Let $\gamma^\ast$ denote\vspace{-.1cm}
\begin{equation}\label{eqn:specialization}
\forall\vec{x}\left[\lambda_{{\mathrm{una}}}(\vec{x})\wedge\phi(\vec{x})\rightarrow\exists\vec{y}\bigvee_{s\in S_\gamma}s(\psi)(\vec{x},\vec{y})\right]\vspace{-.1cm}
\end{equation}
and let $\Gamma^\ast$ be the set of $\gamma^\ast$ for all $\gamma\in\Gamma$. We want to prove that $\Gamma^\ast$ is equivalent to $\Sigma$. The direction $\Gamma^\ast\vDash\Sigma$ follows from the definition of $\Gamma^\ast$. To show the converse, it suffices to prove

\smallskip
\noindent{\em Claim 2.} $\Sigma\vDash\gamma^\ast$ for all $\gamma\in\Gamma$. \vspace{-.1cm}

\begin{proof}
Let $\mathcal{A}$ be a model of $\Sigma$ and $\vec{a}$ a tuple on $\mathcal{A}$ such that $\mathcal{A}\models\lambda_{\mathrm{una}}[\vec{a}]$ and $\mathcal{A}\models\phi[\vec{a}]$. Let $C$ be the set of all interpretations of constant symbols in $\mathcal{A}$. Let $\mathbb{G}$ be the set of guarded sets of $\mathcal{A}$ such that if $X\in\mathbb{G}$ then all constants in $X\setminus C$ co-occur in an atomic formula in $\phi(\vec{a})$. Let $\mathcal{B}=\bigcup_{X\in\mathbb{G}}\mathcal{A}_{X}$. By definition we know $\mathcal{B}\models\lambda_{\mathrm{una}}[\vec{a}]$ and $\mathcal{B}\models\phi[\vec{a}]$. As $\Sigma$ is preserved under isomorphic unions, $\mathcal{B}$ must be a model of $\Sigma$. Consequently, $\mathcal{B}$ is a model of $\gamma$. We thus have that $\mathcal{B}\models\exists\vec{y}\psi[\vec{a}/\vec{x}]$, i.e., there is a tuple $\vec{b}$ on $\mathcal{B}$ with $\mathcal{B}\models\psi[\vec{a},\vec{b}]$. 

Define a substitution $s$ as follows: Given $i=1,\dots,|\vec{y}|$, let $s(y_i)=c$ if for some constant symbol $c$ with $b_i=c^\mathcal{A}$; if no such $c$ then let $s(y_i)=x_j$ for some $j$ with $b_i=a_j$; if no such $j$ either then let $s(y_i)=y_i$, where $a_i,b_i,x_i,y_i$ denote the $i$-th components of $\vec{a},\vec{b},\vec{x},\vec{y}$, respectively. Clearly $\mathcal{B}\models s(\exists\vec{y}\psi)[\vec{a}/\vec{x}]$. By Lemma~\ref{lem:cq_copyprsv}, we have $\mathcal{A}\models s(\exists\vec{y}\psi)[\vec{a}/\vec{x}]$. By a careful check, one can show $s\in S_\gamma$, i.e., $s(\gamma)$ is quasi-frontier-guarded as desired. We omit the proof here. %Let $\delta(\vec{x},\vec{y})$ be a connected component of the graph of $s(\gamma)$. It suffices to show that all variables occurring in both $\delta$ and $\vec{x}$ co-occur in an atomic formula in $\phi$. Let $x_i,x_j$ be any two distinct variables that occur in both $\vec{x}$ and $\delta$. Since $\delta$ is connected and $\mathcal{B}\models\delta[\vec{a},\vec{b}]$, there must exist some set $X\in\mathbb{G}$ such that $\{a_i,a_j\}\subseteq A_{X}$. Since $A_{X}\cap A=X$, we know $\{a_i,a_j\}\subseteq X$. As $\mathcal{A}\models\lambda_{\mathrm{una}}[\vec{a}]$, there must be a bijection from $\vec{a}$ to $\vec{x}$, and neither $a_i$ nor $a_j$ is the interpretation of any constant symbol. By the definition of $\mathbb{G}$, there is then a conjunct $\alpha$ of $\phi$ such that  $x_i$ and $x_j$ co-occur in $\alpha$. 
\end{proof} \vspace{-.1cm}

With Claim 2, we then have that $\Gamma^\ast$ is equivalent to $\Sigma$. Take $\gamma\in\Gamma$ and suppose $\gamma^\ast$ is of the form~(\ref{eqn:specialization}). It is easy to see that $\gamma^\ast$ can be equivalently rewritten as a sentence $\gamma^\dag$ of the form\vspace{-.1cm}
\begin{equation}
\forall\vec{x}\left[\phi(\vec{x})\rightarrow\bigvee_{s\in S_\gamma}\exists\vec{y}s(\psi)\vee\bigvee_{1\le i<j\le k}x_i=x_j\right]\vspace{-.1cm}
\end{equation}
where $t_1,\dots,t_k$ is an enumeration (without repetition) of all terms in $\gamma$. Let $\Gamma^\dag$ consist of $\gamma^\dag$ for all $\gamma\in\Gamma$, and let $\Delta(\gamma)$ be a set that consists the TGD%\vspace{-.15cm}
\begin{equation}\label{eqn:reduced_TGD}
\forall\vec{x}(\phi(\vec{x})\rightarrow\exists\vec{y}s(\psi))
\end{equation} for all $s\in S_\gamma$, and $\Delta$ the union of $\Delta(\gamma)$ for all $\gamma\in\Gamma$. Let $\mathrm{con}(\Gamma^\dag)$ denote the set of TGDs $\sigma\in\Delta$ such that $\Gamma^\dag\vDash\sigma$. It is easy to see that each TGD in $\mathrm{con}(\Gamma^\dag)$ is equivalent to a finite number of frontier-guarded TGDs. To complete the proof, it is thus sufficient to show the following property:

\smallskip
\noindent{\em Claim 3.} $\mathrm{con}(\Gamma^\dag)$ is equivalent to $\Gamma^\dag$. 
\smallskip

This can be proved by combining the direct-product argument that proves Theorem~\ref{prop:dgd_dp_prsv} with the strictly-homomorphic preimage preservation argument that proves Theorem~\ref{thm:strict_hom_prsv}.
\end{proof}

\subsection{Guarded TGDs}\label{sec:gd}

We say that a sentence $\varphi$ is {\em preserved under disjoint unions [in the finite]} if, for each pair of [finite] models $\mathcal{A}$ and $\mathcal{B}$ of $\varphi$, $\mathcal{A}\cup\mathcal{B}$ is also a model of $\varphi$ if both the following hold: (i) $\mathcal{A}$ and $\mathcal{B}$ have the same interpretations on constant symbols, and
(ii) if $X=A\cap B$ and $X\ne\emptyset$ then $\mathcal{A}|_{X}=\mathcal{B}|_{X}$.

\begin{proposition}\label{prop:gd_duprv}
Every guarded TGD is preserved under disjoint unions [in the finite].
\end{proposition}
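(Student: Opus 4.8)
The plan is to exploit the guard atom to confine all universal-variable values to a single one of the two factor structures, and then to transfer satisfaction of the (positive, equality-free) body back to that factor using the overlap condition~(ii). Write the guarded TGD as $\sigma=\forall\vec{x}(\alpha(\vec{x})\wedge\phi(\vec{x})\rightarrow\exists\vec{y}\,\psi(\vec{x},\vec{y}))$, where $\alpha$ is a single relational guard atom containing every universal variable and $\phi,\psi$ are conjunctions of relational atoms. Let $\mathcal{A}$ and $\mathcal{B}$ be models of $\sigma$ satisfying (i) and (ii), put $\mathcal{C}=\mathcal{A}\cup\mathcal{B}$ and $X=A\cap B$, and suppose $\mathcal{C}\models(\alpha\wedge\phi)[\vec{a}]$ for some assignment $\vec{a}$ of $\vec{x}$; the goal is to derive $\mathcal{C}\models\exists\vec{y}\,\psi[\vec{a}]$, which establishes $\mathcal{C}\models\sigma$.

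First I would use the guard. Since $R^{\mathcal{C}}=R^{\mathcal{A}}\cup R^{\mathcal{B}}$ for the guard's relation symbol $R$, the single tuple witnessing $\alpha[\vec{a}]$ lies entirely in $R^{\mathcal{A}}$ or entirely in $R^{\mathcal{B}}$; by symmetry assume the former. Because $\alpha$ contains all of $\vec{x}$, and because by~(i) every constant is interpreted identically and its value lies in $X\subseteq A$, every value occurring in $\vec{a}$ as well as every constant value belongs to $A$.

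The key step is then to show $\mathcal{A}\models\phi[\vec{a}]$. Each conjunct $\beta[\vec{a}]$ of $\phi$ holds in $\mathcal{C}$, so its tuple lies in $S^{\mathcal{A}}\cup S^{\mathcal{B}}$ for the relevant symbol $S$; if it is already in $S^{\mathcal{A}}$ there is nothing to do, and if it lies in $S^{\mathcal{B}}$ then, since all of its arguments were just shown to lie in $A$ and (being witnessed in $\mathcal{B}$) also lie in $B$, they all lie in $X$, so condition~(ii) ($\mathcal{A}|_{X}=\mathcal{B}|_{X}$) returns the tuple to $S^{\mathcal{A}}$. Hence $\mathcal{A}\models(\alpha\wedge\phi)[\vec{a}]$, and as $\mathcal{A}\models\sigma$ there is a tuple $\vec{b}$ on $\mathcal{A}$ with $\mathcal{A}\models\psi[\vec{a},\vec{b}]$. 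Finally I would push this witness back up: the identity inclusion is a constant-preserving homomorphism $(\mathcal{A},\vec{a})\rightarrow(\mathcal{C},\vec{a})$, so Lemma~\ref{hmclass}(1) gives $(\mathcal{A},\vec{a})\preceq_{\exists^+}(\mathcal{C},\vec{a})$, and since $\exists\vec{y}\,\psi$ is an existential-positive sentence over the expanded signature that holds in $(\mathcal{A},\vec{a})$, it holds in $(\mathcal{C},\vec{a})$, i.e.\ $\mathcal{C}\models\exists\vec{y}\,\psi[\vec{a}]$. The finite case is verbatim the same, since a union of finite structures is finite.

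The main obstacle is precisely the key step: condition~(ii) is exactly what repairs a body atom that happens to be witnessed in the ``wrong'' factor $\mathcal{B}$, and guardedness is exactly what guarantees the arguments of that atom sit inside the overlap $X$ so that~(ii) can be applied. A small point of care worth checking separately is the degenerate configuration in which $X=\emptyset$ and a body atom is nullary, since then~(ii) is vacuous; this is the one place where the interaction between the guard, the arities of the body atoms, and the emptiness of the overlap must be examined explicitly rather than absorbed into the generic argument above.
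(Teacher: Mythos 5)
Your main argument is the natural one, and in fact the paper offers no proof of this proposition at all (it is stated as a routine check, and is then relied on, e.g., in the appendix proof of Theorem~\ref{thm:char_tgd2gd}), so yours is the only worked-out argument to assess. Its steps are sound: since $R^{\mathcal{A}\cup\mathcal{B}}=R^{\mathcal{A}}\cup R^{\mathcal{B}}$, the guard tuple lands entirely in one factor, say $\mathcal{A}$; because the guard contains all universal variables and constants are shared, every relevant value lies in $A$; each remaining body atom witnessed in $\mathcal{B}$ has all its arguments in $A\cap B=X$ and is pulled back into $\mathcal{A}$ by condition (ii); then $\sigma$ is applied in $\mathcal{A}$ and the head witness is pushed into the union by monotonicity of positive formulas.

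The genuine gap is exactly the case you flagged and then set aside: a nullary body atom together with $X=\emptyset$. This cannot be ``examined and handled'' --- it is a counterexample, and the paper's conventions do permit nullary relation symbols (they are used explicitly in Example~\ref{exm:dproduct_prv_cntexp}). Let $S$ be nullary and let $\sigma$ be the guarded TGD $S\wedge R(x)\rightarrow P(x)$, whose guard $R(x)$ contains the only universal variable. Take $\mathcal{A}$ with domain $\{a\}$, $R^{\mathcal{A}}=\{a\}$, $S^{\mathcal{A}}=false$, $P^{\mathcal{A}}=\emptyset$, and $\mathcal{B}$ with domain $\{b\}$, $R^{\mathcal{B}}=\emptyset$, $S^{\mathcal{B}}=true$, $P^{\mathcal{B}}=\emptyset$. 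Both are finite models of $\sigma$ (in each, the body is unsatisfiable), there are no constant symbols, and $A\cap B=\emptyset$, so conditions (i) and (ii) hold vacuously; yet in $\mathcal{A}\cup\mathcal{B}$ the body holds at $a$ while $P^{\mathcal{A}\cup\mathcal{B}}=\emptyset$, so the union is not a model, even in the finite. Consequently your proof is complete precisely under an extra hypothesis: either every body atom has positive arity, or $X\neq\emptyset$ is guaranteed (for instance by a constant symbol in the signature, since then $c^{\mathcal{A}}=c^{\mathcal{B}}\in X$ and condition (ii) forces the nullary relations of $\mathcal{A}$ and $\mathcal{B}$ to agree, after which your condition-(ii) step also covers nullary atoms). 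The correct conclusion is thus not that your argument is missing an idea, but that the proposition as literally stated needs such a restriction; a finished write-up should impose it explicitly rather than defer the degenerate case.
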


The following example shows that the above property separates guarded TGDs from frontier-guarded TGDs.

\begin{example}\label{exm:gdu_prsv_cntexm}
Let $\sigma$ be the following frontier-guarded TGD:\vspace{-.1cm}
\begin{equation}
{E}(x,y)\wedge{E}(y,z)\rightarrow{C}(y)\vspace{-.1cm}
\end{equation}
and let $\tau=\{{C},{E}\}$. Let $\mathcal{A}$ and $\mathcal{B}$ be $\tau$-structures defined by:\vspace{-.05cm}
\begin{itemize}
\item the domain of $\mathcal{A}$ is $\{a,b\}$ and the domain of $\mathcal{B}$ is $\{b,c\}$;\vspace{-.05cm}
\item ${C}^{\mathcal{A}}={C}^{\mathcal{B}}=\emptyset$, ${E}^{\mathcal{A}}=\{(a,b)\}$, and ${E}^{\mathcal{B}}=\{(b,c)\}$.\vspace{-.05cm}
\end{itemize}
Let $X=A\cap B=\{b\}$. Clearly, $\mathcal{A}|_{X}=\mathcal{B}|_{X}$. By definition, $\mathcal{A}\cup\mathcal{B}$ is a $\tau$-structure with $\{a,b,c\}$ as domain, interpreting $C$ as $\emptyset$, and interpreting $E$ as $\{(a,b),(b,c)\}$.
It is easy to see that both $\mathcal{A}$ and $\mathcal{B}$ are models of $\sigma$, but $\mathcal{A}\cup\mathcal{B}$ is not. So, $\sigma$ is not preserved under disjoint unions even in the finite. 
\end{example}

Now, let us present the desired characterization.

\begin{theorem}\label{thm:char_tgd2gd}
A finite set of TGDs is equivalent to a finite set of guarded TGDs [over finite structures]
iff it is preserved under disjoint unions [in the finite]. 
\end{theorem}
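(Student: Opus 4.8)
The plan is to follow the architecture of the proof of Theorem~\ref{thm:char_d2fgd}, replacing isomorphic unions by disjoint unions and frontier-guardedness by guardedness throughout. The only-if direction is immediate from Proposition~\ref{prop:gd_duprv}. For the if direction, let $\Sigma$ be a finite set of TGDs preserved under disjoint unions. As in \emph{Claim~1} of Theorem~\ref{thm:char_d2fgd}, I would first rewrite $\Sigma$ as an equivalent finite set $\Gamma$ of diverse dependencies of the form~(\ref{eqn:diverse_dependency}); this step is purely syntactic and independent of the union operation. The target structural property is now that a single body atom guard \emph{all} the universal variables: call a diverse dependency \emph{quasi-guarded} if the tuple $\vec{x}$ of its universal variables occurs within one atomic formula of its body $\phi$.

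The engine of the argument is the disjoint-union analogue of Lemma~\ref{lem:cq_copyprsv}: if $\mathcal{A}$ and $\mathcal{B}$ agree on $X=A\cap B$ and $\phi$ is a \emph{connected} CQ with $\mathcal{A}\cup\mathcal{B}\models\phi[\vec{a}]$, then every atom of a witnessing match is realized entirely inside $\mathcal{A}$ or entirely inside $\mathcal{B}$, and the two sides can meet only at elements of $X$. This locality is what a single guard atom exploits: since a guard is a \emph{single} tuple it cannot straddle the two components, so once the guard is fixed inside one side the whole of $\vec{x}$ is trapped there.

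The heart of the proof is the analogue of \emph{Claim~2}, namely $\Sigma\vDash\gamma^{\ast}$ for every $\gamma\in\Gamma$, where $\gamma^{\ast}$ is a guarded specialization of $\gamma$. Given a model $\mathcal{A}$ of $\Sigma$ and a body match $\vec{a}$ with $\mathcal{A}\models\lambda_{\mathrm{una}}[\vec{a}]\wedge\phi[\vec{a}]$, I would build a disjoint union $\mathcal{B}$ by gluing one isomorphic copy of $\mathcal{A}$ per body atom, each copy sharing with the others only the guarded set of elements named by the constants and by the common $\vec{a}$-entries of the two atoms; the agreement condition $\mathcal{A}_{i}|_{X}=\mathcal{A}_{j}|_{X}$ holds automatically because all copies are isomorphic to $\mathcal{A}$ via maps fixing the shared elements. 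Each body atom $\beta_{i}[\vec{a}]$ survives in copy $i$, so $\mathcal{B}\models\phi[\vec{a}]$, and by disjoint-union preservation $\mathcal{B}\models\Sigma$; hence $\mathcal{B}$ witnesses the head. Applying the connected-CQ lemma above to each connected component of the head confines its fresh witnesses, up to the shared elements, to a single copy, so that the universal variables actually feeding that component are pushed into the guarded set of one body atom. Reading off this collapse yields a substitution $s$ that both reduces existential variables, as in the frontier-guarded case, and instantiates the \emph{non-frontier} universal body variables onto terms of that guard atom; by instantiation $s(\gamma)$ is a logical consequence of $\gamma$, and by construction it is quasi-guarded. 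Taking $\gamma^{\ast}$ to be the disjunction over all such admissible $s$ gives $\Sigma\vDash\gamma^{\ast}$, while $\gamma^{\ast}\vDash\Sigma$ is immediate.

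Finally, exactly as in \emph{Claim~3} of Theorem~\ref{thm:char_d2fgd}, I would rewrite each $\gamma^{\ast}$ with explicit equality disjuncts, extract the finite family $\Delta$ of candidate guarded TGDs, and show that the closure of $\Delta$ under $\Sigma$-consequence is equivalent to $\Sigma$ by combining the direct-product argument of Theorem~\ref{prop:dgd_dp_prsv} with the strictly-homomorphic-preimage argument of Theorem~\ref{thm:strict_hom_prsv}; since every structure built above is finite whenever $\mathcal{A}$ is, the same argument yields the finite version. The hard part is the step that the frontier-guarded proof never meets: guardedness constrains \emph{all} universal variables, including non-frontier body variables that never reach the head, so substituting existential variables alone cannot make $\gamma$ quasi-guarded (the dependency $E(x,y)\wedge E(x,z)\to P(x)$, equivalent to the guarded $E(x,y)\to P(x)$ only after instantiating the non-frontier variable $z$, already exhibits this). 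The disjoint-union construction must therefore split the body at \emph{every} atom boundary while still respecting the agreement condition on all pairwise intersections, and one must verify that the induced instantiation of the non-frontier variables is genuinely consequence-preserving rather than merely consistent; the failure in Example~\ref{exm:gdu_prsv_cntexm}, where the body match irreducibly straddles two atoms through a shared element, is the guide for why no coarser splitting can succeed and is where the main technical effort will lie.
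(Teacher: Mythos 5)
Your proposal reduces, after the mimicry of Theorem~\ref{thm:char_d2fgd}, to two entailment claims about the specializations $s(\gamma)$, and these claims cannot both hold; the one you call ``immediate'' is false. When $s$ instantiates a \emph{non-frontier universal} variable, it specializes the body, so $s(\gamma)$ is a universal instantiation of $\gamma$: we get $\gamma\vDash s(\gamma)$, i.e.\ $s(\gamma)$ is \emph{weaker} than $\gamma$, and a disjunction of such weakenings cannot recover $\gamma$. Concretely, for $\gamma = E(x,y)\wedge F(x,z)\rightarrow P(x)$ and $s(z)=y$, the specialization $E(x,y)\wedge F(x,y)\rightarrow P(x)$ holds vacuously in the structure with facts $E(a,b)$ and $F(a,c)$, while $\gamma$ fails there; so $\gamma^\ast\vDash\Sigma$ breaks. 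This is exactly opposite to the frontier-guarded proof you are copying, where $s$ touches only existential (head) variables, every disjunct $s(\psi)$ strengthens the head, $\gamma^\ast\vDash\gamma$ is the trivial direction, and $\Sigma\vDash\gamma^\ast$ is the hard one. The defect is also not repairable dependency-by-dependency: for $\Sigma=\{E(x,y)\wedge F(x,z)\rightarrow P(x),\ E(x,y)\rightarrow F(x,y)\}$, the first rule is equivalent to the guarded $E(x,y)\rightarrow P(x)$ only \emph{modulo} the second rule, so a correct argument must be phrased in terms of $\Sigma$-consequences, never in terms of each $\gamma$ being equivalent to its own guarded specializations. Finally, your glued model gives no handle at all on the variables that need to be tamed: the head witness and the connected-component locality confine the \emph{frontier} variables of each head component to one copy, but universal body variables that never reach the head are unconstrained by the witness, so there is nothing principled to ``read off'' for them. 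In short, the mimicry re-proves quasi-frontier-guardedness (Theorem~\ref{thm:char_d2fgd} again), and the step you defer as ``the main technical effort'' is the entire content of the present theorem.

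The paper closes this gap by changing the argument rather than pushing the substitution idea. It proves the theorem in two stages. First, since disjoint-union preservation implies isomorphic-union preservation (copies $\mathcal{A}_X$, $\mathcal{A}_Y$ agree on $X\cap Y$), Theorem~\ref{thm:char_d2fgd} is invoked as a black box to replace $\Sigma$ by an equivalent finite set of frontier-guarded TGDs. Second, it shows that a finite set $\Sigma$ of \emph{frontier-guarded} TGDs preserved under disjoint unions is equivalent to $\mathrm{con}(\Sigma)$, the set of all guarded TGDs of rank at most $k$ entailed by $\Sigma$ (finite up to equivalence), by a type-based argument: given a [finite] model $\mathcal{A}$ of $\mathrm{con}(\Sigma)$ and a guarded assignment $s$, a compactness-style claim produces a model $\mathcal{B}_s$ of $\Sigma$ --- not a copy of $\mathcal{A}$ --- realizing at $s$ the same atomic facts and the same failed rank-$\le k$ CQs as $\mathcal{A}$ (otherwise $\neg\gamma_s$ would be a guarded TGD in $\mathrm{con}(\Sigma)$ violated by $\mathcal{A}$); the $\mathcal{B}_s$ are glued by a disjoint union into $\mathcal{B}\models\Sigma$, and a transfer claim shows $\mathcal{A}$ and $\mathcal{B}$ satisfy the same rank-$\le k$ CQs at guarded assignments. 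Frontier-guardedness is then used exactly once and crucially: each $\sigma\in\Sigma$ can be rewritten as $\forall\vec{u}(\alpha(\vec{u})\wedge\phi(\vec{u})\rightarrow\psi(\vec{u}))$ with $\phi,\psi$ CQs over the guard variables (possible because the frontier lies inside the guard), so satisfaction of $\sigma$ transfers from $\mathcal{B}$ to $\mathcal{A}$. Your single-stage plan has no analogue of this last rewriting for general TGDs, which is why the non-frontier variables remain out of reach in your construction.
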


The general idea of proving the hard direction is as follows: First show that every finite set of frontier-guarded TGDs preserved under disjoint unions [in the finite] is equivalent to a finite set of guarded TGDs [over finite structures]. As the disjoint union preservation always implies the isomorphic union preservation, by  Theorem~\ref{thm:char_d2fgd}, we then have the desired result.

\subsection{Linear TGDs}

Every sentence $\varphi$ is said to be {\em preserved under unions [in the finite]} if, for all [finite] models $\mathcal{A}$ and $\mathcal{B}$ of $\varphi$ with the same interpretations on constant symbols,  $\mathcal{A}\cup\mathcal{B}$ is a model of $\varphi$. 

\smallskip
The following theorem was obtained by~\citeauthor{CateFK2015}:

\begin{theorem}[\cite{CateFK2015}]\label{thm:char_tgd2ld_finite}
A finite set of TGDs is equivalent to a finite set of linear TGDs over finite structures iff it is preserved under unions in the finite.
\end{theorem}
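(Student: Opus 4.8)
(Only-if) A linear TGD $\forall\vec{x}(\alpha(\vec{x})\to\exists\vec{y}\,\psi(\vec{x},\vec{y}))$ has a single body atom $\alpha$. Since $R^{\mathcal{A}\cup\mathcal{B}}=R^{\mathcal{A}}\cup R^{\mathcal{B}}$ for every relation symbol $R$, any instance $\alpha(\vec{a})$ holding in $\mathcal{A}\cup\mathcal{B}$ already holds in $\mathcal{A}$ or in $\mathcal{B}$; as $\mathcal{A}$ and $\mathcal{B}$ are models and each is a substructure of $\mathcal{A}\cup\mathcal{B}$, a head witness for $\alpha(\vec{a})$ found in the relevant component survives in $\mathcal{A}\cup\mathcal{B}$. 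Hence every linear TGD, and thus every finite set of linear TGDs, is preserved under unions in the finite.

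(If) The plan is to mirror the two-step reduction behind Theorem~\ref{thm:char_tgd2gd}. Since disjoint unions are a special case of unions, preservation under unions in the finite entails preservation under disjoint unions in the finite; by Theorem~\ref{thm:char_tgd2gd} the given set $\Sigma$ is therefore equivalent over finite structures to a finite set of guarded TGDs, so I may assume from the outset that $\Sigma$ is guarded. It then remains to prove the key step: every finite set of guarded TGDs preserved under unions in the finite is equivalent, over finite structures, to a finite set of linear TGDs.

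For the key step I would follow the template of the proof of Theorem~\ref{thm:char_d2fgd}, with the guard atom now playing the role of the single admissible body. Fix a guarded TGD $\sigma=\forall\vec{x}(\alpha(\vec{x})\wedge\phi(\vec{x})\to\exists\vec{y}\,\psi(\vec{x},\vec{y}))\in\Sigma$ whose guard $\alpha$ contains all universal variables. To $\sigma$ I would attach a single-body-atom dependency $\sigma^{\star}$ whose body is just $\alpha(\vec{x})$ and whose head is the disjunction, over a suitable finite set $S_{\sigma}$ of specializations $s$ collapsing existential variables onto frontier variables and constants, of the conjunctive queries $\exists\vec{y}\,s(\psi)$. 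Soundness, namely $\Sigma\vDash_{\mathrm{fin}}\sigma^{\star}$, is the analogue of Claim~2 in the proof of Theorem~\ref{thm:char_d2fgd}: given a finite model $\mathcal{K}$ of $\Sigma$ and a tuple $\vec{a}$ with $\mathcal{K}\models\alpha[\vec{a}]$, I would form $\mathcal{N}=\mathcal{K}\cup\mathcal{K}'$, where $\mathcal{K}'$ is a finite model of $\Sigma$ supplying the remaining body facts $\phi(\vec{a})$ over the elements of $\vec{a}$ and the constants. Union preservation gives $\mathcal{N}\models\Sigma$, and since the full body $\alpha(\vec{a})\wedge\phi(\vec{a})$ now holds, $\sigma$ fires in $\mathcal{N}$; a head witness $\vec{b}$ then determines a specialization $s\in S_{\sigma}$. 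Because a single atom of $\mathcal{N}$ lies wholly in $\mathcal{K}$ or wholly in $\mathcal{K}'$, which is the union counterpart of Lemma~\ref{lem:cq_copyprsv}, the connected components of the head witnessed inside $\mathcal{K}$ pull back directly, while those witnessed on the shared frontier collapse, under $s$, to conjuncts over $\vec{a}$ and the constants, and one aims to verify that the resulting disjunct $\exists\vec{y}\,s(\psi)$ holds in $\mathcal{K}$. Conversely, since each disjunct $s(\psi)$ is an instance of $\psi$, any finite model of $\Sigma^{\star}=\{\sigma^{\star}:\sigma\in\Sigma\}$ satisfying the full body of $\sigma$ also satisfies $\exists\vec{y}\,\psi$; hence $\Sigma^{\star}\vDash_{\mathrm{fin}}\Sigma$, so $\Sigma^{\star}$ is equivalent to $\Sigma$ over finite structures.

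Finally, each $\sigma^{\star}$ is a single-body-atom dependency with a disjunctive head, and the disjunctions are removed exactly as in Claim~3 of the proof of Theorem~\ref{thm:char_d2fgd}: combining the direct-product argument behind Theorem~\ref{prop:dgd_dp_prsv} with the strictly-homomorphic-preimage argument behind Theorem~\ref{thm:strict_hom_prsv} shows that the finite set of genuine linear TGDs entailed by $\Sigma^{\star}$ is already equivalent to $\Sigma^{\star}$, and hence to $\Sigma$. I expect the main obstacle to be the soundness step. One must choose the auxiliary model $\mathcal{K}'$ so that it is simultaneously a finite model of $\Sigma$, supplies precisely the body facts $\phi(\vec{a})$ needed to trigger $\sigma$, and introduces no spurious facts on the frontier that the head could exploit in $\mathcal{N}$ but that fail in $\mathcal{K}$; designing $\mathcal{K}'$ together with the specialization set $S_{\sigma}$ so that every firing of $\sigma$ in $\mathcal{N}$ collapses to a disjunct provably holding in $\mathcal{K}$, all while keeping every structure finite, is the crux of the argument.
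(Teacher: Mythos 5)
Your (only-if) direction is fine, and so is your opening reduction: preservation under unions in the finite implies preservation under disjoint unions in the finite, so Theorem~\ref{thm:char_tgd2gd} lets you assume $\Sigma$ is guarded (the paper makes the parallel move in its proof of the arbitrary-structures analogue, Theorem~\ref{thm:char_tgd2ld}, reducing to frontier-guarded TGDs via Theorem~\ref{thm:char_d2fgd}). The gap is in your key step, and it is not just an unfinished detail: the soundness claim $\Sigma\vDash_{\mathrm{fin}}\sigma^{\star}$ is false as stated. Take $\Sigma=\{\,G(x)\wedge P(x)\rightarrow\exists y\,R(x,y),\ \ P(x)\rightarrow\exists y\,R(x,y)\,\}$. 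This is a finite set of guarded TGDs, and it is preserved under unions (in the finite and in general) because it is logically equivalent to the linear set $\{P(x)\rightarrow\exists y\,R(x,y)\}$. Let $\sigma$ be the first rule, with guard $\alpha=G(x)$ and extra body atom $P(x)$. Every disjunct of your $\sigma^{\star}$ is a specialization $\exists y\,s(R(x,y))$, i.e.\ one of $\exists y\,R(x,y)$, $R(x,x)$, or $R(x,c)$ for a constant symbol $c$; but the finite structure $\mathcal{K}$ with domain $\{a\}$, $G^{\mathcal{K}}=\{a\}$, $P^{\mathcal{K}}=R^{\mathcal{K}}=\emptyset$ is a model of $\Sigma$ that satisfies $G(a)$ and none of these disjuncts. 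So no choice of $S_{\sigma}$ rescues $\Sigma\vDash_{\mathrm{fin}}\sigma^{\star}$. The underlying reason is exactly the obstacle you flagged but could not resolve: the companion model $\mathcal{K}'$ supplying $\phi(\vec a)$ cannot be kept free of ``spurious'' consequences, because the linear rewriting of $\Sigma$ need not fire on $\sigma$'s own guard at all --- the extra body atoms may be genuinely needed for $\sigma$ to fire, while $\sigma$ itself is subsumed by consequences of $\Sigma$ with smaller bodies. A rule-by-rule syntactic rewriting whose body is the rule's own guard is the wrong shape of object.

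The paper's technique (in the proofs of Theorem~\ref{thm:char_tgd2gd} and Theorem~\ref{thm:char_tgd2ld}, which is also the architecture behind the cited result) avoids this by never building the rewriting from individual rules. It defines the candidate semantically: $\mathrm{con}(\Sigma)$ is the set of \emph{all} linear (resp.\ guarded) TGDs of rank $\le k$ entailed by $\Sigma$, which is finite up to equivalence. It then takes an arbitrary finite model $\mathcal{A}$ of $\mathrm{con}(\Sigma)$ and, for each pair $\varpi=(\alpha,s)$ of a relational atom and an assignment with $\mathcal{A}\models\alpha[s]$, produces a companion model $\mathcal{B}_{\varpi}$ of $\Sigma$ satisfying $\alpha[s]$ but satisfying \emph{no} CQ of rank $\le k$ that fails at $s$ in $\mathcal{A}$. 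The existence of $\mathcal{B}_{\varpi}$ is not by construction but by contradiction (the paper's Claim~1): if $\Sigma\cup\{\gamma_{\varpi}\}$ were unsatisfiable, then $\neg\gamma_{\varpi}$ would be an entailed linear dependency of rank $\le k$, hence in $\mathrm{con}(\Sigma)$, contradicting that $\mathcal{A}\models\mathrm{con}(\Sigma)$ while $\mathcal{A}\models\gamma_{\varpi}$. This ``no spurious CQs'' constraint --- precisely what your $\mathcal{K}'$ lacks --- is what allows the union $\mathcal{B}=\bigcup_{\varpi}\mathcal{B}_{\varpi}$ (a model of $\Sigma$ by union preservation) to exchange satisfaction of rank-$\le k$ CQs with $\mathcal{A}$ (Claim~2), whence $\mathcal{A}\models\Sigma$. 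Note also that in this architecture your final disjunction-elimination step (via Theorem~\ref{prop:dgd_dp_prsv} and Theorem~\ref{thm:strict_hom_prsv}) is unnecessary, since $\mathrm{con}(\Sigma)$ consists of linear TGDs from the outset.
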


To separate the class of linear TGDs from guarded TGDs, a simple example is presented as follows: 

\begin{example}
Let $\sigma$ denote the following guarded TGD:\vspace{-.1cm}
\begin{equation}
{P}(x)\wedge{Q}(x)\rightarrow{R}(x).\vspace{-.1cm}
\end{equation}
Let $\tau$ denote $\{{P},{Q},{R}\}$. Let $\mathcal{A}$ and $\mathcal{B}$ be $\tau$-structures with the same domain $\{a\}$ such that ${P}^{\mathcal{A}}={Q}^{\mathcal{B}}={R}^{\mathcal{A}}={R}^{\mathcal{B}}=\emptyset$ and ${P}^{\mathcal{B}}={Q}^{\mathcal{A}}=\{a\}$. Then it is obvious that both $\mathcal{A}$ and $\mathcal{B}$ are models of $\sigma$. However, $\mathcal{A}\cup\mathcal{B}$ does not satisfy $\sigma$. Therefore, $\sigma$ is not preserved under unions even in the finite.
\end{example}

It is worth noting that \citeauthor{CateFK2015}'s proof of Theorem~\ref{thm:char_tgd2ld_finite} does not work over arbitrary structures. Fortunately, thanks to Theorem~\ref{thm:char_d2fgd} and the finite model property of frontier-guarded TGDs, we are able to show the following characterization:

\begin{theorem}\label{thm:char_tgd2ld}
A finite set of TGDs is equivalent to a finite set of linear TGDs iff it is preserved under unions.
\end{theorem}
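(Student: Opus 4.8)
The plan is to reduce the characterization over arbitrary structures to the finite case settled by \citeauthor{CateFK2015} (Theorem~\ref{thm:char_tgd2ld_finite}), and then to promote the resulting finite equivalence up to arbitrary structures. The leverage for that last step is Theorem~\ref{thm:char_d2fgd}: it lets me replace the given $\Sigma$ by a frontier-guarded set, and frontier-guarded TGDs enjoy the finite model property (finite controllability of conjunctive-query answering), which is exactly what is needed to equate finite and unrestricted entailment.

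The only-if direction is routine and I would dispatch it first. Each linear TGD is preserved under unions: if its single body atom $R(\vec{t})$ is instantiated by a tuple in $\mathcal{A}\cup\mathcal{B}$, then, since $R^{\mathcal{A}\cup\mathcal{B}}=R^{\mathcal{A}}\cup R^{\mathcal{B}}$, that tuple already lies in $R^{\mathcal{A}}$ or in $R^{\mathcal{B}}$; by safeness the whole body is instantiated in the corresponding structure, so the existential head is witnessed there and, being existential-positive, survives the homomorphic embedding into $\mathcal{A}\cup\mathcal{B}$ by Lemma~\ref{hmclass}(1). Preservation under unions is closed under finite conjunction and is invariant under logical equivalence, so any finite set of TGDs equivalent to a finite set of linear TGDs is preserved under unions.

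For the if direction, let $\Sigma$ be a finite set of TGDs preserved under unions. I would first note that union preservation implies isomorphic-union preservation: every isomorphic copy $\mathcal{A}_X$ of a model $\mathcal{A}$ is again a model, all copies agree on the constant symbols (each guarded set contains the constant interpretations and each $p_X$ fixes them), and a finite isomorphic union is an iterated binary union of such models. Hence Theorem~\ref{thm:char_d2fgd} yields a finite set $\Sigma'$ of frontier-guarded TGDs logically equivalent to $\Sigma$. Independently, union preservation over arbitrary structures entails union preservation in the finite, so Theorem~\ref{thm:char_tgd2ld_finite} produces a finite set $\Lambda$ of linear TGDs with $\Sigma\vDash_{\mathrm{fin}}\sigma$ for every $\sigma\in\Lambda$ and $\Lambda\vDash_{\mathrm{fin}}\tau$ for every $\tau\in\Sigma$. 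The remaining task is to upgrade these finite entailments to entailments over all structures. Here I would use the standard reduction of TGD entailment to Boolean conjunctive-query entailment: for a finite set $\Gamma$ of TGDs and a single TGD $\rho=\forall\vec{x}(\phi\rightarrow\exists\vec{y}\,\psi)$, one has $\Gamma\vDash\rho$ iff $D_\rho,\Gamma\vDash q_\rho$, where $D_\rho$ is the finite instance obtained by freezing $\phi$ over fresh constants $\vec{c}$ (and interpreting the signature constants as fresh distinct elements) and $q_\rho$ is the Boolean conjunctive query $\exists\vec{y}\,\psi(\vec{c},\vec{y})$; the same equivalence holds with $\vDash$ replaced by $\vDash_{\mathrm{fin}}$. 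By the finite model property of frontier-guarded TGDs (finite controllability, due to B\'ar\'any, Gottlob and Otto), whenever $\Gamma$ is frontier-guarded one has $D_\rho,\Gamma\vDash q_\rho$ iff $D_\rho,\Gamma\vDash_{\mathrm{fin}} q_\rho$, so that $\Gamma\vDash\rho$ iff $\Gamma\vDash_{\mathrm{fin}}\rho$. Applying this with $\Gamma=\Sigma'$ (frontier-guarded) to each $\sigma\in\Lambda$ turns $\Sigma\vDash_{\mathrm{fin}}\sigma$ into $\Sigma\vDash\sigma$, and applying it with $\Gamma=\Lambda$ (linear, hence frontier-guarded) to each $\tau\in\Sigma$ turns $\Lambda\vDash_{\mathrm{fin}}\tau$ into $\Lambda\vDash\tau$. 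Thus $\Sigma$ and $\Lambda$ have the same models, as required.

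I expect the transfer step to be the main obstacle, and specifically the careful packaging of TGD entailment as a certain-answer problem so that finite controllability is genuinely applicable: one must freeze bodies and heads over fresh constants, check that $D_\rho$ is finite and that the reduction correctly handles the signature constants, and confirm that the invoked finite-model property is the form that equates \emph{unrestricted and finite Boolean conjunctive-query entailment}, rather than mere satisfiability. The two remaining ingredients---the union-to-isomorphic-union implication and the two invocations of Theorems~\ref{thm:char_d2fgd} and~\ref{thm:char_tgd2ld_finite}---are essentially bookkeeping.
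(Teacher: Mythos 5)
Your proof is correct, but it takes a genuinely different route from the paper's. The paper never invokes Theorem~\ref{thm:char_tgd2ld_finite}: after the same first move (union preservation implies isomorphic-union preservation, so Theorem~\ref{thm:char_d2fgd} replaces $\Sigma$ by an equivalent frontier-guarded set), it constructs the linear rewriting \emph{directly}. It takes $\mathrm{con}(\Sigma)$ to be the (up to equivalence, finitely many) linear TGDs of rank at most $k$ entailed by $\Sigma$, notes that $\mathrm{con}(\Sigma)\wedge\neg\sigma$ is expressible in GNFO so that the GNFO finite model property reduces $\mathrm{con}(\Sigma)\vDash\sigma$ to $\mathrm{con}(\Sigma)\vDash_{\mathrm{fin}}\sigma$, and then proves that finite entailment by an explicit countermodel-union construction (its Claims 1 and 2), closely mirroring its proof of the guarded characterization (Theorem~\ref{thm:char_tgd2gd}). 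You instead use the finite-structure result of \citeauthor{CateFK2015} as a black box to obtain a linear $\Lambda$ with $\Sigma\equiv_{\mathrm{fin}}\Lambda$, and then upgrade both finite entailments to unrestricted ones via finite controllability of Boolean CQ entailment under frontier-guarded TGDs, with $\Sigma'$ from Theorem~\ref{thm:char_d2fgd} serving as the frontier-guarded proxy for $\Sigma$ on one side and linearity of $\Lambda$ on the other. Both arguments rest on the same two pillars (Theorem~\ref{thm:char_d2fgd} and the finite model property of GNFO/frontier-guarded TGDs); yours is shorter and more modular, but imports ten Cate et al.'s theorem and the finite-controllability packaging of the FMP, whereas the paper's is self-contained, independent of their proof, and additionally yields a rank-bounded explicit form of the linear rewriting. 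One small correction to your freezing step: the signature constants should simply be kept as constant symbols in $D_\rho$ and $q_\rho$, not reinterpreted as fresh distinct domain elements; imposing such distinctness is not part of the standard reduction (though, because TGDs and CQs here are equality-free, a splitting argument shows it would not change the entailment).
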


\section{Application: Complexity of Rewritability}\label{sec:app}

As a direct application, we use the obtained model-theoretic characterizations to identify complexity bounds of language rewritability. Let $\textsc{PTime}$ (resp., \textsc{PSpace}, \textsc{2ExpTime}) denote the class of languages accepted by some deterministic Turing machine in a polynomial time (resp., polynomial space, double-exponential time). By (\textsc{co})\textsc{RE} we mean (the complement of) the class of recursively enumerable languages.

Let $\mathsf{FO}$ denote the class of all first-order sentences. Let $\mathsf{GD}$ (resp., $\mathsf{DED}$, $\mathsf{ED}$, $\mathsf{TGD}$, $\mathsf{FGTGD}$, $\mathsf{GTGD}$ and $\mathsf{LTGD}$) denote the class of all finite sets of GDs (resp., DEDs, EDs, TGDs, frontier-guarded TGDs, guarded TGDs and linear TGDs). 

\begin{figure}
\begin{center}
\includegraphics[width=1.01\columnwidth,viewport=1 0 795 190,clip=true]{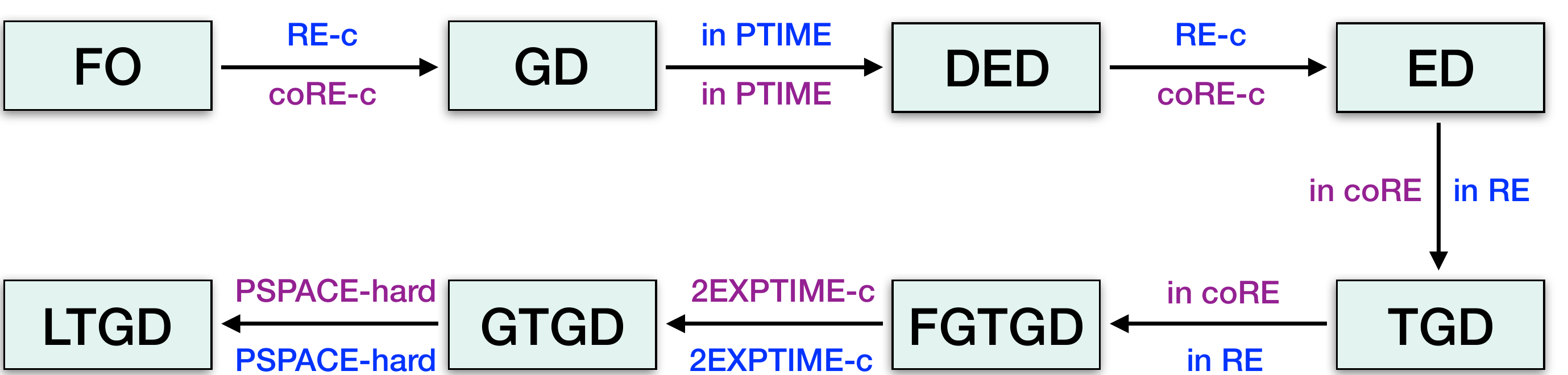}
\end{center}\vspace{-.25cm}
\caption{Complexity of Rewritability}\label{fig:rewritability}\vspace{-.25cm}
\end{figure}

\begin{theorem}\label{thm:complexity_rewritability}
The complexity of rewritability for the above existential rule languages is illustrated in Figure~\ref{fig:rewritability}, where, along each arrow, the bound colored in blue indicates the complexity over arbitrary structures, and the bound colored in purple indicates the complexity over finite structures. 
\end{theorem}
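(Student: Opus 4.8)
The plan is to treat Figure~\ref{fig:rewritability} arrow by arrow: each arrow from a language $\mathsf{L}$ to a language $\mathsf{L}'$ encodes the \emph{rewritability problem} of deciding, given a finite $\Sigma\in\mathsf{L}$, whether $\Sigma$ is equivalent to some finite set in $\mathsf{L}'$, and the whole theorem is the conjunction of the stated bounds for these individual problems. The unifying device is that every such arrow is governed by one of the model-theoretic characterizations proved above, so rewritability reduces to testing the corresponding semantic preservation property. Concretely, $\mathsf{FO}\to\mathsf{GD}$ is controlled by Theorem~\ref{thm:ghom_prsv}, the step into $\mathsf{DED}$ by Proposition~\ref{prop:trivial_sharp}, $\mathsf{DED}\to\mathsf{ED}$ by Theorem~\ref{prop:dgd_dp_prsv}, $\mathsf{ED}\to\mathsf{TGD}$ by Theorem~\ref{thm:strict_hom_prsv}, and the three guarded-based arrows $\mathsf{TGD}\to\mathsf{FGTGD}$, $\to\mathsf{GTGD}$, $\to\mathsf{LTGD}$ by Theorems~\ref{thm:char_d2fgd}, \ref{thm:char_tgd2gd} and~\ref{thm:char_tgd2ld} (with Theorem~\ref{thm:char_tgd2ld_finite} in the finite). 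I would then establish the upper and lower bounds separately, carrying the arbitrary- and finite-structure analyses in parallel since almost all characterizations hold in both settings.

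For the upper bounds I would exploit that each characterization is ``effective'' to the degree dictated by the target bound. The $\mathsf{DED}$ step is cheapest: by Proposition~\ref{prop:trivial_sharp} rewritability is equivalent to the existence of a trivial model and a sharp model, which are fixed tiny structures, so the test reduces to model-checking against them together with a satisfiability check, yielding the low (\textsc{PTime}/\textsc{PSpace}) bounds. For the guarded-based arrows I would use the finite model property and the decidability of reasoning in the (frontier-)guarded fragments to turn the union-preservation characterizations into terminating procedures: compute a candidate rewriting of bounded size—namely the guarded consequences obtained through the normal-form passage to \emph{diverse} and \emph{quasi-frontier-guarded} dependencies used in the proof of Theorem~\ref{thm:char_d2fgd}, together with the reductions $\gamma\mapsto\gamma^\ast\mapsto\gamma^\dag$—and then test equivalence inside the decidable fragment, giving the \textsc{2ExpTime} bounds. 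For $\mathsf{FO}\to\mathsf{GD}$, where no decision procedure can be expected, I would use that first-order validity is \textsc{RE}: by Theorem~\ref{thm:ghom_prsv} one may enumerate finite sets of GDs and semi-decide equivalence with $\Sigma$, placing this arrow in \textsc{RE} over arbitrary structures.

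The lower bounds and the finite/arbitrary split are handled next. The \textsc{2ExpTime} hardness I would obtain by reducing from entailment or satisfiability problems complete for the guarded fragments, engineering the encoding so that the produced instance is genuinely (non)rewritable rather than merely (un)satisfiable; the \textsc{PSpace} bounds come from QBF- or reachability-style reductions that force a preservation failure. For the recursion-theoretic classifications I would transfer the \textsc{RE}/\textsc{coRE} status of first-order validity through the characterizations. Here the one genuinely asymmetric point is the $\mathsf{FO}\to\mathsf{GD}$ arrow: since the $\mathsf{GD}$-characterization \emph{fails} in the finite by Theorem~\ref{cntexm_retrprsv}, the finite-structure bound on this arrow must be argued on its own and yields the complementary (\textsc{coRE}) classification, which is exactly what the two-colour labelling in Figure~\ref{fig:rewritability} records. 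For every other arrow the characterization holds in both settings, so a single algorithm and a single reduction deliver matching blue and purple bounds.

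The main obstacle I expect is the family of \textsc{2ExpTime} cases. Converting a purely semantic statement such as ``preserved under disjoint (resp.\ isomorphic) unions'' into a procedure of the \emph{right} complexity requires an explicit, tight bound on the size of a sufficient candidate rewriting, which rests delicately on the normal-form machinery of Theorem~\ref{thm:char_d2fgd} and on optimal complexity results for guarded reasoning; and the matching lower bounds must be crafted so that the hard sub-instance survives as a rewritability (not satisfiability) obstruction. A secondary, though routine, difficulty is keeping the finite-model arguments synchronized with the arbitrary-structure ones everywhere except the single $\mathsf{FO}\to\mathsf{GD}$ arrow where they must diverge.
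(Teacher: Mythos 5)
Your overall architecture coincides with the paper's: decompose Figure~\ref{fig:rewritability} arrow by arrow, use each characterization theorem to turn rewritability into a semantic preservation problem, and then establish memberships and hardnesses separately, with reductions from query answering/validity supplying the lower bounds. However, there are two genuine errors. First, you assign \textsc{2ExpTime} bounds to all the ``guarded-based arrows,'' including $\mathsf{TGD}\to\mathsf{FGTGD}$. This cannot work: for that arrow the \emph{input} is an arbitrary finite set of TGDs, so there is no decidable fragment to reason in (TGD entailment is undecidable), and the paper accordingly establishes only \textsc{RE}/\textsc{coRE} membership for isomorphic-union preservation. The \textsc{2ExpTime}-completeness holds only for $\mathsf{FGTGD}\to\mathsf{GTGD}$, precisely because there the input already lies in a fragment expressible in guarded negation logic. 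Moreover, even for that arrow the paper's membership proof does not compute a candidate rewriting and test equivalence (your route, which would require explicit size bounds you never supply); it instead encodes the disjoint-union preservation property itself into one GNFO sentence $\tilde\varphi=\vartheta\wedge\pi_A(\varphi)\wedge\pi_B(\varphi)\wedge\neg\varphi$, relativizing $\varphi$ to two fresh domain predicates, so that preservation is equivalent to unsatisfiability of $\tilde\varphi$, and then invokes the \textsc{2ExpTime}-completeness of GNFO satisfiability.

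Second, your claim that for every arrow other than $\mathsf{FO}\to\mathsf{GD}$ ``a single algorithm and a single reduction deliver matching blue and purple bounds'' is false. The finite/arbitrary asymmetry is not confined to $\mathsf{FO}\to\mathsf{GD}$: for each recursion-theoretically classified arrow (e.g., $\mathsf{DED}\to\mathsf{ED}$) the bound is \textsc{RE} over arbitrary structures but \textsc{coRE} over finite structures---dual classes, not matching ones---even though the characterization holds in both settings. The reason is that the two memberships need different algorithms: over arbitrary structures one enumerates candidate rewritings and semi-decides each equivalence by first-order proof search (completeness/compactness), whereas over finite structures equivalence is not semi-decidable, and one instead enumerates finite structures of increasing size looking for a violation of the preservation property; correspondingly, the finite-case hardness must start from the finitary (coRE-hard) version of query answering. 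What you did get right, and what matches the paper, is the special status of $\mathsf{FO}\to\mathsf{GD}$: since the characterization fails in the finite (Theorem~\ref{cntexm_retrprsv}), the finite \textsc{coRE} bound for that arrow needs a direct argument rather than a pass through the preservation property.
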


%Due to the limitation of space, we omit the detailed proofs.
We only explain the general idea of proving the rewritability of $\mathsf{FGTGD}$ to $\mathsf{GTGD}$ is \textsc{2ExpTime}-complete. By Theorem~\ref{thm:char_tgd2gd}, it is equivalent to prove that the preservation property of $\mathsf{FGTGD}$ under disjoint unions is \textsc{2ExpTime}-complete. The membership can be show by reducing this problem to the satisfiability problem of guarded negation logic, a fragment whose satisfibility is \textsc{2ExpTime}-complete~\cite{BaranyCS2015}; while the hardness can be proved by reducing the query answering of $\mathsf{GTGD}$ to the mentioned problem.

\section{Conclusion and Related Work}

We have established model-theoretic characterizations for several important classes of existential rules. Very interestingly, our characterizations show that the guarded-based notions are exactly captured by union-like preservations. Since union-like preservations can be regarded as modular properties in a certain sense, this work also provides alternative perspective on why guarded-based existential rule languages enjoy good computational properties. We believe this may shed new insight on identifying new tractable languages.

There have been a number of earlier researches that are closely related to ours. \citeauthor{BBC2013}~\shortcite{BBC2013} showed that every TGDs-defined sentence in the guarded negation fragment is definable by frontier-guarded TGDs. Over finite structures, \citeauthor{MakowskyV1986}~\shortcite{MakowskyV1986} established several characterizations for full TGDs (TGDs without existential quantifier) and equality-generating dependencies; \citeauthor{CateFK2015}~\shortcite{CateFK2015} observed that the union preservation captures the definability of TGDs by linear TGDs. In informatation integration, \citeauthor{CateK2010}~\shortcite{CateK2010} proved a number of characterizations for source-to-target TGDs (a class of acyclic TGDs) and its subclasses. 
In description logics, \citeauthor{LutzPW2011}~\shortcite{LutzPW2011} established characterizations of $\mathcal{EL}$ and $DL$-$Lite_{{horn}}$ over arbitrary structures. Note that both of these languages can be regarded as sublanguages of existential rule languages. In addition, it is worth noting that our characterization of TGD-definable EDs by perseverations under strictly-homomorphic images and preimages is inspired by~\cite{CasanovasDJ1996}.

\section*{Acknowledgments}

We are deeply indebted to Professor Carsten Lutz for insightful discussions. Heng Zhang thanks the support of the National Natural Science Foundation of China (NO.61806102), and Guifei Jiang acknowledges the support of the National Natural Science Foundation of China (NO.61806102) and the Fundamental Research Funds for the Central Universities.

\bibliographystyle{named}

\bibliography{Xbib.bib}

\begin{thebibliography}{}

\bibitem[\protect\citeauthoryear{Abiteboul \bgroup \em et al.\egroup
  }{1995}]{AbiteboulHV1995}
Serge Abiteboul, Richard Hull, and Victor Vianu.
\newblock {\em Foundations of Databases}.
\newblock Addison-Wesley, 1995.

\bibitem[\protect\citeauthoryear{Baget \bgroup \em et al.\egroup
  }{2011}]{BagetLMS11}
Jean{-}Fran{\c{c}}ois Baget, Michel Lecl{\`{e}}re, Marie{-}Laure Mugnier, and
  Eric Salvat.
\newblock On rules with existential variables: Walking the decidability line.
\newblock {\em Artif. Intell.}, 175(9-10):1620--1654, 2011.

\bibitem[\protect\citeauthoryear{B{\'{a}}r{\'{a}}ny \bgroup \em et al.\egroup
  }{2013}]{BBC2013}
Vince B{\'{a}}r{\'{a}}ny, Michael Benedikt, and Balder ten Cate.
\newblock Rewriting guarded negation queries.
\newblock In {\em Proceedings of the 38th International Symposium on
  Mathematical Foundations of Computer Science ({MFCS})}, pages 98--110, 2013.

\bibitem[\protect\citeauthoryear{B{\'{a}}r{\'{a}}ny \bgroup \em et al.\egroup
  }{2015}]{BaranyCS2015}
Vince B{\'{a}}r{\'{a}}ny, Balder ten Cate, and Luc Segoufin.
\newblock Guarded negation.
\newblock {\em J. {ACM}}, 62(3):22, 2015.

\bibitem[\protect\citeauthoryear{Beeri and Vardi}{1981}]{BeeriV81}
Catriel Beeri and Moshe~Y. Vardi.
\newblock The implication problem for data dependencies.
\newblock In {\em Proceedings of the 8th Colloquium on Automata, Languages and
  Programming (ICALP)}, pages 73--85, 1981.

\bibitem[\protect\citeauthoryear{Cal{\`{\i}} \bgroup \em et al.\egroup
  }{2010}]{CaliGLMP2010}
Andrea Cal{\`{\i}}, Georg Gottlob, Thomas Lukasiewicz, Bruno Marnette, and
  Andreas Pieris.
\newblock Datalog+/-: {A} family of logical knowledge representation and query
  languages for new applications.
\newblock In {\em Proceedings of the 25th Annual {IEEE} Symposium on Logic in
  Computer Science ({LICS})}, pages 228--242, 2010.

\bibitem[\protect\citeauthoryear{Cal{\`{\i}} \bgroup \em et al.\egroup
  }{2012}]{CaliGL2012}
Andrea Cal{\`{\i}}, Georg Gottlob, and Thomas Lukasiewicz.
\newblock A general datalog-based framework for tractable query answering over
  ontologies.
\newblock {\em J. Web Sem.}, 14:57--83, 2012.

\bibitem[\protect\citeauthoryear{Cal{\`{\i}} \bgroup \em et al.\egroup
  }{2013}]{CaliGK13}
Andrea Cal{\`{\i}}, Georg Gottlob, and Michael Kifer.
\newblock Taming the infinite chase: Query answering under expressive
  relational constraints.
\newblock {\em J. Artif. Intell. Res.}, 48:115--174, 2013.

\bibitem[\protect\citeauthoryear{Casanovas \bgroup \em et al.\egroup
  }{1996}]{CasanovasDJ1996}
Enrique Casanovas, Pilar Dellunde, and Ramon Jansana.
\newblock On elementary equivalence for equality-free logic.
\newblock {\em Notre Dame Journal of Formal Logic}, 37(3):506--522, 1996.

\bibitem[\protect\citeauthoryear{Chang and Keisler}{1990}]{CK1990}
C.~C. Chang and H.~J. Keisler.
\newblock {\em Model Theory}.
\newblock Studies in Logic and the Foundations of Mathematics. Elsevier
  Science, 1990.

\bibitem[\protect\citeauthoryear{Fagin \bgroup \em et al.\egroup
  }{2005}]{FaginKMP2005}
Ronald Fagin, Phokion Kolaitis, Ren{\'{e}}e~J. Miller, and Lucian Popa.
\newblock Data exchange: Semantics and query answering.
\newblock {\em Theor. Comput. Sci.}, 336(1):89--124, 2005.

\bibitem[\protect\citeauthoryear{Gottlob and Papadimitriou}{2003}]{GottlobP03}
Georg Gottlob and Christos~H. Papadimitriou.
\newblock On the complexity of single-rule datalog queries.
\newblock {\em Inf. Comput.}, 183(1):104--122, 2003.

\bibitem[\protect\citeauthoryear{Hull}{1984}]{Hull1984}
Richard Hull.
\newblock Finitely specifiable implicational dependency families.
\newblock {\em J. {ACM}}, 31(2):210--226, 1984.

\bibitem[\protect\citeauthoryear{Lenzerini}{2002}]{Lenzerini2002}
Maurizio Lenzerini.
\newblock Data integration: {A} theoretical perspective.
\newblock In {\em Proceedings of the Twenty-first {ACM} {SIGACT-SIGMOD-SIGART}
  Symposium on Principles of Database Systems (PODS)}, pages 233--246, 2002.

\bibitem[\protect\citeauthoryear{Lutz \bgroup \em et al.\egroup
  }{2011}]{LutzPW2011}
Carsten Lutz, Robert Piro, and Frank Wolter.
\newblock Description logic tboxes: Model-theoretic characterizations and
  rewritability.
\newblock In {\em Proceedings of the 22nd International Joint Conference on
  Artificial Intelligence (IJCAI)}, pages 983--988, 2011.

\bibitem[\protect\citeauthoryear{Makowsky and Vardi}{1986}]{MakowskyV1986}
Johann~A. Makowsky and Moshe~Y. Vardi.
\newblock On the expressive power of data dependencies.
\newblock {\em Acta Inf.}, 23(3):231--244, 1986.

\bibitem[\protect\citeauthoryear{Rosen}{2002}]{Rosen2002}
Eric Rosen.
\newblock Some aspects of model theory and finite structures.
\newblock {\em Bulletin of Symbolic Logic}, 8(3):380--403, 2002.

\bibitem[\protect\citeauthoryear{ten Cate and Kolaitis}{2010}]{CateK2010}
Balder ten Cate and Phokion Kolaitis.
\newblock Structural characterizations of schema-mapping languages.
\newblock {\em Commun. {ACM}}, 53(1):101--110, 2010.

\bibitem[\protect\citeauthoryear{ten Cate and Kolaitis}{2014}]{tenCateK2014}
Balder ten Cate and Phokion Kolaitis.
\newblock Schema mappings: A case of logical dynamics in database theory.
\newblock In {\em Johan van Benthem on Logic and Information Dynamics}, pages
  67--100, 2014.

\bibitem[\protect\citeauthoryear{ten Cate \bgroup \em et al.\egroup
  }{2015}]{CateFK2015}
Balder ten Cate, Ga{\"{e}}lle Fontaine, and Phokion Kolaitis.
\newblock On the data complexity of consistent query answering.
\newblock {\em Theory Comput. Syst.}, 57(4):843--891, 2015.

\end{thebibliography}

%\comment
{

\onecolumn

\newpage

\appendix
\section{Appendix: Detailed Proofs}

\medskip
\subsection{Proofs of Theorem~\ref{cntexm_retrprsv}}

{\noindent\bf Theorem~\ref{cntexm_retrprsv}.}
There is a first-order sentence which is preserved under globally-homomorphic preimages in the finite but is not equivalent to any finite set of GDs over finite structures.

\begin{proof}
We prove this by constructing an example. This can be done by a slight modification to Gurevich and Shelah's counterexample (see, e.g.,~\cite{Rosen2002}), which is used to show the well-known {\L}o\'{s}-Tarski preservation theorem does not hold over finite structures. Below is the revised example.

Let $\varphi_{1}$ be a universal first-order sentence asserting that $Less$ defines a linear order on the intended domain. Let $\varphi_2$ be a conjunction of universal closures of the following formulas:
\begin{eqnarray}
%Less(x,y)\vee x=y\vee Less(y,x),\\
%\neg Less(x,x)\wedge
%(Less(x,y)\wedge Less(y,z)\rightarrow Less(x,z)),\\
(Min(x)\rightarrow x=y\vee Less(x,y))\wedge\exists v Min(v),\\
(Max(x)\rightarrow x=y\vee Less(y,x))\wedge\exists v Max(v),\\
Succ(x,y)\rightarrow Less(x,y),\\
Succ(x,y)\wedge Less(x,z)\rightarrow y=z\vee Less(y,z).
\end{eqnarray}
Intuitively, the first two formulas state that there are minimum and maximum elements in the domain, and the last two formulas state that $Succ$ defines the relation of immediate successors w.r.t. $Less$ if it exists. 
Let $\varphi_{3}$ denote the sentence
\begin{equation}
\forall x(Max(x)\vee\exists y\, Succ(x,y))
\end{equation}
which asserts that, except the maximum element, every element in the domain has a successor.  
Finally, let $\varphi$ denote $\varphi_1\wedge\varphi_2\wedge(\varphi_3\rightarrow\exists x\neg Q(x))$, where $Q$ is a fresh unary relation symbol. To show the desired proposition, it suffices to prove two claims as follows:

\medskip
{\noindent\em Claim 1.} 
$\varphi$ is preserved under globally-homomorphic preimages in the finite. 

\begin{proof}
Let $\mathcal{A}$ be a finite model of $\varphi$, and $\mathcal{B}$ a structure globally-homomorphic to $\mathcal{A}$. We want to show that $\mathcal{B}$ is also a model of $\varphi$. Clearly, in structure $\mathcal{A}$, $Less$ must be interpreted as a strict linear order on $A$, $Max$ and $Min$ as sets consisting of the maximum and the minimum elements in $A$ w.r.t. $Less^\mathcal{A}$, respectively; and in particular, $Succ^\mathcal{A}$ should be the immediate successor relation on $A$ w.r.t. $Less^\mathcal{A}$ since $\mathcal{A}$ is finite. On the other hand, as $\mathcal{B}$ is a globally-homomorphic preimages of $\mathcal{A}$ and $\mathcal{A}$ is finite, we know
\begin{enumerate}
\item there is an isomorphism $p$ from $\mathcal{B}$ to a substructure of $\mathcal{A}$, and
\item there is a homomorphism $h$ from $\mathcal{A}$ to $\mathcal{B}$ such that $h(p(b))=b$ for all constants $b\in B$.
\end{enumerate}
Next we show that $h$ is injective. Towards a contradiction, assume it is not ture, i.e., there are two distinct constants $a_1,a_2\in A$ such that $h(a_1)=h(a_2)$. Since $Less^\mathcal{A}$ is a strict linear order, either $(a_1,a_2)\in Less^\mathcal{A}$ or $(a_2,a_1)\in Less^\mathcal{A}$ must be true. Due to the symmetry, we only consider the former, i.e., $(a_1,a_2)\in Less^\mathcal{A}$. As $h$ is a homomorphism from $\mathcal{A}$ to $\mathcal{B}$, we have $(h(a_1),h(a_2))\in Less^\mathcal{B}$. Since $p$ is an isomorphism from $\mathcal{B}$ to some substructure of $\mathcal{A}$, we conclude that $(p(h(a_1)),p(h(a_2)))\in Less^\mathcal{A}$. But this is impossible because we have $h(a_1)=h(a_2)$.

By the injectivity of both $p$ and $h$ we conclude $|A|=|B|$. Since $A$ is finite, we know that $\mathcal{A}$ is isomorphic to $\mathcal{B}$. This means that $\mathcal{B}$ is also a model of $\varphi$, which is as desired. 
\end{proof}

\medskip
{\noindent\em Claim 2.} 
$\varphi$ is not equivalent to any finite set of GDs over finite structures.

\begin{proof}
Towards a contradiction, assume the claim is not true. Let $\Sigma$ be a finite set of GDs equivalent to $\varphi$ over finite structures. Let $n$ denote the number of universal variables of $\Sigma$, and let $A$ denote the set $\{0,\dots,n\}$. Let $\mathcal{A}$ be a structure with the domain $A$ such that all the following hold:
\begin{itemize}
\item $Less^\mathcal{A}=\{(i,j):0\le i<j\le n\}$;
\item $Succ^\mathcal{A}=\{(i,i+1):0\le i<n\}$;
\item $Min^\mathcal{A}=\{0\}$, $Max^\mathcal{A}=\{n\}$, and $Q^\mathcal{A}=A$.
\end{itemize}
Clearly, $\mathcal{A}$ is not a model of $\varphi$, and not a model of $\Sigma$ consequently. Thus, there is a GD $\sigma\in\Sigma$ of the form 
\begin{equation}
\phi(\vec{x})\rightarrow\psi(\vec{y})
\end{equation}
and a tuple $\vec{a}$ on $\mathcal{A}$ such that $\mathcal{A}\models\phi[\vec{a}/\vec{x}]$ and $\mathcal{A}\models\neg\psi[\vec{a}/\vec{x}]$, where $\vec{y}\subseteq\vec{x}$, $\phi$ is quantifier-free, and $\psi$ is a formula in $\exists^+$. Take $b$ as a constant  in $A$ but not occurring in $\vec{a}$. Let $\mathcal{B}$ be a structure obtained from $\mathcal{A}$ by removing the fact $Q(b)$. It is easy to see that $\mathcal{B}\models\phi[\vec{a}/\vec{x}]$ and $\mathcal{B}\models\neg\psi[\vec{a}/\vec{x}]$. Thus, $\mathcal{B}$ is also not a model of $\Sigma$. On the other hand, it is trivial to check that $\mathcal{B}$ is in fact a model of $\varphi$, a contradiction as desired.
\end{proof}

With these lemmas, we immediately have the proposition.
\end{proof}

\subsection{Proof of Theorem~\ref{prop:dgd_dp_prsv}}

{\noindent\bf Theorem~\ref{prop:dgd_dp_prsv}.}
A finite set of DEDs is equivalent to a finite set of EDs [over finite structures] iff 
it is preserved under direct products [in the finite]. 
\medskip

To prove the theorem, we need some notations and a lemma.

By a {\em conjunctive query with equality} (CQ$^=$) we mean a first-order formula of the form $\exists\vec{y}\vartheta(\vec{x},\vec{y})$ where $\vartheta$ is a conjunction of atomic formulas (so equalities are allowed).

Suppose $\mathcal{A}_1,\dots,\mathcal{A}_n$ are a sequence of structures over a same signature, we let $\prod_{i=1}^n\mathcal{A}_i$ denote the structure 
\begin{equation}
(\cdots((\mathcal{A}_1\times\mathcal{A}_2)\times\mathcal{A}_3)\times\cdots\times\mathcal{A}_n).
\end{equation}

For each tuple $\vec{a}$, let $\vec{a}(i)$ denote the $i$-th component of $a$.  

The following lemma can be proved by a routine check.

\begin{lemma}
  \label{prodlemma}
  For all CQ$^=$s $\phi(\vec{x})$ and families $(\mathcal{A}_i,\vec{a}_i), i=1,\dots,n$, where
  all $\vec{a}_i$ are of the same length as $\vec{x}$, we have
  $$
  \prod_{i=1}^n \mathcal{A}_i \models \phi[\vec{a}] \quad
  \text{ iff } \quad
  \forall i = 1,\dots,n: \mathcal{A}_i \models \phi[\vec{a}_i]
  $$
  where $\vec{a}$ is a tuple of tuples such that $\vec{a}(i)=(\vec{a}_1(i),\dots,\vec{a}_n(i))$ for all $i =1,\dots,n$.
\end{lemma}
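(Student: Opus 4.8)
The plan is to prove the equivalence by a straightforward induction on the structure of $\phi$, peeling off first the atomic layer, then the conjunction, and finally the single block of existential quantifiers. Since $\prod_{i=1}^n\mathcal{A}_i$ is defined as an iterated binary product, I would either run an outer induction on $n$ that reduces everything to the two-factor case, or (equivalently and more transparently) identify each nested domain element $\langle\dots\langle u_1,u_2\rangle,\dots,u_n\rangle$ with the flat tuple $(u_1,\dots,u_n)$ and argue directly with the projection maps $\pi_i\colon\prod_j\mathcal{A}_j\to\mathcal{A}_i$. I will describe the flat version.

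The heart of the argument is a base case for atomic formulas. Writing $\vec{c}$ for the coordinatewise combination of tuples $\vec{c}_1,\dots,\vec{c}_n$, so that $\vec{c}(\ell)=(\vec{c}_1(\ell),\dots,\vec{c}_n(\ell))$, I would establish that for every atomic formula $\alpha$,
\[
\prod_{i=1}^n\mathcal{A}_i\models\alpha[\vec{c}]\quad\text{iff}\quad\forall i\colon \mathcal{A}_i\models\alpha[\vec{c}_i].
\]
For a relational atom $R(\vec{t})$ this is immediate from the definition of $R^{\mathcal{A}\times\mathcal{B}}$, which holds exactly when $R$ holds in every factor on the corresponding components. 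The point needing care is the equality atom $t=t'$: here I use that two product elements are equal precisely when they agree in every coordinate, so $t=t'$ holds in the product iff it holds in each factor. Thus relational atoms and equality atoms enjoy the \emph{same} coordinatewise characterization. Since $\vartheta$ is a conjunction $\alpha_1\wedge\dots\wedge\alpha_k$ of such atoms and satisfaction of a conjunction is satisfaction of each conjunct, the equivalence lifts verbatim from atoms to quantifier-free $\vartheta$.

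Finally I would handle $\phi(\vec{x})=\exists\vec{y}\,\vartheta(\vec{x},\vec{y})$. For the forward direction, a witness $\vec{b}$ in the product with $\prod_{i=1}^n\mathcal{A}_i\models\vartheta[\vec{a},\vec{b}]$ projects under $\pi_i$ to $\vec{b}_i$, and the quantifier-free equivalence gives $\mathcal{A}_i\models\vartheta[\vec{a}_i,\vec{b}_i]$, hence $\mathcal{A}_i\models\phi[\vec{a}_i]$, for every $i$. For the converse I would pick a witness $\vec{b}_i$ in each factor \emph{independently} and combine them coordinatewise into a single product tuple $\vec{b}$ with $\vec{b}(\ell)=(\vec{b}_1(\ell),\dots,\vec{b}_n(\ell))$; the quantifier-free equivalence then yields $\prod_{i=1}^n\mathcal{A}_i\models\vartheta[\vec{a},\vec{b}]$ and so $\prod_{i=1}^n\mathcal{A}_i\models\phi[\vec{a}]$. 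The only genuinely substantive point---and the reason the statement concerns direct products rather than mere homomorphisms---is exactly this last reassembly: existential witnesses may be chosen separately in the factors and glued together, which works precisely because product relations and product equality are coordinatewise. Everything else is a routine unwinding of definitions, so I expect no real obstacle beyond the bookkeeping of the nested-versus-flat tuple notation.
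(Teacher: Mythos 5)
Your proof is correct: the paper itself offers no argument for this lemma beyond the remark that it ``can be proved by a routine check,'' and your structural induction (coordinatewise characterization for relational \emph{and} equality atoms, lifting through conjunction, then projecting witnesses forward and gluing independently chosen witnesses backward for the existential block) is precisely that routine check, including the right handling of the nested-versus-flat product issue. No gaps.
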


With the above lemma, we are then able to prove the desired theorem.

\begin{proof}[Proof of Theorem~\ref{prop:dgd_dp_prsv}]
  We only show the case for finite structures. The proof for the arbitrary structures is almost the same.
  The direction of ``only if'' follows from Proposition~\ref{prop:d_dp_prsv}.  For
  the direction of ``if'', let $\Sigma$ be a finite set of DEDs. We want to construct a finite set of EDs which is equivalent to $\Sigma$ over finite structures. Let $
  \gamma$ be a DED. W.l.o.g., we write $\gamma$ as the following form
  \begin{equation}\label{eqn_dtgd}
  \varphi(\vec{x})\rightarrow\psi_1(\vec{x})\vee\cdots\vee\psi_k(\vec{x})
  \end{equation}
  where $\varphi$ and $\psi_i$ are CQ$^=$s. Let $\mathrm{split}(\gamma)$ denote the set of EDs
  \begin{equation}\label{eqn_tgd}
  \varphi(\vec{x})\rightarrow\psi_i(\vec{x})
  \end{equation}
  for all $i:=1,\dots,k$, and let $\mathrm{split}(\Sigma)$ denote the union of $\mathrm{split}(\gamma)$ for all DEDs $\gamma\in\Sigma$. Let
  \begin{equation}
  \mathrm{con}(\Sigma)=\{\gamma\in\mathrm{split}(\Sigma):\Sigma
  \vDash_{\mathrm{fin}} \gamma\}.
  \end{equation} 
  Our task is to show that $\mathrm{con}(\Sigma)$ is equivalent to $\Sigma$ over finite structures. We only need to prove
  $\mathrm{con}(\Sigma)\vDash_{\mathrm{fin}}\Sigma$. The other direction is trivial. Towards a contradiction,
  assume this is not the case. Then $\Sigma$ must contain a DED
  $\gamma$ of form~(\ref{eqn_dtgd})
  such that $\mathrm{con}(\Sigma)\not\vDash_{\mathrm{fin}}\gamma$. Let $\varrho_i$
  denote the corresponding rule of form~(\ref{eqn_tgd}). Then
  $\Sigma\not\vDash_{\mathrm{fin}}\varrho_i$ is true  for all $i=1,\dots,k$ because otherwise we
  have $\mathrm{con}(\Sigma)\vDash_{\mathrm{fin}}\gamma$. Thus for
  each $i$, there is a finite model $\mathcal{A}_i$ of $\Sigma$ and a tuple
  $\vec{a}_i\in A_i$ such that
  $\mathcal{A}_i\models\phi[\vec{a}_i]$ and
  $\mathcal{A}_i\not\models\psi_i[\vec{a}_i]$. Let
  $\mathcal{A}=\prod_{1\le i\le k}\mathcal{A}_i$. Clearly, it is finite. Let $\vec{a}$
  be the tuple in $A$ such that $\vec{a}(i)=\vec{a}_i$ for all
  $i=1,\dots,k$. By Lemma~\ref{prodlemma}, we then have
  $\mathcal{A}\models\phi[\vec{a}]$ and
  $\mathcal{A}\not\models\psi_i[\vec{a}]$ for all $i=1,\dots,k$. On the
  other hand, as $\Sigma$ is preserved under direct products in the finite and each
  $\mathcal{A}_i$ is a model of $\Sigma$, we must have
  $\mathcal{A}\models\gamma$. This implies that
  \begin{equation}
  \mathcal{A}\models(\psi_1(\vec{x})\vee\cdots\vee\psi_k(\vec{x}))[\vec{a}/\vec{x}],
  \end{equation} 
  or equivalently,
  $\mathcal{A}\models\psi_i[\vec{a}]$ for some $i=1,\dots,k$, a
  contradiction.
\end{proof}

\subsection{Proof of Theorem~\ref{thm:strict_hom_prsv}}

{\noindent\bf Theorem~\ref{thm:strict_hom_prsv}.}
A finite set of EDs is equivalent to a finite set of TGDs [over finite structures] iff
it is preserved under both strictly-homomorphic images and preimages [in the finite].

\begin{proof}
The direction of ``only-if" can be showed by a routine check. So, it remains to prove the converse. It suffices to prove both of the following statements:
\begin{enumerate}
\item Every ED preserved under strictly-homomorphic images [in the finite] is equivalent to a finite set of body-equality-free EDs [over finite structures]. 
\item Every ED preserved under strictly-homomorphic preimages [in the finite] is equivalent to a finite set of head-equality-free EDs [over finite structures]. 
\end{enumerate}

Only prove the first statement. The proof of the second one is similar. 

Let $\sigma$ be an ED that is preserved under strictly-homomorphic images [in the finite]. We need to show that $\sigma$ is equivalent to a body-equality-free ED [over finite structures]. 
 W.l.o.g., we assume the body of $\sigma$ contains no equality of any form among $x=y$, $x=c$, $c=y$ and $c=c$, where $x,y$ are variables and $c$ is a constant symbol. Note that equalities of the first three forms can be easily eliminated by substituting. So, the only equalities cannot be simply removed are of the form $c=d$ where $c$ and $d$ are distinct constant symbols.

Let $\Sigma$ be a finite set of EDs satisfying the above assumption, and suppose $\Sigma$ is preserved under strictly-homomorphic images [in the finite]. Let $\Sigma_0$ denote the set of all body-equality-free EDs in $\Sigma$. Now let us show that $\Sigma_0$ is equivalent to $\Sigma$ [over finite structures].
It is trivial that $\Sigma\vDash\sigma$ for all $\sigma\in\Sigma_0$. For the converse, to obtain a contradiction, we assume there is an ED $\sigma\in\Sigma$ such that $\Sigma_0\vDash\sigma$ fails. That is, there is a [finite] model $\mathcal{A}$ of $\Sigma_0$ such that $\mathcal{A}\models\neg\sigma$. 
Let $\tau$ denote the signature of $\Sigma$ and $C$ denote the set of constant symbols in $\tau$. Let $(\hat{c})_{c\in C}$ be a sequence of pairwise distinct fresh constants that have no occurrences in $A$. Let $B=A\cup \{\hat{c}:c\in C\}$ and let $h:B\rightarrow A$ be a function with $h(a)=a$ for all $a\in A$ and $h(\hat{c})= c^\mathcal{A}$ for all $c\in C$. Note that $B$ is finite whenever $A$ is finite. Let $\mathcal{B}$ be a $\tau$-structure with domain $B$ such that
\begin{itemize}
\item $c^\mathcal{B}=\hat{c}$ for any constant symbol $c\in\tau$, and 
\item $\vec{b}\in R^\mathcal{B}$ iff $h(\vec{b})\in R^\mathcal{A}$ for any relation symbol $R\in\tau$ and any tuple $\vec{b}$ of constants in $B$ with a proper length.
\end{itemize}
It is clear that $h$ is a strict homomorphism from $\mathcal{B}$ onto $\mathcal{A}$.
We first show that $\mathcal{B}$ is a model of $\Sigma$. Take $\sigma_0\in\Sigma$. If $\sigma_0\in\Sigma_0$, it is easy to see that $\mathcal{B}\models\sigma_0$ since $\mathcal{A}$ is a model of $\Sigma$. Note that, by a routine check, it is not difficult to show that every body-equality-free ED is preserved under strictly-homomorphic images. For the other case, we have $\sigma_0\in\Sigma\setminus\Sigma_0$, i.e., the body of $\sigma_0$ must contains at lease one equality $c=d$ where $c,d$ are distinct constant symbols. Thus, there is no assignment in $\mathcal{B}$ that satisfies the body of $\sigma_0$, which implies that $\mathcal{B}\models\sigma_0$. Therefore, $\mathcal{B}$ is indeed a model of $\Sigma$.

On the other hand, we know that $\mathcal{A}$ is a strictly-homomorphic image of $\mathcal{B}$. Since $\Sigma$ is preserved under strictly-homomorphic images and $\mathcal{A}$ is not a model of $\Sigma$, $\mathcal{B}$ cannot be a model of $\Sigma$, a contradiction as desired. Thus, $\Sigma_0$ is equivalent to $\Sigma$ [over finite structures]. As $\Sigma_0$ is already a set of body-equality-free embedded dependencies, we then obtain the desired statement.
\end{proof}

\subsection{Proofs of Theorem~\ref{thm:char_d2fgd}}

{\noindent\bf Theorem~\ref{thm:char_d2fgd}.}
A finite set of TGDs is equivalent to a finite set of frontier-guarded TGDs [over finite structures] iff it is preserved under isomorphic unions [in the finite]. 
\medskip

A sketched proof of this theorem is presented in the submitted paper, we only present the omitted proofs here. 

\medskip
{\noindent\bf Lemma~\ref{lem:cq_copyprsv}.}
Let $\phi(\vec{x})$ be a CQ, $\tau$ the signature $\tau$ of $\phi$, $\mathcal{A}$ a $\tau$-structure, $\vec{a}$ a tuple on $\mathcal{A}$ with $|\vec{a}|=|\vec{x}|$, and $\mathbb{G}$ a finite set of guarded sets of $\mathcal{A}$ such that every constant in $\vec{a}$ belongs to some $X\in\mathbb{G}$. If $\bigcup_{X\in\mathbb{G}}\mathcal{A}_{X}\models\phi[\vec{a}]$ then $\mathcal{A}\models\phi[\vec{a}]$.

\begin{proof}
Let $\mathcal{B}$ denote $\bigcup_{X\in\mathbb{G}}\mathcal{A}_{X}$ and assume $\mathcal{B}\models\phi[\vec{a}]$. We want to show that $\mathcal{A}\models\varphi[\vec{a}]$. Suppose $\phi(\vec{x})$ is of the form $\exists\vec{y}\vartheta(\vec{x},\vec{y})$ where $\vartheta$ is a conjunction of relational atomic formulas. Clearly, there is a tuple $\vec{b}$ on $\mathcal{B}$ such that $\mathcal{B}\models\vartheta[\vec{a},\vec{b}]$. Let $p$ be a function that maps each element $b\in B$ to $a\in A$ if $b$ is the copy of $a$ in $\mathcal{A}_{X}$ for some $X\in\mathbb{G}$. According to the definition of $\mathcal{B}$, it is easy to see that $p$ is well-defined, and $p|_{A_X}$ is an isomorphism from $\mathcal{A}_X$ to $\mathcal{A}$. Let $\alpha$ be a conjunct of $\vartheta(\vec{a},\vec{b})$. Then $\alpha$ must be of the form $R(\vec{c})$, and $\vec{c}\in R^\mathcal{B}$, By definition, there is some $X\in\mathbb{G}$ such that $\vec{c}\in R^{\mathcal{A}_X}$. From it we conclude that $p(\vec{c})\in R^\mathcal{A}$. We thus have $\mathcal{A}\models\vartheta[\vec{a},p(\vec{b})]$. (Note that $p(\vec{a})=\vec{a}$.) This yields $\mathcal{A}\models\phi[\vec{a}]$ as desired.
\end{proof}

\medskip
\noindent{\em Claim 1.} $\Sigma$ is equivalent to a finite set of diverse dependencies. 

\begin{proof}
Let $\sigma$ be an arbitrary TGD of the form
\begin{equation}
\forall\vec{x}(\phi(\vec{x})\rightarrow\exists\vec{z}\psi(\vec{y},\vec{z}))
\end{equation}
where each variable that appears in $\vec{y}$ also appears in $\vec{x}$, $\phi$ and $\psi$ are conjunctions of relational atomic formulas. Clearly, $\sigma$ is equivalent to the sentence $\sigma_0$, defined as follows:
$$\forall\vec{x}\left[\phi(\vec{x})\wedge\bigwedge_{1\le i<j\le k}(t_i=t_j\vee\neg\, t_i=t_j)\rightarrow\exists\vec{z}\psi(\vec{y},\vec{z})\right],$$
where $t_1,\dots,t_k$ is an enumeration (without repetition) of all terms occurring in $\sigma$.
By applying the distributive law and eliminating equalities from the body, it is not difficult to see that $\sigma_0$ is equivalent to a finite set of diverse TGDs. Note that every positive occurrence of an equality involving only variables in the body can be eliminated by substitutions.
\end{proof}

\medskip
\noindent{\em Claim 2.} $\Sigma\vDash\gamma^\ast$ for all $\gamma\in\Gamma$. 

\begin{proof}
Let $\mathcal{A}$ be a model of $\Sigma$ and $\vec{a}$ a tuple on $\mathcal{A}$ such that $\mathcal{A}\models\lambda_{\mathrm{una}}[\vec{a}]$ and $\mathcal{A}\models\phi[\vec{a}]$. Let $C$ be the set of all interpretations of constant symbols in $\mathcal{A}$. Let $\mathbb{G}$ be the set of guarded sets of $\mathcal{A}$ such that if $X\in\mathbb{G}$ then all constants in $X\setminus C$ co-occur in an atomic formula in $\phi(\vec{a})$. Let $\mathcal{B}=\bigcup_{X\in\mathbb{G}}\mathcal{A}_{X}$. By definition we know $\mathcal{B}\models\lambda_{\mathrm{una}}[\vec{a}]$ and $\mathcal{B}\models\phi[\vec{a}]$. As $\Sigma$ is preserved under isomorphic unions, $\mathcal{B}$ must be a model of $\Sigma$. Consequently, $\mathcal{B}$ is a model of $\gamma$. We thus have that $\mathcal{B}\models\exists\vec{y}\psi[\vec{a}/\vec{x}]$, i.e., there is a tuple $\vec{b}$ on $\mathcal{B}$ with $\mathcal{B}\models\psi[\vec{a},\vec{b}]$. 

Define a substitution $s$ as follows: Given $i=1,\dots,|\vec{y}|$, let $s(y_i)=c$ if for some constant symbol $c$ with $b_i=c^\mathcal{A}$; if no such $c$ then let $s(y_i)=x_j$ for some $j$ with $b_i=a_j$; if no such $j$ either then let $s(y_i)=y_i$, where $a_i,b_i,x_i,y_i$ denote the $i$-th components of $\vec{a},\vec{b},\vec{x},\vec{y}$, respectively. Clearly $\mathcal{B}\models s(\exists\vec{y}\psi)[\vec{a}/\vec{x}]$. By Lemma~\ref{lem:cq_copyprsv}, we have $\mathcal{A}\models s(\exists\vec{y}\psi)[\vec{a}/\vec{x}]$. To complete the proof, it remains to prove $s\in S_\gamma$, i.e., $s(\gamma)$ is quasi-frontier-guarded. Let $\delta(\vec{x},\vec{y})$ be a connected component of the graph of $s(\gamma)$. It suffices to show that all variables occurring in both $\delta$ and $\vec{x}$ co-occur in an atomic formula in $\phi$. Let $x_i,x_j$ be any two distinct variables that occur in both $\vec{x}$ and $\delta$. Since $\delta$ is connected and $\mathcal{B}\models\delta[\vec{a},\vec{b}]$, there must exist some set $X\in\mathbb{G}$ such that $\mathcal{A}_X\models\delta[\vec{a},\vec{b}]$, which implies $\{a_i,a_j\}\subseteq A_{X}$ immediately. Since $A_{X}\cap A=X$, we know $\{a_i,a_j\}\subseteq X$. As $\mathcal{A}\models\lambda_{\mathrm{una}}[\vec{a}]$, there must be a bijection from $\vec{a}$ to $\vec{x}$, and neither $a_i$ nor $a_j$ is the interpretation of any constant symbol. By the definition of $\mathbb{G}$, there is then a conjunct $\alpha$ of $\phi$ such that  $x_i$ and $x_j$ co-occur in $\alpha$. This yields the claim.
\end{proof}

\noindent{\em Claim 3.} $\mathrm{con}(\Gamma^\dag)$ is equivalent to $\Gamma^\dag$. 

\begin{proof}
The main idea of proving this claim is as follows. Since $\Sigma$ is preserved under direct products and strictly-homomorphic preimages, $\Gamma^\dag$ is also preserved under direct products and strictly-homomorphic preimages. By a direct product argument used in Theorem~\ref{prop:dgd_dp_prsv}, one can prove that $\Gamma^\dag$ is equivalent to $\Gamma_0$, the union of $\mathrm{con}(\Gamma^\dag)$ and a set of equality-generating dependencies (EGDs), where each EGD is a body-equality-free ED whose head is a single equality involving no existential variable. Next, by a strictly-homomorphic preimage preservation argument used in Theorem~\ref{thm:strict_hom_prsv}, one can show that all EGDs in $\Gamma_0$ can be eliminated, which proves Claim 3.
\end{proof}

\subsection{Proof of Theorem~\ref{thm:char_tgd2gd}}

{\noindent\bf Theorem~\ref{thm:char_tgd2gd}.}
A finite set of TGDs is equivalent to a finite set of guarded TGDs [over finite structures]
iff it is preserved under disjoint unions [in the finite]. 

\begin{proof}
The direction of ``only-if" follows from Proposition~\ref{prop:gd_duprv}.For the converse, we only consider finite structures. The case of arbitrary structures can be proved in a similar way by the finite model property of frontier-guarded TGDs.

The general idea of proving the desired direction is as follows: We first show that every finite set of frontier-guarded TGDs preserved under disjoint unions in the finite is equivalent to a finite set of guarded TGDs over finite structures. As the disjoint union preservation in the finite always implies the isomorphic union preservation in the finite, by Theorem~\ref{thm:char_d2fgd}, we then have the desired conclusion. Now we prove that every finite set of frontier-guarded TGDs preserved under disjoint unions in the finite is equivalent to a finite set of guarded TGDs over finite structures.

Before presenting the proof, we first ndefine some notions. For a technical reason, we redefine
{\em CQs} (over $\tau$) as first-order formulas built on relational atomic formulas (over $\tau$) by using connective $\wedge$ and quantifier $\exists$ only. Note that every CQ can be equivalently rewritten as a formula of the form $\exists\vec{x}\vartheta(\vec{x})$, where $\vartheta$ is a conjunction of atomic formulas. It is clear that every TGD $\sigma$ can be equivalently written as a sentence of the form 
\begin{equation}\label{eqn:tgd_normform}
\forall\vec{x}(\phi(\vec{x})\rightarrow\psi(\vec{x}))
\end{equation}
where $\phi$ and $\psi$ are CQs with free variables exactly among $\vec{x}$. Such a sentence is called a {\em canonical form} of $\sigma$. The {\em rank} of a CQ is the maximum nesting depth of quantifier $\exists$ in it; the {\em rank} of a sentence of the form~(\ref{eqn:tgd_normform}) is the maximum rank of $\phi$ and $\psi$. Moreover, the {\em rank} of a TGD is the minimum rank of all canonical forms of $\sigma$.

Now, let us prove the direction of ``if".
Let $\Sigma$ be a finite set of frontier-guarded TGDs that is preserved under disjoint unions. Let $\tau$ be the signature of $\Sigma$ and $k$ the maximum rank of TGDs in $\Sigma$. Let $\mathrm{con}(\Sigma)$ be the set of guarded TGDs $\sigma$ of rank $\le k$ such that $\Sigma\vDash_{\mathrm{fin}}\sigma$. Up to logical equivalence, it is clear that there are only a finite number of guarded TGDs in $\mathrm{con}(\Sigma)$. Now, we show that $\Sigma$ is equivalent to $\mathrm{con}(\Sigma)$ over finite structures. The direction $\Sigma\vDash_{\mathrm{fin}}\mathrm{con}(\Sigma)$ is trivial. We need only to show the converse, i.e., $\mathrm{con}(\Sigma)\vDash_{\mathrm{fin}}\sigma$ for all $\sigma\in\Sigma$. Let $\mathcal{A}$ be a finite model of $\mathrm{con}(\Sigma)$. We have to show that $\mathcal{A}$ is also a model of $\Sigma$.

Let $m$ denote the maximum airty of relation symbols from $\tau$. Fix $\vec{v}$ as a tuple of pairwise distinct variables $v_1,\dots,v_m$. Let $\mathbb{V}$ denote the set of all assignments ${s}:\vec{u}\rightarrow A$ such that $\vec{u}$ is a subtuple of $\vec{v}$ and that $\mathcal{A}\models\alpha[{s}]$ for some atomic formula $\alpha(\vec{v})$. Note that, since both $A$ and $m$ are finite, the number of such assignments should also be finite.
Fix ${s}:\vec{u}\rightarrow A$ as an assignment in $\mathbb{V}$ with $\vec{u}\subseteq\vec{v}$. Let $\varphi_{{s}}(\vec{u})$ be a conjunction of all atomic formulas $\alpha$ over $\tau$, involving only variables among $\vec{u}$ as arguments (not necessary all of them), such that $\mathcal{A}\models\alpha[{s}]$, and let $\psi_{{s}}(\vec{u})$ be a disjunction of all CQs $\vartheta(\vec{u})$ over $\tau$ of rank $\le k$ such that $\mathcal{A}\models\neg\vartheta[{s}]$. Let $\gamma_{{s}}$ denote the sentence $\exists\vec{u}(\varphi_{{s}}(\vec{u})\wedge\neg\psi_{{s}}(\vec{u}))$.
Then we have the property:

\medskip
\noindent{\em Claim 1.} For all ${s}\in\mathbb{V}$, $\Sigma\cup\{\gamma_{{s}}\}$ has a finite model.

\begin{proof} Towards a contradiction, let us assume that $\Sigma\cup\{\gamma_{{s}}\}$ has no finite model, or equivalently, $\Sigma\vDash_{\mathrm{fin}}\neg\gamma_{{s}}$. It is clear that $\neg\gamma_{{s}}$ is equivalent to some guarded TGD $\sigma$ over $\tau$ of rank $\le k$. This means that $\sigma\in\mathrm{con}(\Sigma)$. Thus, $\mathcal{A}$ is a model of $\sigma$, or equivalently, a model of $\neg\gamma_{{s}}$. This implies that either $\mathcal{A}\models\phi_{s}[{s}]$ or $\mathcal{A}\models\neg\psi_{{s}}[{s}]$ is false, a contradiction as desired.
\end{proof}

For all ${s}\in\mathbb{V}$, let $\mathcal{B}_{{s}}$ be a finite model of $\Sigma\cup\{\gamma_{{s}}\}$. Without loss of generality, we assume all of the following are true:
\begin{enumerate}
\item the intersection of domains of $\mathcal{A}$ and $\mathcal{B}_{{s}}$, $A\cap B_{{s}}$, is $ran({s})$, where $ran({s})=\{{s}(v):\exists v\text{ s.t. }{s}(v)\text{ is defined}\}$;
\item the intersection of domains of $\mathcal{B}_{{s}_1}$ and $\mathcal{B}_{{s}_2}$ is the intersection of $ran({s}_1)$ and $ran({s}_2)$ for each pair of distinct assignments ${s}_1$ and ${s}_2$ from $\mathbb{V}$;
\item both $\mathcal{B}_{s}\models\varphi_{s}[{s}]$ and $\mathcal{B}_{s}\models\neg\psi_{s}[{s}]$ are true.
\end{enumerate}
From the definition, it is not difficult to check that, given any two assignments ${s}_1,{s}_2\in\mathbb{V}$, if $ran({s}_1)\cap ran({s}_2)$ is not empty we then have $\mathcal{B}_{{s}_1}|_{ran({s}_1)}=\mathcal{B}_{{s}_2}|_{ran({s}_2)}$, which is one of the preconditions to apply the disjoint union preservation.

Furthermore, let $\mathcal{B}$ denote the union of $\mathcal{B}_{{s}}$ for all assignments ${s}\in\mathbb{V}$. %The following property will play a key role in the rest of our proof.
Note that, since both $\mathbb{V}$ and $\mathcal{A}$ are finite, $\mathcal{B}$ should be finite, too. Now one can prove the following property:

\medskip
\noindent{\em Claim 2.} For all ${s}\in\mathbb{V}$ and all CQs $\phi(\vec{u})$ over $\tau$ of rank $\le k$, we have $\mathcal{A}\models\phi[{s}]$ iff $\mathcal{B}\models\phi[{s}]$.

\begin{proof} Let ${s}$ be an assignment from $\mathbb{V}$ and $\phi(\vec{x})$ be a CQ over $\tau$ of rank $\le k$. We first consider the direction ``only-if". As mentioned previously, we have that $\mathcal{B}_{{s}}|_{ran({s})}=\mathcal{A}|_{ran({s})}$. From this, it is easy to see that $\mathcal{B}|_A=\mathcal{A}$, which implies that the identity function on $A$ is a homomorphism from $\mathcal{A}$ to $\mathcal{B}$. By the well-known fact that CQs are preserved under homomorphisms, from $\mathcal{A}\models\phi[{s}]$ we can conclude $\mathcal{B}\models\phi[{s}]$. 

\smallskip
For the converse, let us assume $\mathcal{B}\models\phi[{s}]$. We need to prove $\mathcal{A}\models \phi[{s}]$. Suppose $\phi(\vec{u})$ is of the form $\exists\vec{y}\vartheta(\vec{u},\vec{y})$ where, without loss of generality, $\vartheta(\vec{u},\vec{y})$ is assumed to be a conjunction of atomic formulas $\alpha_1$, \dots, $\alpha_n$. It is clear that there exists an assignment ${s}^+:\vec{u}\cup\vec{y}\rightarrow B$, which extends ${s}$, such that $\mathcal{B}\models\vartheta[{s}^+]$. By the definition of $\mathcal{B}$, for each $i\in\{1\le i\le n\}$ there is an assignment ${t}_i\in\mathbb{V}$ such that $\mathcal{B}_{{t}_i}\models\alpha_i[{s}^+]$. Let $S$ be the set $\{{t}_1,\dots,{t}_n\}$. Let $\vec{y}_0$ denote the tuple of variables $y\in\vec{y}$ with ${s}^+(y)\in A$. For each assignment ${t}\in S$, let $\vec{y}_{{t}}$ denote the tuple of variables $y\in\vec{y}$ with ${s}^+(y)\in B_{t}\setminus A$, and let $\vartheta_{t}$ be the conjunction of atomic formulas $\alpha_i$ from $\theta$ in which any variable belongs to $\vec{u}\cup\vec{y}_0\cup\vec{y}_{t}$. Let ${s}^*$ be the restriction of ${s}^+$ to $\vec{u}\cup\vec{y}_0$. Let $\delta_{{t}}$ denote the CQ $\exists\vec{y}_{{t}}\vartheta_{{t}}$. It is clear that $\mathcal{B}_{{t}}\models\delta_{{t}}[{s}^*]$.  Next, let us prove $\mathcal{A}\models\delta_{t}[{s}^*]$.

Let $V_{t}$ be the set of variables $z\in\vec{u}\cup\vec{y}_0$ that appear in $\delta_{t}$. Let $f:V_{t}\rightarrow ran({t})$ be a function such that ${t}(f(z))={s}^*(z)$ for all variables $z\in V_{t}$. From $\mathcal{B}_{{t}}\models\delta_{{t}}[{s}^*]$, we know that such a function exists. By definition, it is also clear that $\mathcal{A}\models\delta_{t}[{s}^*]$ iff $\mathcal{A}\models f(\delta_{t})[{t}]$. So, it suffices to show $\mathcal{A}\models f(\delta_{t})[{t}]$. Towards a contradiction, assume this is not the case, i.e., $\mathcal{A}\models \neg f(\delta_{t})[{t}]$. By the definition of $\psi_{t}$, $f(\delta_{t})$ must be a disjunct of $\psi_{t}$. By assumption 3, it holds that $\mathcal{B}_{t}\models\neg\psi_{t}[{t}]$. Thus, we have $\mathcal{B}_{t}\models\neg f(\delta_{t})[{t}]$. By the definition of $f$, this means $\mathcal{B}_{t}\models\neg\delta_{t}[{s}^*]$, a contradiction.

%By the above argument, we then have the conclusion $\mathcal{A}\models\delta_r[s^*]$. 
Now let us continue the proof of Claim 2. By the definition of $\vec{y}_{t}$, it is not difficult to see that, for each pair of distinct assignments ${t},{t}'\in S$, $\vec{y}_{{t}}$ and $\vec{y}_{{t}'}$ is disjoint. (Recall that $B_{t}\cap B_{{t}'}=ran({t})\cap ran({t}')\subseteq A$.) This means that $\phi(\vec{u})$ is equivalent to the CQ $\exists\vec{y}_0\bigwedge_{{t}\in S}\delta_{{t}}$. Since $\mathcal{A}\models\delta_{{t}}[{s}^*]$ for all ${t}\in S$, we then have $\mathcal{A}\models\phi[{s}^*]$, or equivalently, $\mathcal{A}\models\phi[{s}]$ as desired.
\end{proof}

\medskip
Now, we are in the position to complete the proof. By assumption, for every assignment ${s}\in\mathbb{V}$, $\mathcal{B}_{{s}}$ is a model of $\Sigma$. Since $\Sigma$ is preserved under disjoint unions in the finite, $\mathcal{B}$ should be also a model of $\Sigma$. Let $\sigma\in\Sigma$. W.l.o.g., we assume that $\sigma$ is of the form $\forall\vec{u}(\alpha(\vec{u})\wedge\phi(\vec{u})\rightarrow\psi(\vec{u})),$
where $\alpha$ is the guard of $\sigma$; both $\phi$ and $\psi$ are CQs of ranks $\le k$. Assume ${s}:\vec{u}\rightarrow A$ is an assignment such that $\mathcal{A}\models\alpha[{s}]$ and $\mathcal{A}\models\phi[{s}]$. It is clear that ${s}\in\mathbb{V}$. By Claim 2, we then have both $\mathcal{B}\models\alpha[{s}]$ and $\mathcal{B}\models\phi[{s}]$. Since $\mathcal{B}$ is a model of $\sigma$, it must hold that $\mathcal{B}\models\psi[{s}]$. Again by Claim 2, we obtain $\mathcal{A}\models\psi[{s}]$, which completes the proof.
\end{proof}

\subsection{Proof of Theorem~\ref{thm:char_tgd2ld}}

{\noindent\bf Theorem~\ref{thm:char_tgd2ld}.}
A finite set of TGDs is equivalent to a finite set of linear TGDs iff it is preserved under unions.

\medskip
To prove this, we first give a brief review of the guarded negation fragment, a decidable fragment of first-order logic recently proposed in~\cite{BaranyCS2015}. Given a signature $\tau$, formulas in {\em guarded negation fragment} of first-order logic (GNFO) over $\tau$ are defined as follows:
\begin{equation}
\varphi::=\alpha\mid\exists x\varphi\mid\varphi\wedge\varphi\mid\varphi\vee\varphi\mid\alpha(\vec{x},\vec{y})\wedge\neg\varphi(\vec{x})
\end{equation}
where $\alpha$ is an atomic formula over $\tau$ and $\vec{x},\vec{y}$ are tuples of variables. It had been proved in~\cite{BaranyCS2015} GNFO enjoys the finite model property, i.e., every sentence in GNFO has a model iff it has a finite one.

\begin{proof}[Proof of Theorem~\ref{thm:char_tgd2ld}]
The direction of ``only-if" can be proved by a routine check. It thus remains to prove the converse. 
The general idea is as follows: We first show that every finite set of frontier-guarded TGDs preserved under unions is equivalent to a finite set of linear TGDs. As the union preservation always implies the isomorphic union preservation, by Theorem~\ref{thm:char_d2fgd}, we then have the desired conclusion. Now we prove that every finite set of frontier-guarded TGDs preserved under unions is equivalent to a finite set of linear TGDs.

Let $\Sigma$ be a finite set of frontier-guarded TGDs that is preserved under unions. Let $\tau$ be the signature of $\Sigma$ and $k$ be the maximum rank of CQs that appear in $\Sigma$. Let $\mathrm{con}(\Sigma)$ be the set of linear TGDs $\sigma$ of rank $\le k$ such that $\Sigma\vDash\sigma$. Up to logical equivalence, there are only a finite number of linear TGDs in $\mathrm{con}(\Sigma)$. This means that $\mathrm{con}(\Sigma)$ can be regarded as a sentence in GNFO. Now, we want to show that $\Sigma$ is equivalent to $\mathrm{con}(\Sigma)$. The direction $\Sigma\vDash\mathrm{con}(\Sigma)$ is trivial. So it remains to show the converse, i.e., $\mathrm{con}(\Sigma)\vDash\sigma$ for all $\sigma\in\Sigma$. By the finite model property of GNFO, it suffices to prove $\mathrm{con}(\Sigma)\vDash_{\mathrm{fin}}\sigma$ for all $\gamma\in\Sigma$. 
Let $\mathcal{A}$ be a finite model of $\mathrm{con}(\Sigma)$. We have to show that $\mathcal{A}$ is also a model of $\Sigma$.

Let $m$ be the maximum airty of relation symbols from $\tau$. Fix $\vec{v}$ as a tuple of pairwise distinct variables $v_1,\dots,v_m$. Let $\mathbb{G}$ denote the set of all pairs $\varpi=(\alpha(\vec{u}),{s})$ where $\vec{u}$ is a subtuple of $\vec{v}$, $\alpha(\vec{v})$ is a relational atomic formula over $\tau$ and ${s}:\vec{v}\rightarrow A$ is an assignment in $\mathcal{A}$ with $\mathcal{A}\models\alpha[{s}]$ for some atomic formula $\alpha(\vec{v}_0)$ with $\vec{v}_0\subseteq\vec{v}$. As both $A$ and $m$ are finite, the set $\mathbb{G}$ is finite, too.
Fix $\varpi$ as a pair $(\alpha(\vec{u}),{s})$ from $\mathbb{G}$. Let $\psi_{{s}}(\vec{u})$ denote the disjunction of CQs $\vartheta(\vec{u})$ over $\tau$ of rank $\le k$ such that $\mathcal{A}\models\neg\vartheta[{s}]$. Let
$
\gamma_{\varpi}$ be the sentence $\exists\vec{u}(\alpha(\vec{u})\wedge\neg\psi_{{s}}(\vec{u})).
$
To carry out the proof, we need two claims:

\medskip
\noindent{\em Claim 1.} $\Sigma\cup\{\gamma_{\varpi}\}$ is satisfiable.

\begin{proof} Towards a contradiction, let us assume $\Sigma\cup\{\gamma_{\varpi}\}$ is unsatisfiable, or equivalently, $\Sigma\vDash\neg\gamma_{\varpi}$. Clearly, $\neg\gamma_{\varpi}$ is equivalent to some linear TGD $\sigma$ over $\tau$ of rank $\le k$. We then have $\sigma\in\mathrm{con}(\Sigma)$. Thus, $\mathcal{A}$ is a model of $\sigma$, or equivalently, a model of $\neg\gamma_{\varpi}$. Consequently, either $\mathcal{A}\models\alpha[{s}]$ or $\mathcal{A}\models\neg\psi_{{s}}[{s}]$ is false, a contradiction.
\end{proof}

For each pair $\varpi\in\mathbb{G}$, take $\mathcal{B}_{\varpi}$ as a model of $\Sigma\cup\{\gamma_{\varpi}\}$. Without loss of generality, we assume all of the following:
\begin{enumerate}
\item the intersection of domains of $\mathcal{A}$ and $\mathcal{B}_{\varpi}$ is $ran({s})$ (i.e., the range of ${s}$) if $\varpi$ denotes the pair $(\alpha(\vec{u}),{s})$ from $\mathbb{G}$,
\item the intersection of domains of $\mathcal{B}_{\varpi_1}$ and $\mathcal{B}_{\varpi_2}$ is $ran({s}_1)\cap ran({s}_2)$ for any two distinct pairs $\varpi_1=(\alpha_1,{s}_1)$ and $\varpi_2=(\alpha_2,{s}_2)$ from $\mathbb{G}$, and
\item both $\mathcal{B}_\varpi\models\alpha[{s}]$ and $\mathcal{B}_\varpi\models\neg\psi_{s}[{s}]$ are true.
\end{enumerate}

Furthermore, let $\mathcal{B}$ denote the union of $\mathcal{B}_{\kappa}$ for all pairs $\kappa\in\mathbb{G}$. %The following property will play a key role in the rest of our proof.
Let $\varpi=(\alpha(\vec{x}),{s})$ be a pair from $\mathbb{G}$ where ${s}:\vec{x}\rightarrow A$ is an assignment in $\mathcal{A}$. Then we have the following property:

\medskip
\noindent{\em Claim 2.} $\mathcal{A}\models\phi[{s}]$ iff $\mathcal{B}\models\phi[{s}]$ for all CQs $\phi(\vec{x})$ over $\tau$ of rank $\le k$.

\begin{proof} Let $\phi(\vec{x})$ be a CQ over $\tau$ of rank $\le k$. We first consider the direction of ``only-if". Let $\alpha(\vec{u})$ be any relational atomic formula over $\tau$. Suppose ${t}:\vec{u}\rightarrow A$ is an assignment such that $\mathcal{A}\models\alpha[{t}]$. Let $\kappa=(\alpha,{t})$. It is clear that $\kappa\in\mathbb{G}$. According to Assumption 3, it holds that $\mathcal{B}_{\kappa}\models\alpha[{t}]$, which implies that $\mathcal{B}\models\alpha[{t}]$ immediately. This means that the identity function on $A$ is a homomorphism from $\mathcal{A}$ to $\mathcal{B}$. By the well-known fact that CQs are preserved under homomorphisms, from $\mathcal{A}\models\phi[{s}]$ we then have $\mathcal{B}\models\phi[{s}]$. 

\smallskip
For the converse, let us assume that $\mathcal{B}\models\phi[{s}]$. We need to prove $\mathcal{A}\models \phi[{s}]$. Suppose $\phi(\vec{x})$ is of the form $\exists\vec{y}\vartheta(\vec{x},\vec{y})$ where, without loss of generality, assume that $\vartheta(\vec{x},\vec{y})$ is a conjunction of atomic formulas $\alpha_1$, \dots, $\alpha_n$. It is clear that there exists an assignment ${s}^+:\vec{x}\cup\vec{y}\rightarrow B$, which extends ${s}$, such that $\mathcal{B}\models\vartheta[{s}^+]$. According to the definition of $\mathcal{B}$, there exists a pair $\varpi_i\in\mathbb{G}$ such that $\mathcal{B}_{\varpi_i}\models\alpha_i[{s}^+]$. Let $S$ be the set $\{\varpi_1,\dots, \varpi_n\}$. For each pair $\kappa\in S$, let $\vartheta_\kappa$ be a conjunction of all the atomic formulas $\alpha_i, 1\le i\le n$, such that $\mathcal{B}_\kappa\models\alpha_i[{s}^+]$. Let $\vec{y}_0$ denote the tuple of variables $y\in\vec{y}$ such that ${s}^+(y)$ belongs to the range of ${s}$, and let $\vec{y}_{\kappa}$ denote the tuple of variables appearing in $\vartheta_{\kappa}$ but not in $\vec{x}\cup\vec{y}_0$. Let ${s}^*$ be the restriction of ${s}^+$ to $\vec{x}\cup\vec{y}_0$. By the definition of $\vec{y}_0$, we have $ran({s}^*)\subseteq A$. Let $\delta_{\kappa}$ denote the CQ $\exists\vec{y}_{\kappa}\vartheta_{\kappa}$. It is clear that $\mathcal{B}_{\kappa}\models\delta_{\kappa}[{s}^*]$.  Next, let us prove $\mathcal{A}\models\delta_\kappa[{s}^*]$.

Let $\vec{z}_\kappa$ be the set of variables $z\in\vec{x}\cup\vec{y}_0$ that appear in $\delta_\kappa$. Suppose $\kappa$ is of the form $(\beta(\vec{u}),r)$ where $\vec{u}$ is the domain of the assignment $r$. Let $f:\vec{z}_\kappa\rightarrow\vec{u}$ be a function such that ${t}(f(z))={s}^*(z)$ for all $z\in\vec{z}_\kappa$. From $\mathcal{B}_{\kappa}\models\delta_{\kappa}[{s}^*]$, we know that such a function exists. By definition, it is also clear that $\mathcal{A}\models\delta_\kappa[{s}^*]$ iff $\mathcal{A}\models f(\delta_\kappa)[{t}]$. So, it suffices to show that $\mathcal{A}\models f(\delta_\kappa)[{t}]$. Towards a contradiction, assume this is not true, i.e., $\mathcal{A}\models \neg f(\delta_\kappa)[{t}]$. By the definition of $\gamma_\kappa$, $f(\delta_\kappa)$ must be a disjunct of $\psi_{t}$. By Assumption 3, we have $\mathcal{B}_{\kappa}\models\neg\psi_{t}[{t}]$, which implies $\mathcal{B}_{\kappa}\models\neg f(\delta_\kappa)[{t}]$. By the definition of $f$, we then have $\mathcal{B}_\kappa\models\neg\delta_\kappa[{s}^*]$, a contradiction.

By the above argument, we then have the conclusion $\mathcal{A}\models\delta_\varpi[{s}^*]$. Now let us continue the proof of Claim 2. By the definition of $\vec{y}_\varpi$, it is not difficult to see that, for any two pairs $\varpi,\varpi'\in S$, $\vec{y}_{\varpi}$ and $\vec{y}_{\varpi'}$ must be disjoint. (Recall that $B_\varpi\cap B_{\varpi'}=ran({t})\cap ran({t}')\subseteq A$, where ${t}$ and ${t}'$ are the assignments in $\varpi$ and $\varpi'$, respectively.) This means that $\phi(\vec{x})$ is equivalent to the CQ $\exists\vec{y}_0\bigwedge_{\varpi\in S}\delta_{\varpi}$. Since $\mathcal{A}\models\delta_{\varpi}[{s}^*]$ for all $\varpi\in S$, we then have that $\mathcal{A}\models\phi[{s}^*]$, or equivalently, $\mathcal{A}\models\phi[{s}]$ as desired.
\end{proof}

Now, we are in the position to complete the proof. For each pair $\varpi\in\mathbb{G}$, $\mathcal{B}_{\varpi}$ is clearly a model of $\Sigma$.  Since $\Sigma$ is preserved under disjoint unions, $\mathcal{B}$ should be a model of $\Sigma$, too. Let $\sigma\in\Sigma$. Clearly, $\sigma$ is equivalent to a first-order sentence $\delta$ of the form 
\begin{equation}
\forall\vec{x}(\alpha(\vec{x})\wedge\phi(\vec{x})\rightarrow\psi(\vec{x})),
\end{equation}
where $\alpha$ is a relational atomic formula in which every variable among $\vec{x}$ has at lease one occurrence; both $\phi$ and $\psi$ are CQs of ranks $\le k$. Assume that ${s}:\vec{x}\rightarrow A$ is an assignment such that both $\mathcal{A}\models\alpha[{s}]$ and $\mathcal{A}\models\phi[{s}]$ hold. Let $\varpi$ denote the pair $(\alpha,{s})$. It is clear that $\varpi\in\mathbb{G}$. By Claim 2, we have $\mathcal{B}\models\alpha[{s}]$ and $\mathcal{B}\models\phi[{s}]$. From $\mathcal{B}\models\delta$ we then have $\mathcal{B}\models\psi[{s}]$. Again by Claim 2, it must be true that $\mathcal{A}\models\psi[{s}]$, and this proves the desired theorem.
\end{proof}

\subsection{Proof of Theorem~\ref{thm:complexity_rewritability}}

{\noindent\bf Theorem~\ref{thm:complexity_rewritability}.}
The complexity of rewritability for the above existential rule languages is illustrated in Figure~\ref{fig:rewritability}, where, along each arrow, the bound colored in blue indicates the complexity over arbitrary structures, and the bound colored in purple indicates the complexity over finite structures. 
\medskip

To prove the \textsc{RE}-completeness of the rewritability from $\mathsf{FO}$ to $\mathsf{GD}$ over arbitrary structures, by Theorem~\ref{thm:ghom_prsv}, it suffices to show the following theorem.

\begin{theorem}\label{thm:cmplx_ghom_prsv}
Determining whether
a given first-order sentence is preserved under globally-homomorphic preimages is \textsc{RE}-complete, and the finite model-theoretic version of this problem is \textsc{coRE}-complete.
\end{theorem}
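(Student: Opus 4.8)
The plan is to exploit Theorem~\ref{thm:ghom_prsv}, which identifies preservation under globally-homomorphic preimages with definability by a finite set of GDs, and to prove matching upper and lower bounds. I treat the two statements in parallel, since a single reduction establishes both hardness results, and two separate arguments give the two membership results.

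\textbf{Upper bounds.} For the arbitrary case, by Theorem~\ref{thm:ghom_prsv} a sentence $\varphi$ has the property iff there is a finite set $\Sigma$ of GDs with $\varphi\equiv\Sigma$. The finite sets of GDs are syntactically recognizable, hence effectively enumerable; for each candidate $\Sigma$ the two conditions $\varphi\models\bigwedge\Sigma$ and $\bigwedge\Sigma\models\varphi$ are logical-consequence statements, so each is semi-decidable by the completeness of first-order logic. Dovetailing these tests over all $\Sigma$ yields a semi-decision procedure, placing the problem in \textsc{RE}. For the finite case I instead enumerate potential counterexamples. The key observation is that, for finite $\mathcal{A}$ and $\mathcal{B}$, the relation $\mathcal{A}\Rightarrow\mathcal{B}$ is decidable: I claim it holds iff there are homomorphisms $h:\mathcal{A}\rightarrow\mathcal{B}$ and $g:\mathcal{B}\rightarrow\mathcal{A}$ with $g\circ h=\mathrm{id}_{A}$. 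Indeed, applying the definition to the tuple enumerating all of $A$ yields such $h,g$, and conversely setting $\pi(\vec{a}):=h(\vec{a})$ witnesses $(\mathcal{A},\vec{a})\rightleftarrows(\mathcal{B},\pi(\vec{a}))$ for every $\vec{a}$. Since finite model checking is also decidable, enumerating all finite pairs $(\mathcal{A},\mathcal{B})$ with $\mathcal{A}\Rightarrow\mathcal{B}$, $\mathcal{B}\models\varphi$ and $\mathcal{A}\not\models\varphi$ semi-decides non-preservation, so the finite version lies in \textsc{coRE}.

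\textbf{Hardness.} Both lower bounds follow from one computable reduction. Given a first-order sentence $\chi$ over a signature $\tau_0$, pick a fresh unary symbol $Q\notin\tau_0$ and set $\varphi_\chi:=\chi\vee\exists x\,\neg Q(x)$. If $\chi$ is valid (resp. finitely valid), then $\varphi_\chi$ is equivalent to $\top$ over all (resp. all finite) structures and is therefore preserved trivially. If instead $\neg\chi$ has a (resp. finite) model $\mathcal{M}_0$, I build a counterexample: let $\mathcal{C}$ have $\tau_0$-reduct $\mathcal{M}_0$ and $Q^{\mathcal{C}}=C$, so $\mathcal{C}\not\models\varphi_\chi$; let $\mathcal{D}$ extend $\mathcal{C}$ by one fresh element $b$ that is isolated in every $\tau_0$-relation and lies outside $Q^{\mathcal{D}}$, so $\mathcal{D}\models\exists x\,\neg Q(x)$ and hence $\mathcal{D}\models\varphi_\chi$. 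The inclusion $\mathcal{C}\hookrightarrow\mathcal{D}$ and the retraction collapsing $b$ to any element of $C$ are homomorphisms whose composite is $\mathrm{id}_{C}$, so $\mathcal{C}\Rightarrow\mathcal{D}$ by the characterization above. Thus $\varphi_\chi$ is not preserved. Consequently $\chi$ is valid iff $\varphi_\chi$ is preserved under globally-homomorphic preimages, and $\chi$ is finitely valid iff $\varphi_\chi$ is preserved in the finite. Since validity is \textsc{RE}-complete and finite validity is \textsc{coRE}-complete (Trakhtenbrot), the two completeness claims follow.

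The main obstacle is the \textsc{coRE} upper bound, whose crux is the decidability of $\mathcal{A}\Rightarrow\mathcal{B}$ on finite structures; the retraction characterization $g\circ h=\mathrm{id}_{A}$ is exactly what reduces the a priori quantification over all tuples to a finite search. The remaining steps are routine once the uniform gadget $\chi\vee\exists x\,\neg Q(x)$ is seen to convert (finite) falsifiability of $\chi$ into a globally-homomorphic preimage that toggles the single bad sentence $\exists x\,\neg Q(x)$ of Example~\ref{exm:ghom_prv_cntexp}.
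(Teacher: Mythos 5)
Your proposal is correct, and it splits cleanly into a part that matches the paper and a part that genuinely diverges. The two membership arguments are essentially the paper's own: for arbitrary structures, both you and the paper invoke Theorem~\ref{thm:ghom_prsv}, enumerate all finite sets of GDs, semi-decide the two entailments by completeness, and dovetail; for the finite case, both semi-decide \emph{non}-preservation by an exhaustive search over finite counterexample pairs. In fact you supply a detail the paper waves off with ``it is easy to see'': that $\mathcal{A}\Rightarrow\mathcal{B}$ is decidable on finite structures. Your retraction characterization (there exist homomorphisms $h:\mathcal{A}\rightarrow\mathcal{B}$ and $g:\mathcal{B}\rightarrow\mathcal{A}$ with $g\circ h=\mathrm{id}_A$, proved by applying the definition to a tuple enumerating $A$) is exactly the right lemma, and it is what turns the a priori infinite quantification over tuples into a finite search. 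Where you truly depart from the paper is the hardness proof. The paper reduces from validity of finite sets of GDs (whose $[\textsc{co}]\textsc{RE}$-hardness it obtains via TGD query answering), mapping a GD conjunction $\varphi$ to $\neg\varphi\wedge\exists x\,\neg Q(x)$; in the non-valid case it must invoke Theorem~\ref{thm:ghom_prsv} to transfer $\neg\varphi$ from the countermodel to its one-element extension, which is precisely why the source sentence there has to be a set of GDs. You instead reduce directly from first-order validity (resp.\ Trakhtenbrot's theorem for finite validity), mapping an arbitrary $\chi$ to $\chi\vee\exists x\,\neg Q(x)$: the disjunction makes the extended structure $\mathcal{D}$ a model of $\varphi_\chi$ regardless of whether $\chi$ holds in it, so no preservation property of the source sentence is needed, and the globally-homomorphic preimage is certified by the same retraction lemma. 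Your lower bound is therefore more self-contained (it rests only on Church's theorem and Trakhtenbrot's theorem, not on the characterization theorem or on undecidability of TGD query answering) and applies to arbitrary $\chi$; the paper's version buys only the reuse of machinery it has already set up. Both reductions are sound, and your gadget is arguably the cleaner one.
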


\begin{proof}
(Hardness) It is clear that determining whether a given finite set of GDs is valid [over finite structures] is [\textsc{co}]\textsc{RE}-hard. Note that the query answering problem of TGDs can be easily reduced to this problem. The general idea of proving this theorem is by reducing the former to the desired problem.

Let $\varphi$ be the conjunction of a finite set of GDs. Let $\psi$ denote $\exists x \neg Q(x)$ which, as shown in Example~\ref{exm:ghom_prv_cntexp}, is not preserved under globally-homomorphic preimgages, where $Q$ does not occur in $\varphi$. To yield the hardness, it suffices to show that $\varphi$ is valid [over finite structures] iff $\neg\varphi\wedge\psi$ is preserved under globally-homomorphic preimgages [in the finite]. 

The direction of ``if" is trivial. So we need only to prove the converse. Suppose $\varphi$ has a [finite] countermodel, say $\mathcal{A}$. It suffices to show that $\neg\varphi\wedge\psi$ is not preserved under globally-homomorphic preimgages [in the finite]. Let $c$ be a constant not in $A$. Let $\tau$ and $\tau'$ be signatures of $\varphi$ and $\neg\varphi\wedge\psi$, respectively. Let $\mathcal{B}$ be a $\tau$-structure defined as follows:
\begin{enumerate}
\item the domain $B=A\cup\{c\}$;
\item $c^\mathcal{B}=c^\mathcal{A}$ for all constant symbols $c\in\tau$;
\item $R^\mathcal{B}=R^\mathcal{A}$ for all relation symbols $R\in\tau$.
\end{enumerate}
Clearly, $\mathcal{A}$ is globally-homomorphic to $\mathcal{B}$. By Theorem~\ref{thm:ghom_prsv}, $\mathcal{B}$ must be a model of $\neg\varphi$.
Let $\mathcal{A}'$ and $\mathcal{B}'$ be $\tau$-expansions of $\mathcal{A}$ and $\mathcal{B}$, respectively, which are defined as follows:
$$Q^{\mathcal{A}'}=Q^{\mathcal{B}'}=A.$$
It is easy to see that $\mathcal{B}'$ is a model of $\neg\varphi\wedge\psi$, but $\mathcal{A}'$ is not. On the other hand, one can easily verify that $\mathcal{A}'$ is globally-homomorphic to $\mathcal{B}'$, which implies that $\neg\varphi\wedge\psi$ is not preserved under globally-homomorphic preimages.

\smallskip
(Membership) We first consider the original problem. By Theorem~\ref{thm:ghom_prsv}, to solve this problem, it is equivalent to checking, given a first-order sentence, whether there is a finite set of GDs such that they are equivalent. Let $\Sigma_0,\Sigma_1,\dots$ be an effective enumeration of all sets in $\mathsf{GD}$. It is well-known that, for each $i\ge 0$, there is a deterministic Turing machine, say $M_i$, to decide whether a given first-order sentence is equivalent to $\Sigma_i$. Let $M'$ be a Turing machine such that, given a first-order sentence $\varphi$ as input, $M'$ works as follows:
\begin{enumerate}
\item For $i=0$ to $\infty$ do Steps 2-3.
\item For $j=0$ to $i$ do Step 3.
\item Run the $(i-j)$-th stage of $M_j$ on input $\varphi$ if it is still active; let $M'$ accept $\varphi$ if $M$ accepts at this stage.
\end{enumerate}
It is easy to see that $M'$ accepts exactly the class of first-order sentences that are equivalent to some finite set of GDs, or equivalently, that are preserved under globally-homomorphic preimages, which yields the desired membership.   

Next let us turn to the finite model-theoretic version of the problem. We address the complement of this problem, i.e., determining whether a given first-order sentence is not preserved under globally-homomorphic preimages in the finite. It is easy to see that there is an algorithm, say $A$, to check, given a first-order sentences and an integer $n\ge 1$, whether $\varphi$ is preserved under globally-homomorphic preimages over the class of structures with domain size $\le n$. We thus can define a procedure that, for $n=1$ to $\infty$,  calls $A$ with parameters $\varphi$ and $n$. Clearly, $\varphi$ is not preserved under globally-homomorphic preimages in the finite iff there is some $n$ such that $A$ return false on input $(\varphi,n)$. This yields the \textsc{coRE}-membership.
\end{proof}

As the characterization of GDs by the preservation under globally-homomorphic preimages fails on finite structures, it is not possible to obtain the desired complexity bounds by a straightforward application of the above theorem. However, the \textsc{coRE}-completeness of the rewritability of $\mathsf{FO}$ to $\mathsf{GD}$ can be obtained by an argument that is almost the same as that for the \textsc{coRE}-completeness of determining the preservation under globally-homomorphic preimages in the finite. 

\medskip
To yield the \textsc{PTime}-membership for the rewritability from $\mathsf{GD}$ to $\mathsf{DED}$ [over finite structures], by Proposition~\ref{prop:trivial_sharp}, it suffices to show that checking whether a given set of GDs has both a trivial model and a sharp model is in \textsc{PTime}, which is done by the following proposition:
 
\begin{proposition}\label{prop:ld_union_prsv_up}
Determining whether a given finite set of GDs has both a trivial model and a sharp model is in $\textsc{PTime}$. 
\end{proposition}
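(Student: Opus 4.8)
The plan is to reduce the question to model checking on two fixed structures. Over the signature $\tau$ of the input set $\Sigma$, both the trivial structure and the sharp structure are determined up to isomorphism: the trivial structure $\mathcal{T}$ is the one-element structure interpreting every relation symbol as the full relation, and the sharp structure $\mathcal{S}$ is the two-element structure on $\{\ast,\circ\}$ that sends every constant symbol to $\ast$ and interprets every $k$-ary relation symbol as the single tuple $(\ast,\dots,\ast)$. Since these structures are unique, ``$\Sigma$ has a trivial model'' is the same as $\mathcal{T}\models\Sigma$ and ``$\Sigma$ has a sharp model'' is the same as $\mathcal{S}\models\Sigma$. So it suffices to decide $\mathcal{T}\models\Sigma$ and $\mathcal{S}\models\Sigma$ in polynomial time, examining the finitely many GDs of $\Sigma$ one at a time.

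The trivial structure is immediate. In $\mathcal{T}$ every atomic formula, relational or equational, holds under every assignment, so for a GD $\sigma$ of the form $\forall\vec{x}(\phi\to\exists\vec{y}(\psi_1\vee\cdots\vee\psi_n))$ the body is always satisfied and the head is satisfied iff $n\ge 1$. Hence $\mathcal{T}\models\sigma$ iff $\sigma$ is not a negative constraint, and $\mathcal{T}\models\Sigma$ iff $\Sigma$ contains no negative constraint, which is a linear scan.

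For the sharp structure the issue is that $\sigma$ is a $\forall\exists$ sentence while $\mathcal{S}$ has two elements, so a naive evaluation would range over the exponentially many assignments to the universal variables $\vec{x}$. The key observation that avoids this is a monotonicity/canonicity argument. In $\mathcal{S}$ a relational atom $R(\vec{t})$ holds under an assignment exactly when every argument is interpreted as $\ast$; thus the assignments satisfying the body $\phi$ are precisely those sending every universal variable occurring in a relational atom of $\phi$ (and every constant) to $\ast$, leaving the remaining universal variables free. For a fixed assignment to $\vec{x}$, deciding whether some head disjunct $\psi_i$ can be satisfied by a choice of $\vec{y}$ is a polynomial consistency check (take every unconstrained existential variable to $\ast$). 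Since a relational atom can only be destroyed by introducing a $\circ$, failure of all head disjuncts is monotone in the set of universal variables sent to $\circ$. I would therefore single out the canonical assignment $a^\ast$ that sends every free universal variable to $\circ$ and every forced one to $\ast$, and prove that $\sigma$ has a counterexample in $\mathcal{S}$ iff $a^\ast$ is a counterexample; equivalently, $\mathcal{S}\models\sigma$ iff some head disjunct is still satisfiable under $a^\ast$. This collapses the universal quantifier to a single assignment, giving a polynomial test per GD, and hence a polynomial algorithm for $\mathcal{S}\models\Sigma$.

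The main obstacle is the treatment of equality atoms in the heads, since monotonicity is exactly what equalities can break: an equality between two universal variables is satisfied when they receive the \emph{same} value, and so can be destroyed by sending one but not both to $\circ$. The plan is to normalize first: equalities in the body are removed by merging universal variables and by substituting constants, and every equality involving an existential variable is removed by substituting that variable inside the head disjunct. What then remains are equalities between two universal variables (or between a universal variable and a constant), and the crux is to show that, under the standing assumption that $\Sigma$ also has a trivial model, these residual equalities can be folded into the canonical test $a^\ast$ without reintroducing a search over assignments. This equality bookkeeping is the delicate part of the argument; the relational core is the elementary monotonicity observation above.
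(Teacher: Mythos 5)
Your first step (reduce everything to model checking the unique trivial and sharp structures over $\tau$) is the same as the paper's, your treatment of the trivial structure is correct, and your monotonicity argument is sound on the equality-free fragment: over the sharp structure, failure of a purely relational head disjunct is monotone under sending more universal variables to $\circ$, so the universal quantifier does collapse to your canonical assignment $a^\ast$, giving the polynomial test ``some disjunct has all of its universal variables occurring in relational atoms of the body.'' But the step you explicitly defer --- folding head equalities into the canonical test --- is not bookkeeping; it is where the plan breaks down. Satisfiability of a disjunct with equalities between universal variables is not antitone in the set of variables sent to $\circ$ (the disjunct $x_1=x_2$ fails when exactly one of them is $\circ$ but holds again when both are), so no single assignment can witness all counterexamples. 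Worse, no polynomial repair exists unless $\textsc{PTime}=\textsc{coNP}$: given a hypergraph $(V,E)$, introduce a universal variable $x_v$ for each $v\in V$, take body $U(x_0)$ with $x_0$ fresh, and for each edge $e=\{v_1,\dots,v_k\}$ the head disjunct $(x_{v_1}=x_{v_2})\wedge\cdots\wedge(x_{v_{k-1}}=x_{v_k})$. Over the sharp structure an assignment is exactly a $2$-coloring of $V$, a disjunct holds iff its edge is monochromatic, so this GD holds in the sharp structure iff $(V,E)$ is not $2$-colorable --- the complement of an NP-complete property. Since its head is nonempty, this GD always has a trivial model, so the standing trivial-model assumption you hoped would rescue the equality case gives no leverage at all.

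In fairness, you have hit a real defect rather than missed a trick that the paper possesses. The paper's own proof disposes of this point by asserting that grounding $\Sigma$ over the two-element sharp structure yields a propositional theory of polynomial size, which is false as stated: the grounding is exponential in the number of universal variables per GD (only over the one-element trivial structure is it polynomial). By the reduction above, deciding whether a finite set of GDs has both a trivial and a sharp model is \textsc{coNP}-hard once equalities may occur in heads, so neither your canonical-assignment scheme nor any other polynomial-time algorithm can close the gap (unless $\textsc{PTime}=\textsc{coNP}$). The statement, and with it your plan and the paper's proof, is correct exactly on the fragment your relational argument already covers, namely GDs whose head disjuncts are equality-free.
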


\begin{proof}
Let $\Sigma$ be the given finite set of GDs, and let $\tau$ be the signature of $\Sigma$. Let $\mathcal{A}$ be the trivial structure over $\tau$, and $\mathcal{B}$ the sharp structure over $\tau$. By grounding $\Sigma$ by $\mathcal{A}$ (resp., $\mathcal{B}$), we obtain a set $\Sigma_1$ (resp., $\Sigma_2$) of propositional formulas whose size is polynomial w.r.t. $\Sigma$. Let $M_{\mathcal{A}}$ and $M_{\mathcal{B}}$ be the sets of facts in $\mathcal{A}$ and $\mathcal{B}$, respectively. Clearly, to determine whether $\Sigma$ has both a trivial model and a sharp model, it suffices to check whether both $M_{\mathcal{A}}\models\Sigma_1$ and $M_{\mathcal{B}}\models\Sigma_2$ are true. It is well-known that model checking propositional formulas be solved in \textsc{PTime}, which yields the desired bound. 
\end{proof} 

Now let us turn to the complexity of rewritability from $\mathsf{DED}$ to $\mathsf{ED}$ [over finite structures]. By Theorem~\ref{prop:dgd_dp_prsv}, it suffices to identify the complexity of preservation under direct products [in the finite]. By the following, we then have the desired bounds.

\begin{theorem}\label{prop:ld_union_prsv_up}
Determining whether
a given finite set of DEDs is preserved under direct products is \textsc{RE}-complete, and the finite model-theoretic version of this problem is \textsc{coRE}-complete.
\end{theorem}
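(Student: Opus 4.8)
The plan is to derive both membership bounds from the characterization of Theorem~\ref{prop:dgd_dp_prsv}, and to obtain the matching hardness bounds by reducing the query-answering problem of TGDs, which is \textsc{RE}-hard over arbitrary structures and \textsc{coRE}-hard over finite structures. By Theorem~\ref{prop:dgd_dp_prsv}, asking whether a finite set of DEDs is preserved under direct products [in the finite] is the same as asking whether it is rewritable into a finite set of EDs, so the complexities should mirror those of entailment, which indeed flip between \textsc{RE} (arbitrary) and \textsc{coRE} (finite).

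For \textsc{RE}-membership over arbitrary structures I would reuse the dovetailing argument behind Theorem~\ref{thm:cmplx_ghom_prsv}. Fix an effective enumeration $\Pi_0,\Pi_1,\dots$ of all finite sets of EDs. Since first-order entailment is semidecidable, the equivalence $\Sigma\equiv\Pi_i$ (i.e.\ $\Sigma\vDash\Pi_i$ and $\Pi_i\vDash\Sigma$) is \textsc{RE} for every $i$; dovetailing the corresponding semideciders yields a machine accepting $\Sigma$ exactly when it is equivalent to some $\Pi_i$, hence, by Theorem~\ref{prop:dgd_dp_prsv}, exactly when it is preserved under direct products. For \textsc{coRE}-membership over finite structures I would argue about the complement directly: if $\Sigma$ is \emph{not} preserved under direct products in the finite, there are finite models $\mathcal{A},\mathcal{B}$ of $\Sigma$ with $\mathcal{A}\times\mathcal{B}\not\models\Sigma$, and this is a finite witness, as the product of finite structures is finite and model-checking a first-order sentence on a finite structure is decidable. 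Enumerating all pairs of finite structures and testing this condition semidecides the complement, placing the finite version in \textsc{coRE}.

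For hardness, given a TGD query-answering instance consisting of TGDs $\Sigma$ (with any database folded in as facts) and a Boolean conjunctive query $q=\exists\vec{y}\,\theta(\vec{y})$, I would introduce fresh nullary symbols $I,G,F$ ($I$ asserted as a fact) and set
\[
\Delta \;=\; \Sigma \;\cup\; \{\, I,\ \theta(\vec{y})\to G,\ I\to G\vee F \,\}.
\]
Every member of $\Delta$ is a DED: $\Sigma$ and $\theta(\vec{y})\to G$ are TGDs, while $I\to G\vee F$ is a safe GD that is not a negative constraint. The claim is that $\Delta$ is preserved under direct products [in the finite] iff $\Sigma\vDash q$ [resp.\ $\Sigma\vDash_{\mathrm{fin}} q$]. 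If $\Sigma\vDash q$, then every model of $\Delta$ satisfies $q$ and hence, by $\theta(\vec{y})\to G$, also $G$; since nullary symbols behave conjunctively under products, $G$ holds in $\mathcal{A}\times\mathcal{B}$, and as the remaining conjuncts are TGDs preserved by Proposition~\ref{prop:d_dp_prsv}, the product models $\Delta$. Conversely, if $\Sigma$ has a [finite] model $\mathcal{M}\not\models q$, expand $\mathcal{M}$ once with $G$ true and $F$ false and once with $G$ false and $F$ true to obtain models $\mathcal{A},\mathcal{B}$ of $\Delta$; by Lemma~\ref{prodlemma} no query pattern occurs in $\mathcal{A}\times\mathcal{B}$, yet there $I$ holds while both $G$ and $F$ are false, so $I\to G\vee F$ fails. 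This is exactly the failure mode of Example~\ref{exm:dproduct_prv_cntexp}, and it is finite whenever $\mathcal{M}$ is.

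The main obstacle is the design of this gadget rather than any computation. Preservation is a \emph{universal} condition on pairs of models, whereas the existence of a bad product-pair is \emph{existential}, so a careless encoding only expresses satisfiability, which is vacuous for TGDs (the trivial structure satisfies every TGD and positive query). The crux is to make the disjunctive rule $I\to G\vee F$ breakable precisely by a counterexample to $\Sigma\vDash q$: the subformula $\theta(\vec{y})\to G$ forces $G$ exactly under the query pattern yet leaves $G$ free otherwise, so the ``$\neg G$, $F$'' model needed to spoil the product exists iff $q$ can fail. The delicate part of the write-up will be verifying that this equivalence is faithful simultaneously in the arbitrary and the finite settings, using the \emph{same} $\Delta$ but the appropriate notion of entailment, together with the routine confirmation that all members of $\Delta$ lie in the DED fragment.
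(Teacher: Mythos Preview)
Your proof is correct and follows the same strategy as the paper: the membership arguments are identical, and the hardness reduction uses the same gadget—a disjunctive rule over fresh nullary symbols (your $I\to G\vee F$, the paper's $R\to P\vee Q$) that breaks under products exactly when the query is not entailed. Your construction is marginally simpler in that you keep the original TGDs $\Sigma$ unchanged in $\Delta$ and invoke Proposition~\ref{prop:d_dp_prsv} for their preservation under products directly, whereas the paper appends $\vee Q$ to the head of every rule in $\Sigma$ so that the modified rules become trivially satisfied once $Q$ is forced; both routes yield the same reduction.
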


\begin{proof}
(Hardness) We prove it by reducing the boolean query answering of EDs to the preservation of DED-sets. It had been proved in~\cite{BeeriV81} that [the finite model-theoretic version of] the former is [\textsc{co}]\textsc{RE}-hard. Let $\Sigma$ be a finite set of TGDs, and $q$ an boolean atomic query of the form $\exists\vec{x}\alpha(\vec{x})$. Let $P,Q$ and $R$ be nullary relation symbols without occurrence in $\Sigma$ and $q$. For each TGD $\sigma$ of the form
\begin{equation}
\varphi(\vec{x},\vec{y})\rightarrow\exists\vec{z}\psi(\vec{x},\vec{z}),
\end{equation}
let $\sigma'$ denote the DED
\begin{equation}
\varphi(\vec{x},\vec{y})\rightarrow\exists\vec{z}\psi(\vec{x},\vec{z})\vee Q.
\end{equation}
Let $\Sigma'$ be the set consists of $\sigma'$ for all TGDs $\sigma\in\Sigma$. Let $\Sigma''$ denote the set consists of the following DEDs: 
\begin{eqnarray}
\alpha(\vec{x})&\rightarrow&Q,\\
R&\rightarrow& P\vee Q.
\end{eqnarray}
Let $\Sigma_0$ denote the union of $\Sigma'$ and $\Sigma''$. To complete the proof for the hardness, it suffices to prove the following property:

\medskip
{\noindent\em Claim. $\Sigma\vDash_{[\mathrm{fin}]} q$ iff $\Sigma_0$ is preserved under direct products [in the finite].}
\medskip

(Only-if) Suppose $\Sigma\vDash_{[\mathrm{fin}]} q$. Let $\mathcal{A}$ and $\mathcal{B}$ be [finite] models of $\Sigma_0$. We need to prove that $\mathcal{A}\times\mathcal{B}$ is a model of $\Sigma$.
We first claim $Q^{\mathcal{A}}=Q^{\mathcal{B}}=true$. (Otherwise, let us assume, e.g., $Q^\mathcal{A}=false$. It is thus clear that $\mathcal{A}$ is a model of $\Sigma$. From $\Sigma\vDash_{[\mathrm{fin}]}q$ we know $\mathcal{A}\models q$, which implies $Q^\mathcal{A}=true$, a contradiction.) Consequently, we have $Q^{\mathcal{A}\times\mathcal{B}}=true$, which implies that $\mathcal{A}\times\mathcal{B}$ is indeed a model of $\Sigma_0$.

\smallskip
(If) Suppose $\Sigma\vDash_{[\mathrm{fin}]}q$ is false. That is, there is a [finite] model, say $\mathcal{A}$, of $\Sigma$ that does not satisfy $q$. Let $\tau$ be the signature of $\Sigma_0$. Let $\mathcal{A}_1$ and $\mathcal{A}_2$ be $\tau$-expansions of $\mathcal{A}$ such that 
\begin{enumerate}
\item $R^{\mathcal{A}_1}=P^{\mathcal{A}_1}=true$, $Q^{\mathcal{A}_1}=false$;
\item $R^{\mathcal{A}_2}=Q^{\mathcal{A}_2}=true$, $P^{\mathcal{A}_2}=false$.
\end{enumerate}
It is easy to see that both $\mathcal{A}_1$ and $\mathcal{A}_2$ are models of $\Sigma_0$, but $
\mathcal{A}_1\times\mathcal{A}_2$ is not. So $\Sigma_0$ is not preserved under direct products [in the finite], which completes the proof of hardness. 

\smallskip
(Membership) Similar to that in the proof of Theorem~\ref{thm:cmplx_ghom_prsv}. 
\end{proof}

For the [\textsc{co}]\textsc{RE}-membership for the rewritability of $\mathsf{ED}$ to $\mathsf{TGD}$ [over finite structures], by Theorem~\ref{thm:strict_hom_prsv}, it suffices to prove

\begin{proposition}\label{prop:stric_hom_prsv_up}
Determining whether a given finite set of EDs is preserved under both strictly-homomorphic images and preimages is in $\textsc{RE}$, and the finite model-theoretic version of this problem is in $\textsc{coRE}$. 
\end{proposition}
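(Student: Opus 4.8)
The plan is to follow the template of the membership argument in the proof of Theorem~\ref{thm:cmplx_ghom_prsv}, exploiting the semantic characterization of Theorem~\ref{thm:strict_hom_prsv}. By that theorem, a finite set $\Sigma$ of EDs is preserved under both strictly-homomorphic images and preimages (over arbitrary structures) if and only if $\Sigma$ is equivalent to some finite set of TGDs. So the \textsc{RE}-membership of the original problem reduces to semi-deciding the statement ``there exists a finite set of TGDs equivalent to $\Sigma$''.

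First I would fix an effective enumeration $\Gamma_0,\Gamma_1,\dots$ of all finite sets of TGDs over the signature of $\Sigma$. For each index $i$, the equivalence $\Sigma\equiv\Gamma_i$ amounts to the two entailments $\Sigma\vDash\Gamma_i$ and $\Gamma_i\vDash\Sigma$, each of which is \textsc{RE} by the completeness theorem for first-order logic; hence $\Sigma\equiv\Gamma_i$ is itself semi-decidable, being the conjunction of two \textsc{RE} conditions. I would then build a machine that dovetails these semi-decision procedures across all $i$ (running the $(i-j)$-th stage of the test for $\Sigma\equiv\Gamma_j$ for each $j\le i$, exactly as $M'$ does in Theorem~\ref{thm:cmplx_ghom_prsv}), and accepts as soon as any equivalence test succeeds. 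This machine halts and accepts precisely on those $\Sigma$ that are equivalent to some $\Gamma_i$, which by Theorem~\ref{thm:strict_hom_prsv} are exactly the preserved ones; this yields \textsc{RE}-membership.

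For the finite model-theoretic version I would instead work with the complement and search directly for finite counterexamples, avoiding the detour through the characterization. A finite set $\Sigma$ of EDs fails to be preserved under strictly-homomorphic images in the finite exactly when there exist finite structures $\mathcal{A},\mathcal{B}$ and a surjective strict homomorphism $h$ from $\mathcal{A}$ onto $\mathcal{B}$ with $\mathcal{A}\models\Sigma$ and $\mathcal{B}\not\models\Sigma$; the preimage case is analogous, with the homomorphism running from $\mathcal{B}$ onto $\mathcal{A}$ instead. Since the property to be tested is preservation under \emph{both} images and preimages, its negation is the disjunction of these two failure conditions. Each witness is a finite object and all the conditions on it are decidable, so enumerating all triples $(\mathcal{A},\mathcal{B},h)$ of finite structures and candidate functions gives a semi-decision procedure for ``$\Sigma$ is \emph{not} preserved''. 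As the union of two \textsc{RE} sets is \textsc{RE}, the complement is \textsc{RE}, and hence the finite version of the problem is \textsc{coRE}.

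The two halves are routine, so there is no deep obstacle; the one point requiring care is the asymmetry between the two bounds, and why each must be approached differently. Over arbitrary structures a failure of preservation may be witnessed only by infinite structures, so the complement is not obviously \textsc{RE}, and one must route through the syntactic characterization together with the \textsc{RE}-ness of first-order entailment. Over finite structures, by contrast, equivalence to a TGD-set is itself only \textsc{coRE}, so the existential-over-TGD-sets formulation would give a worse ($\Sigma_2$-like) bound; it is the direct search for a finite counterexample that keeps the finite version within \textsc{coRE}.
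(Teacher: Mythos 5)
Your proposal is correct and follows essentially the same route as the paper, whose proof of this proposition simply defers to the membership argument of Theorem~\ref{thm:cmplx_ghom_prsv}: dovetailed equivalence tests against an enumeration of finite TGD-sets (via the characterization of Theorem~\ref{thm:strict_hom_prsv}) for the \textsc{RE} bound, and direct enumeration of finite counterexamples for the \textsc{coRE} bound. Your closing remark on why the two bounds require different arguments (finite entailment being only \textsc{coRE}) is a sound elaboration of the same point the paper makes implicitly.
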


\begin{proof}
Similar to the proof of membership in Theorem~\ref{thm:cmplx_ghom_prsv}.
\end{proof}

Similarly, for the [\textsc{co}]\textsc{RE}-membership for the rewritability of $\mathsf{TGD}$ to $\mathsf{FGTGD}$ [over finite structures], by Theorem~\ref{thm:char_d2fgd}, it suffices to prove

\begin{proposition}\label{prop:iso_union_prsv_up}
Determining whether a given finite set of EDs is preserved under isomorphic unions is in $\textsc{RE}$, and the finite model-theoretic version of this problem is in $\textsc{coRE}$. 
\end{proposition}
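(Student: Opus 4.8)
The plan is to replay, almost verbatim, the membership argument from the proof of Theorem~\ref{thm:cmplx_ghom_prsv}, substituting the class of finite sets of frontier-guarded TGDs (and the characterization by isomorphic unions furnished by Theorem~\ref{thm:char_d2fgd}) for the class of finite sets of GDs (and the characterization by globally-homomorphic preimages). Since, by Theorem~\ref{thm:char_d2fgd}, the given finite set of TGDs is preserved under isomorphic unions exactly when it is equivalent to some finite set of frontier-guarded TGDs, the decision problem reduces to semi-deciding the existence of such an equivalent set.

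For the \textsc{RE}-membership of the original (arbitrary-structure) problem, I would fix an effective enumeration $\Sigma_0,\Sigma_1,\dots$ of all finite sets of frontier-guarded TGDs over the input signature. Because first-order validity is recursively enumerable, for each $i$ there is a semi-decision procedure $M_i$ that halts and accepts on input $\varphi$ precisely when $\vDash\varphi\leftrightarrow\bigwedge\Sigma_i$, i.e.\ when $\varphi$ is equivalent to $\Sigma_i$. Dovetailing these, running the $(i-j)$-th stage of $M_j$ for all $0\le j\le i$ as $i$ increases and accepting as soon as some $M_j$ accepts, yields a machine that halts and accepts exactly the inputs equivalent to some finite set of frontier-guarded TGDs, hence, by Theorem~\ref{thm:char_d2fgd}, exactly those preserved under isomorphic unions. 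This gives the desired \textsc{RE}-membership.

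For the \textsc{coRE}-membership of the finite version, I would instead semi-decide the complement, namely non-preservation under isomorphic unions in the finite. A witness to non-preservation is a finite model $\mathcal{A}$ of $\varphi$ together with a finite set $\mathbb{G}$ of guarded sets of $\mathcal{A}$ such that $\bigcup_{X\in\mathbb{G}}\mathcal{A}_X\not\models\varphi$. The enumeration here ranges over all finite structures $\mathcal{A}$ and all finite families $\mathbb{G}$; for each candidate one simply evaluates $\varphi$ on $\mathcal{A}$ and on $\bigcup_{X\in\mathbb{G}}\mathcal{A}_X$, halting when a counterexample is found. Thus non-preservation in the finite is \textsc{RE}, and the problem itself is \textsc{coRE}.

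The only subtle point to verify, which I would flag as the main (if modest) obstacle, is that each candidate witness for the finite version is genuinely a finite, effectively checkable object: when $\mathcal{A}$ is finite and $\mathbb{G}$ is a finite family of guarded sets, each $\mathcal{A}_X$ is an isomorphic copy of the finite structure $\mathcal{A}$, so $\bigcup_{X\in\mathbb{G}}\mathcal{A}_X$ has finite domain and $\varphi$ can be decided on it. Once this is observed, both dovetailing searches terminate exactly on the intended inputs, exactly as in Theorem~\ref{thm:cmplx_ghom_prsv}, completing the argument.
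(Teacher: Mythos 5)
Your proposal is correct and follows essentially the same route as the paper, whose proof of this proposition simply defers to the membership argument of Theorem~\ref{thm:cmplx_ghom_prsv}: dovetailing over an enumeration of finite sets of frontier-guarded TGDs (via Theorem~\ref{thm:char_d2fgd}) for the \textsc{RE} bound, and semi-deciding the complement by exhaustive search over finite counterexamples for the \textsc{coRE} bound. Your explicit check that each candidate witness $\bigl(\mathcal{A},\mathbb{G}\bigr)$ yields a finite, effectively evaluable structure $\bigcup_{X\in\mathbb{G}}\mathcal{A}_X$ is a worthwhile detail the paper leaves implicit.
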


\begin{proof}
Similar to the proof of membership in Theorem~\ref{thm:cmplx_ghom_prsv}.
\end{proof} 

To obtain the \textsc{2ExpTime}-completeness for the rewritability of $\mathsf{FGTGD}$ to $\mathsf{GTGD}$ [over finite structures], by Theorem~\ref{thm:char_tgd2gd}, it suffices to prove the following theorem:

\begin{theorem}\label{prop:ld_union_prsv_up}
Determining whether
a given finite set of frontier-guarded TGDs is preserved under disjoint unions [in the finite] is \textsc{2ExpTime}-complete.
\end{theorem}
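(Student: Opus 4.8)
The plan is to prove both bounds through the characterization, relying on the fact (recorded in the note after Theorem~\ref{thm:char_tgd2gd}) that for a finite set of frontier-guarded TGDs, equivalence to a guarded set is exactly preservation under disjoint unions. For the upper bound I would encode ``not preserved under disjoint unions'' as satisfiability of a polynomial-size sentence in the guarded negation fragment (GNFO), whose satisfiability is in \textsc{2ExpTime}~\cite{BaranyCS2015}; since \textsc{2ExpTime} is closed under complement, deciding preservation (the unsatisfiability of that sentence) is also in \textsc{2ExpTime}. For the lower bound I would reduce from Boolean conjunctive query answering under guarded TGDs, which is \textsc{2ExpTime}-complete~\cite{CaliGL2012,BagetLMS11}.

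For membership, fix the signature $\tau$ of the input set $\Sigma$. I would work over an extended signature that, for each $R\in\tau$, carries three fresh copies $R^{(1)},R^{(2)},R^{(u)}$ together with unary ``domain'' predicates $D_1,D_2$; a single GNFO structure then codes a pair $(\mathcal{A},\mathcal{B})$ and its union $\mathcal{A}\cup\mathcal{B}$, with $R^{(1)},R^{(2)}$ carrying the two interpretations and $R^{(u)}$ the union. After relativizing quantifiers to $D_1$, $D_2$ and $D_1\vee D_2$ respectively and pinning the constants into $D_1\cap D_2$, the target sentence $\Phi_\Sigma$ is the conjunction of: (i) $\mathcal{A}\models\Sigma$, (ii) $\mathcal{B}\models\Sigma$, (iii) the overlap-agreement $\mathcal{A}|_{X}=\mathcal{B}|_{X}$ on $X=D_1\cap D_2$, and (iv) $\bigvee_{\sigma\in\Sigma}\neg\sigma^{(u)}$ asserting that some rule is violated in the union. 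The crucial point is that every negation appearing here can be fitted into the guarded-negation discipline: for each frontier-guarded $\sigma$ with guard atom $\alpha$, the subformula $\neg\exists\vec y\,\psi$ has exactly the frontier variables free, all of which occur in $\alpha$, so $\alpha\wedge\neg\exists\vec y\,\psi$ is legal GNFO; for the union copy this requires the auxiliary relation $R^{(u)}$ (axiomatized by $R^{(u)}=R^{(1)}\cup R^{(2)}$, itself expressible by guarded negations) so that the guard stays a single atom rather than a disjunction $R^{(1)}\vee R^{(2)}$. Then $\Phi_\Sigma$ is satisfiable iff $\Sigma$ fails disjoint-union preservation; invoking the finite model property of GNFO~\cite{BaranyCS2015} shows a counterexample exists iff a finite one does, so the same sentence settles both the arbitrary and the finite (bracketed) versions.

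For hardness, given guarded TGDs $\Sigma$ over $\tau$ and a Boolean query $q=\exists\vec y\,\gamma(\vec y)$, I would build the frontier-guarded set $\Sigma' = \Sigma \cup \{\, \gamma(\vec y)\to G,\ \ G\wedge E(x,y)\to C(y),\ \ E(x,y)\wedge E(y,z)\to C(y)\,\}$ with fresh symbols $G$ (nullary), $E$ (binary) and $C$ (unary), and claim $\Sigma'$ is preserved under disjoint unions iff $\Sigma\models q$. If $\Sigma\models q$, then any two models of $\Sigma'$ satisfy $q$, forcing $G$ true on both sides and in the union; the two guarded rules are preserved by Proposition~\ref{prop:gd_duprv}, and with $G$ true the guarded ``repair'' rule makes $E(x,y)$ entail $C(y)$ in the union, so the non-guarded rule $E(x,y)\wedge E(y,z)\to C(y)$ (the splitter of Example~\ref{exm:gdu_prsv_cntexm}) is satisfied there, giving preservation. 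Conversely, if $\Sigma\not\models q$, take a countermodel $\mathcal{M}\models\Sigma$, $\mathcal{M}\not\models q$, keep $G$ false, and glue a split path $E(a,b)$ into $\mathcal{A}$ and $E(b,c)$ into $\mathcal{B}$ over a shared middle element $b$ with $C$ empty; each part satisfies $\Sigma'$ while the union violates the splitter, so preservation fails. Using finite controllability of guarded TGDs (finite and unrestricted query entailment coincide), the finite countermodel gives the finite version simultaneously, so the reduction establishes \textsc{2ExpTime}-hardness of both the arbitrary and the finite problem.

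The main obstacle, and where I would spend the most care, is the membership encoding: making all four components of $\Phi_\Sigma$ respect GNFO's requirement that each negated subformula be covered by a single relational guard. This is exactly where frontier-guardedness is used (it supplies the guard for $\neg\exists\vec y\,\psi$), and it forces the introduction and correct axiomatization of the union-relations $R^{(u)}$ so that the guard of a violated union-rule is a genuine atom; verifying that $\Phi_\Sigma$ is equisatisfiable with the existence of a counterexample, and that the translation is polynomial, is the delicate bookkeeping. On the hardness side the only real subtlety is confirming the split construction meets both glueing conditions (i)--(ii) of disjoint-union preservation and that finite controllability legitimately transfers the finite case; these I expect to be routine once the gadget above is in place.
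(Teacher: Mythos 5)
Your proposal is correct and follows essentially the same route as the paper's proof: membership via a polynomial translation of non-preservation into GNFO satisfiability (using exactly the point that frontier-guardedness keeps every negation guarded, plus the \textsc{2ExpTime} bound and finite model property of GNFO from \cite{BaranyCS2015}), and hardness by reducing Boolean query answering under guarded TGDs to preservation via a query-triggered flag, guarded repair rules, and the splitter $E(x,y)\wedge E(y,z)\rightarrow C(y)$ of Example~\ref{exm:gdu_prsv_cntexm}. The differences are minor: the paper codes the pair and its union with relativization predicates $D_A,D_B$ over a single copy of the signature rather than your tagged copies $R^{(1)},R^{(2)},R^{(u)}$, it uses two repair rules where your single one indeed suffices (the middle vertex of any two-path is always an edge target), and it explicitly pads the countermodel to at least three elements by joining disjoint copies so that the glued edges $E(a,b)$ and $E(b,c)$ create no two-path inside either part --- a degenerate case (one-element countermodels) that your sketch should also handle.
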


\begin{proof}
(Hardness) The general idea is to reduce the boolean query answering problem for guarded TGDs to the problem mentioned in the theorem. Let $\Sigma$ be a finite set of guarded TGDs, and $q$ a boolean atomic query. It had been proved by~\citeauthor{CaliGK13}~\shortcite{CaliGK13} that determining whether $\Sigma\vDash q$ [in the finite] is $\textsc{2ExpTime}$-hard. Suppose $q$ is of the form $\exists\vec{x}\alpha(\vec{x})$. Let 
$\Sigma'$ be the set that consists of the follows rules:
\begin{eqnarray}
\label{eqn:tgd_du_com_1}\alpha(\vec{x})&\rightarrow& Q,\\
\label{eqn:tgd_du_com_2}Q\wedge E(x,y) &\rightarrow& C(y),\\
\label{eqn:tgd_du_com_3}Q\wedge E(y,z) &\rightarrow& C(y),\\
\label{eqn:tgd_du_com_4}E(x,y) \wedge E(y,z) &\rightarrow& C(y),
\end{eqnarray}
where $Q,E$ and $C$ are fresh relation symbols without occurrence in $\Sigma$ and $q$.
Let $\Sigma_0$ denote $\Sigma\cup\Sigma'$. Now we show   

\medskip
{\noindent\em Claim. $\Sigma\vDash_{[\mathrm{fin}]} q$ iff $\Sigma_0$ is preserved under disjoint unions [in the finite].} 

\medskip
To yield the hardness, it remains to prove the above claim:

\smallskip
(Only-if) Assume $\Sigma\vDash q$. Let $\mathcal{A}$ and $\mathcal{B}$ be arbitrary [finite] models of $\Sigma_0$ such that $\mathcal{A}|_X=\mathcal{B}|_X$ whenever $X=A\cap B$. It is easy to see that $Q$ must be interpreted as $true$ in both $\mathcal{A}$ and $\mathcal{B}$. Suppose $\mathcal{C}\in\{\mathcal{A},\mathcal{B}\}$. Let 
\begin{align*}
E_{-}^\mathcal{C}&=\{a:\exists b\text{ s.t. }(a,b)\in E^\mathcal{C}\},\\
E_{+}^\mathcal{C}&=\{b:\exists a\text{ s.t. }(a,b)\in E^\mathcal{C}\}.
\end{align*}
Then, we have $$E_{-}^\mathcal{C}\subseteq C^\mathcal{C}\text{ and }E_{+}^\mathcal{C}\subseteq C^\mathcal{C}.$$
We need to prove that $\mathcal{A}\cup\mathcal{B}$ is a model of $\Sigma_0$. Since $\Sigma$ is a finite set of guarded TGDs, by Proposition~\ref{prop:gd_duprv}, $\Sigma$ is preserved under disjoint unions, which implies that $\mathcal{A}\cup\mathcal{B}$ is a model of $\Sigma$. By definition, it is clear that $Q^{\mathcal{A}\cup\mathcal{B}}=true$, which implies that $\mathcal{A}\cup\mathcal{B}$ satisfies TGD~(\ref{eqn:tgd_du_com_1}). Since 
$$E_{-}^{\mathcal{A}\cup\mathcal{B}}= E_{-}^{\mathcal{A}}\cup E_{-}^{\mathcal{B}}\subseteq C^\mathcal{A}\cup C^\mathcal{B}=C^{\mathcal{A}\cup\mathcal{B}},$$
we know that $\mathcal{A}\cup\mathcal{B}$ is a model of TGD~(\ref{eqn:tgd_du_com_2}). By a similar argument, we have that $\mathcal{A}\cup\mathcal{B}$ is a model of TGD~(\ref{eqn:tgd_du_com_3}).
Suppose $a,b,c\in A\cup B$ are constants such that $(a,b)\in E^{\mathcal{A}\cup\mathcal{B}}$ and $(b,c)\in E^{\mathcal{A}\cup\mathcal{B}}$. To prove $\mathcal{A}\cup\mathcal{B}$ is a model of $\Sigma_0$, it remains to show $b\in C^{\mathcal{A}\cup\mathcal{B}}$. From $(a,b)\in E^{\mathcal{A}\cup\mathcal{B}}$, by definition we obtain either $(a,b)\in E^\mathcal{A}$ or $(a,b)\in E^\mathcal{B}$. Thus, either $b\in E_+^\mathcal{A}$ or $b\in E_+^\mathcal{B}$ is true, i.e., $b\in E_+^{\mathcal{A}\cup\mathcal{B}}$. Consequently, we have $b\in C^{\mathcal{A}\cup\mathcal{B}}$ as desired. By the arbitrariness of $\mathcal{A}$ and $\mathcal{B}$, we know that $\Sigma_0$ is preserved under disjoint unions [in the finite].

\smallskip
(If) Suppose $\Sigma\vDash_{\mathrm{fin}} q$ is false. We need to show that $\Sigma_0$ is not preserved under disjoint unions [in the finite]. Let $\mathcal{A}$ be a [finite] model of $\Sigma$ such that $\mathcal{A}\not\models q$. W.l.o.g., assume $A$ contains at least 3 distinct constants $a$, $b$ and $c$. (Otherwise, as $\Sigma$ and $\neg q$ are preserved under disjoint unions, by joining $\mathcal{A}$ with its disjoint copies, one can always obtain the desired model.)  Let $\mathcal{A}_1$ and $\mathcal{A}_2$ be isomorphic copies of $\mathcal{A}$ with $A_1\cap A_2=\{b\}$. Let $a'$ (resp., $c'$) be the copy of $a$ (resp., $c$) in $\mathcal{A}_1$ (resp., $\mathcal{A}_2$). 
Let $\tau$ be the signature of $\Sigma_0$. Let $\mathcal{A}^+_1$ and $\mathcal{A}^+_2$ be the $\tau$-expansions of $\mathcal{A}_1$ and $\mathcal{A}_2$, respectively, defined by 
\begin{enumerate}
\item $Q^{\mathcal{A}^+_1}=Q^{\mathcal{A}^+_2}=false$;
\item $E^{\mathcal{A}^+_1}=\{(a',b)\}$, $E^{\mathcal{A}^+_2}=\{(b,c')\}$, $C^{\mathcal{A}^+_1}=C^{\mathcal{A}^+_2}=\emptyset$.
\end{enumerate}
Clearly, both $\mathcal{A}^+_1$ and $\mathcal{A}^+_2$ are models of $\Sigma_0$, but $\mathcal{A}^+_1\cup\mathcal{A}^+_2$ is not. $\Sigma_0$ is thus not preserved under unions [in the finite].

\smallskip
(Membership) To prove this, we first give a brief review of the guarded negation fragment, a decidable fragment of first-order logic recently proposed in~\cite{BaranyCS2015}. Given a signature $\tau$, formulas in {\em guarded negation fragment} of first-order logic (GNFO) over $\tau$ are defined as follows:
\begin{equation}
\varphi::=\alpha\mid\exists x\varphi\mid\varphi\wedge\varphi\mid\varphi\vee\varphi\mid\alpha(\vec{x},\vec{y})\wedge\neg\varphi(\vec{x})
\end{equation}
where $\alpha$ is an atomic formula over $\tau$ and $\vec{x},\vec{y}$ are tuples of variables. Note that constant symbols are allowed to occur in formulas. By Proposition 7.3 of~\cite{BaranyCS2015}, there is a polynomial-time transformation to eliminate constant symbols, which preserves logical equivalence. Thus, the complexity of satisfiability of the language here remains the same as the constant-free GNFO, which is \textsc{2ExpTime}-complete; in addition, the finite model property remains true.

To yield the desired $\textsc{2ExpTime}$-membership, it suffices to prove that determining whether a given GNFO-sentences is preserved under disjoint unions [in the finite] is in $\textsc{2ExpTime}$. The general idea is as follows: To determine whether a given GNFO-sentence $\varphi$ is preserved under disjoint unions [in the finite], we construct a GNFO-sentence $\tilde{\varphi}$ such that $\varphi$ is preserved under disjoint unions [in the finite] iff $\tilde{\varphi}$ is unsatisfiable [over finite structures]. Since the [finite] satisfiability problem of GNFO is \textsc{2ExpTime}-complete~\cite{BaranyCS2015}, we then have the desired membership. 

To implement this idea, we first introduce some fresh unary relation symbols $D_{A}$ and $D_{B}$. Let $\tau$ be the signature of $\varphi$ and let $\tau_0=\tau\cup\{D_{A},D_{B}\}$. Let $\vartheta$ denote $\forall x(D_A(x)\vee D_B(x))$ which can be equivalently rewritten as the GNFO-sentence 
$$
\exists y(y=y\wedge \neg\exists x(x=x\wedge\neg (D_{A}(x)\vee D_{B}(x)))).
$$
Fix $\mathcal{C}$ as a $\tau_0$-structure. Set $A=D_A^\mathcal{C}$ and $B=D_B^\mathcal{C}$. Let $\mathcal{A}$ (resp., $\mathcal{B}$) be the restriction of $\mathcal{C}|_A$ (resp., $\mathcal{C}|_B$) to $\upsilon$. It is not difficult to see that (i) if $\mathcal{C}$ is a model of $\vartheta$ then $\mathcal{A}\cup\mathcal{B}$ is the restriction of $\mathcal{C}$ to $\tau$, and (ii) $\mathcal{A}|_{X}=\mathcal{B}|_{X}$ where $X=A\cap B$.

Take $\circ\in\{A,B\}$. Define a transformation $\pi_\circ$ as follows: 
\begin{equation*}
\pi_\circ(\varphi)\!=\!\left\{
\begin{aligned}
&\alpha(\vec{x})\wedge D_\circ(\vec{x})\quad&\text{if }&\varphi=\alpha(\vec{x}), \alpha\text{ is atomic}\!\!\!\!\!\!&,\\
&\neg \pi_\circ(\psi)\!\!\!&\text{if }&\varphi=\neg\psi\!\!\!\!\!\!\!&,\\
&\exists x\,\pi_\circ(\psi)\!\!\!&\text{if }&\varphi=\exists x\,\psi&,\\
&\pi_\circ(\psi)\wedge \pi_\circ(\chi)\!\!\!&\text{if }&\varphi=\psi\wedge\psi\!\!\!\!\!\!\!&,\\
&\pi_\circ(\psi)\vee \pi_\circ(\chi)\!\!\!&\text{if }&\varphi=\psi\vee\chi&,
\end{aligned}
\right.
\end{equation*}
where $D_\circ(\vec{x})$ denotes the formula $D_\circ(x_1)\wedge\cdots\wedge D_\circ(x_k)$ if $\vec{x}=x_1\cdots x_k$.
It is not difficult to show that $\mathcal{C}\models \pi_\circ(\varphi)$ iff $\varphi$ is satisfied in the substructure of $\mathcal{C}$ which is induced by the set $D_\circ^{\mathcal{C}}$. Let $
\tilde{\varphi}$ denote the sentence $$\vartheta\wedge\pi_A(\varphi)\wedge\pi_B(\varphi)\wedge\neg\varphi.$$ Clearly, $\tilde{\varphi}$ is equivalent to a GNFO-sentence. It is also not difficult to check that $\tilde{\varphi}$ satisfies the desired property, i.e., $\varphi$ is preserved under disjoint unions [in the finite] iff $\tilde{\varphi}$ is unsatisfiable [over finite structures], which completes the proof. 
\end{proof}

To prove the \textsc{PSpace}-hardnes for the rewritability of $\mathsf{GTGD}$ to $\mathsf{LTGD}$ [over finite structures], by Theorems~\ref{thm:char_tgd2ld_finite} and \ref{thm:char_tgd2ld}, it suffices to prove the following proposition:

\begin{proposition}\label{prop:ld_union_prsv_up}
Determining whether a given finite set of GTGDs is preserved under unions [in the finite] is \textsc{PSpace}-hard.
\end{proposition}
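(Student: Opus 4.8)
The plan is to reduce the Boolean (atomic) conjunctive query answering problem for linear TGDs---which is known to be \textsc{PSpace}-hard in combined complexity, over both arbitrary and finite structures~\cite{CaliGL2012}---to the union-preservation problem for sets of guarded TGDs. By Theorems~\ref{thm:char_tgd2ld_finite} and~\ref{thm:char_tgd2ld} this immediately yields the \textsc{PSpace}-hardness of rewritability from $\mathsf{GTGD}$ to $\mathsf{LTGD}$ [over finite structures]. The starting observation is that linear TGDs are in particular guarded, so the instance fed into the union-preservation problem will be a genuine finite set of guarded TGDs, and the whole reduction is clearly computable in polynomial time.

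Given a finite set $\Sigma$ of linear TGDs and a Boolean atomic query $q=\exists\vec{x}\,\alpha(\vec{x})$, I would introduce a fresh nullary symbol $Q$ and fresh unary symbols $P,S,R$ not occurring in $\Sigma$ or $q$, and let $\Sigma_0$ be the union of $\Sigma$ with the guarded TGDs
\begin{gather*}
\alpha(\vec{x})\to Q,\qquad
Q\wedge P(x)\to R(x),\\
Q\wedge S(x)\to R(x),\qquad
P(x)\wedge S(x)\to R(x).
\end{gather*}
Each of these is guarded, since in each case a single body atom (namely $\alpha$, or $P(x)$, or $S(x)$) already contains all universal variables, so $\Sigma_0\in\mathsf{GTGD}$. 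The last rule is intended as a ``bomb'' in the style of the separating example preceding Theorem~\ref{thm:char_tgd2ld}: placing $P$ and $S$ on a common element while keeping $R$ empty breaks it under unions, and the switch $Q$ disarms it through the two middle rules.

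The core claim to establish is $\Sigma\vDash_{[\mathrm{fin}]}q$ iff $\Sigma_0$ is preserved under unions [in the finite]. For the forward direction I would take any two [finite] models $\mathcal{A},\mathcal{B}$ of $\Sigma_0$; since each satisfies $\Sigma$ and $\Sigma\vDash_{[\mathrm{fin}]}q$, the query holds in each, so the first rule forces $Q$ to be true in both, whence $P^{\mathcal{C}}\cup S^{\mathcal{C}}\subseteq R^{\mathcal{C}}$ for $\mathcal{C}\in\{\mathcal{A},\mathcal{B}\}$ by the two $Q$-guarded rules. Taking unions componentwise then gives $P^{\mathcal{A}\cup\mathcal{B}}\cup S^{\mathcal{A}\cup\mathcal{B}}\subseteq R^{\mathcal{A}\cup\mathcal{B}}$, which makes all four gadget rules hold in $\mathcal{A}\cup\mathcal{B}$; and since every linear TGD is preserved under unions (its single body atom is witnessed in one of the two structures, whose head witness survives in the union), $\Sigma$ holds in $\mathcal{A}\cup\mathcal{B}$ as well, so $\mathcal{A}\cup\mathcal{B}\models\Sigma_0$. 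For the converse I would take a [finite] model $\mathcal{A}_0$ of $\Sigma$ with $\mathcal{A}_0\not\models q$, fix an element $a$ of its domain, and form two expansions $\mathcal{A}_1,\mathcal{A}_2$ of $\mathcal{A}_0$ that both set $Q$ false and $R=\emptyset$, but put $P=\{a\},S=\emptyset$ in $\mathcal{A}_1$ and $P=\emptyset,S=\{a\}$ in $\mathcal{A}_2$. Each satisfies $\Sigma_0$ vacuously on the gadget, since $\alpha$ is never matched (as $\mathcal{A}_0\not\models q$) and no body of a $Q$-guarded rule or of the bomb is satisfied, yet in $\mathcal{A}_1\cup\mathcal{A}_2$ both $P(a)$ and $S(a)$ hold while $R(a)$ fails, violating the last rule; hence $\Sigma_0$ is not preserved under unions [in the finite].

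The main obstacle I anticipate is bookkeeping rather than conceptual, and it is twofold: first, checking that every gadget rule, and especially the bomb $P(x)\wedge S(x)\to R(x)$, is genuinely \emph{guarded} (not merely frontier-guarded), so that $\Sigma_0$ really lies in $\mathsf{GTGD}$; and second, verifying that the forward direction closes, for which it is essential both that $\Sigma$ consists of linear TGDs (so it is itself preserved under unions) and that entailment of $q$ lifts $Q$ to true in each \emph{individual} model, not merely in the union. The finite case runs identically, relying on the \textsc{PSpace}-hardness of linear-TGD query answering in the finite and on the fact that the union of two finite models is finite.
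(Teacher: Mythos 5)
Your proposal is correct and follows essentially the same route as the paper's own proof: the same reduction from Boolean atomic query answering for linear TGDs, the same four-rule gadget (your $P,S,R,Q$ are the paper's $R,S,T,Q$ up to renaming), the same ``$Q$ disarms the bomb'' argument in the forward direction, and the same pair of expansions with $Q$ false witnessing non-preservation in the converse. The only cosmetic differences are the citation used for \textsc{PSpace}-hardness of linear-TGD query answering and that you justify union-preservation of $\Sigma$ by a direct one-line argument rather than by invoking Theorem~\ref{thm:char_tgd2ld}; both are fine.
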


\begin{proof}
We prove this by reducing the boolean query answering problem of linear TGDs to our desired problem. Given a finite set $\Sigma$ of linear TGDs and a boolean atomic query $q$ of the form $\exists\vec{x}\alpha(\vec{x})$, it is implicit by Theorem 4.4 of~\cite{GottlobP03} that deciding whether $\Sigma\vDash_{[\mathrm{fin}]} q$ is $\textsc{PSpace}$-hard. Let 
$\Sigma'$ be a set consisting of following TGDs:
\begin{eqnarray}
\alpha(\vec{x})&\rightarrow& Q,\\
Q\wedge R(x) &\rightarrow& T(x),\\
Q\wedge S(x) &\rightarrow& T(x),\\
R(x) \wedge S(x) &\rightarrow& T(x),
\end{eqnarray}
where $R,S,T$ and $Q$ are fresh relation symbols without occurrence in $\Sigma$ and $q$.
Let $\Sigma_0$ denote $\Sigma\cup\Sigma'$. To yield the desire \textsc{PSpace}-hardness, it suffices to prove the following property:  

\medskip
{\noindent\em Claim. $\Sigma\vDash_{[\mathrm{fin}]} q$ iff $\Sigma_0$ is preserved under unions [in the finite].} 

\medskip
(Only-if) Suppose $\Sigma\vDash_{[\mathrm{fin}]} q$. Let $\mathcal{A}$ and $\mathcal{B}$ be [finite] models of $\Sigma_0$. It is easy to see that $Q$ must be interpreted as $true$ in both $\mathcal{A}$ and $\mathcal{B}$. Consequently, for $\mathcal{C}\in\{\mathcal{A},\mathcal{B}\}$ it holds that $$R^\mathcal{C}\cup S^\mathcal{C}\subseteq T^\mathcal{C}.$$
From this we thus have 
$$R^{\mathcal{A}\cup\mathcal{B}}\cup S^{\mathcal{A}\cup\mathcal{B}}=R^\mathcal{A}\cup R^\mathcal{B}\cup S^\mathcal{A}\cup S^\mathcal{B}\subseteq T^\mathcal{A}\cup T^\mathcal{B}=T^{\mathcal{A}\cup\mathcal{B}}.$$
We thus conclude that $\mathcal{A}\cup\mathcal{B}$ is a model of $\Sigma'$. By Theorem~\ref{thm:char_tgd2ld}, we know that $\Sigma$ is preserved under unions, which implies that $\mathcal{A}\cup\mathcal{B}$ is a model of $\Sigma$. Hence, $\mathcal{A}\cup\mathcal{B}$ is also a model of $\Sigma_0$, which implies that $\Sigma_0$ is preserved under unions [in the finite].

\smallskip
(If) Suppose $\Sigma\vDash_{[\mathrm{fin}]} q$ is not true. It suffices to show that $\Sigma_0$ is not preserved under unions [in the finite]. Let $\mathcal{A}$ be a [finite] model of $\Sigma$ which does not satisfy $q$. Let $\tau$ be the signature of $\Sigma_0$. Let $\mathcal{A}_1$ and $\mathcal{A}_2$ be $\tau$-expansions of $\mathcal{A}$ which are defined by
\begin{enumerate}
\item $Q^{\mathcal{A}_1}=Q^{\mathcal{A}_2}=false$;
\item $R^{\mathcal{A}_1}=S^{\mathcal{A}_2}=\{a\}$, $R^{\mathcal{A}_2}=S^{\mathcal{A}_1}=T^{\mathcal{A}_1}=T^{\mathcal{A}_2}=\emptyset$, where $a$ is a constant in $A$.
\end{enumerate}
It is clear that both $\mathcal{A}_1$ and $\mathcal{A}_2$ are models of $\Sigma_0$, but $\mathcal{A}_1\cup\mathcal{A}_2$ is not. Thus, $\Sigma_q$ is not preserved under unions [in the finite].
\end{proof}

}

\end{document}